\theoremstyle{plain}
\newtheorem{lemma}{Lemma}
\theoremstyle{definition}
\theoremstyle{remark}
\newcommand{\mean}{\mathbb{E}}
\newcommand{\KL}[2]{{D_{KL}\left({#1}\,\middle\|\,{#2}\right)}}
\newcommand{\transpose}{\mathsf{T}}
\newcommand{\Normal}[1]{\mathcal{N}\left(#1\right)}
\newcommand{\DDPM}{{\scriptscriptstyle\mathrm{DDPM}}}
\renewcommand{\SS}{{\scriptscriptstyle\mathrm{SS}}}
\newcommand{\tDDPM}{_\theta^\DDPM}
\newcommand{\tSS}{_\theta^\SS}
\newcommand{\dual}{{\scriptscriptstyle\mathrm{Dual}}}
\definecolor{Blue}{HTML}{0071BC}
\definecolor{Red}{HTML}{AF3235}
\definecolor{Green}{HTML}{006400}
\definecolor{Purple}{HTML}{99479B}
\definecolor{Orange}{HTML}{F7921D}
\newcommand{\al}{\overline{\alpha}}
\newcommand{\omal}{1-\overline{\alpha}}
\title{Star-Shaped Denoising Diffusion Probabilistic Models}
\author{%
  Andrey Okhotin\thanks{Equal contribution} \\
  HSE University, MSU University\\
  Moscow, Russia \\
  \texttt{andrey.okhotin@gmail.com} \\
  \And
  Dmitry Molchanov$^*$ \\
  BAYESG \\
  Budva, Montenegro \\
  \texttt{dmolch111@gmail.com} \\
  \AND
  Vladimir Arkhipkin \\
  Sber AI \\
  Moscow, Russia \\
  \texttt{arkhipkin.v98@gmail.com} \\
  \And
  Grigory Bartosh \\
  AMLab, Informatics Institute \\
University of Amsterdam\\
Amsterdam, Netherlands \\
  \texttt{g.bartosh@uva.nl} \\
  \And
  Viktor Ohanesian \\
  Independent Researcher \\
  \texttt{v.v.oganesyan@gmail.com} \\
  \And
  Aibek Alanov \\
  AIRI, HSE University \\
  Moscow, Russia \\
  \texttt{alanov.aibek@gmail.com} \\
  \And
  Dmitry Vetrov \\
  Constructor University \\
  Bremen, Germany \\
  \texttt{dvetrov@constructor.university} \\
}
\begin{document}

\maketitle

\begin{abstract}
Denoising Diffusion Probabilistic Models (DDPMs) provide the foundation for the recent breakthroughs in generative modeling.
Their Markovian structure makes it difficult to define DDPMs with distributions other than Gaussian or discrete.
In this paper, we introduce Star-Shaped DDPM (SS-DDPM).
Its \emph{star-shaped diffusion process} allows us to bypass the need to define the transition probabilities or compute posteriors.
We establish duality between star-shaped and specific Markovian diffusions for the exponential family of distributions and derive efficient algorithms for training and sampling from SS-DDPMs.
In the case of Gaussian distributions, SS-DDPM is equivalent to DDPM.
However, SS-DDPMs provide a simple recipe for designing diffusion models with distributions such as Beta, von Mises--Fisher, Dirichlet, Wishart and others, which can be especially useful when data lies on a constrained manifold.
We evaluate the model in different settings and find it competitive even on image data, where Beta SS-DDPM achieves results comparable to a Gaussian DDPM.
Our implementation is available at \url{https://github.com/andrey-okhotin/star-shaped}.
\end{abstract}

\section{Introduction}
Deep generative models have shown outstanding sample quality in a wide variety of modalities.
Generative Adversarial Networks (GANs) \citep{goodfellow2014generative, karras2021alias}, autoregressive models \citep{ramesh2021zero}, Variational Autoencoders \citep{kingma2013auto, rezende2014stochastic}, Normalizing Flows \citep{grathwohl2018ffjord, chen2019residual} and energy-based models \citep{xiao2020vaebm} show impressive abilities to synthesize objects.
However, GANs are not robust to the choice of architecture and optimization method \citep{arjovsky2017wasserstein, gulrajani2017improved, karras2019style, brock2018large}, and they often fail to cover modes in data distribution \citep{zhao2018bias, thanh2020catastrophic}.
Likelihood-based models avoid mode collapse but may overestimate the probability in low-density regions \citep{zhang2021understanding}.

Recently, diffusion probabilistic models \citep{sohl2015deep, ho2020denoising} have received a lot of attention.
These models generate samples using a trained Markov process that starts with white noise and iteratively removes noise from the sample.
Recent works have shown that diffusion models can generate samples comparable in quality or even better than GANs \citep{song2020score, dhariwal2021diffusion}, while they do not suffer from mode collapse by design, and also they have a log-likelihood comparable to autoregressive models \citep{kingma2021variational}.
Moreover, diffusion models show these results in various modalities such as images \citep{saharia2021image}, sound \citep{popov2021grad, liu2022diffgan} and shapes \citep{luo2021diffusion, zhou20213d}.

The main principle of diffusion models is to destroy information during the forward process and then restore it during the reverse process.
In conventional diffusion models like denoising diffusion probabilistic models (DDPM) destruction of information occurs through the injection of Gaussian noise, which is reasonable for some types of data, such as images.
However, for data distributed on manifolds, bounded volumes, or with other features, the injection of Gaussian noise can be unnatural, breaking the data structure.
Unfortunately, it is not clear how to replace the noise distribution within traditional diffusion models. The problem is that we have to maintain a connection between the distributions defining the Markov noising process that gradually destroys information and its marginal distributions. While some papers explore other distributions, such as delta functions \citep{bansal2022cold} or Gamma distribution \citep{nachmani2021denoising}, they provide ad hoc solutions for special cases that are not easily generalized.

In this paper, we present Star-Shaped Denoising Diffusion Probabilistic Models (SS-DDPM), a new approach that generalizes Gaussian DDPM to an exponential family of noise distributions.
In SS-DDPM, one only needs to define marginal distributions at each diffusion step (see Figure~\ref{fig:teaser}).
We provide a derivation of SS-DDPM, design efficient sampling and training algorithms, and show its equivalence to DDPM \citep{ho2020denoising} in the case of Gaussian noise.
Then, we outline a number of practical considerations that aid in training and applying SS-DDPMs.
In Section~\ref{sec:experiments}, we demonstrate the ability of SS-DDPM to work with distributions like von Mises--Fisher, Dirichlet and Wishart.
Finally, we evaluate SS-DDPM on image and text generation.
Categorical SS-DDPM matches the performance of Multinomial Text Diffusion \citep{hoogeboom2021argmax} on the \texttt{text8} dataset, while our Beta diffusion model achieves results, comparable to a Gaussian DDPM on CIFAR-10.

\begin{figure}[t]
    \centering
    \begin{minipage}{0.49\textwidth}
        \centering
        \includegraphics[width=\columnwidth]{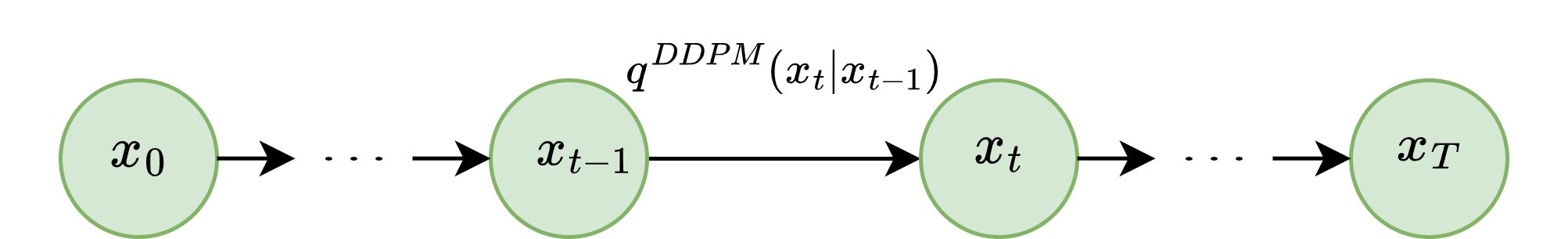}
    \end{minipage}
    \begin{minipage}{0.49\textwidth}
        \centering
        \includegraphics[width=0.8\columnwidth]{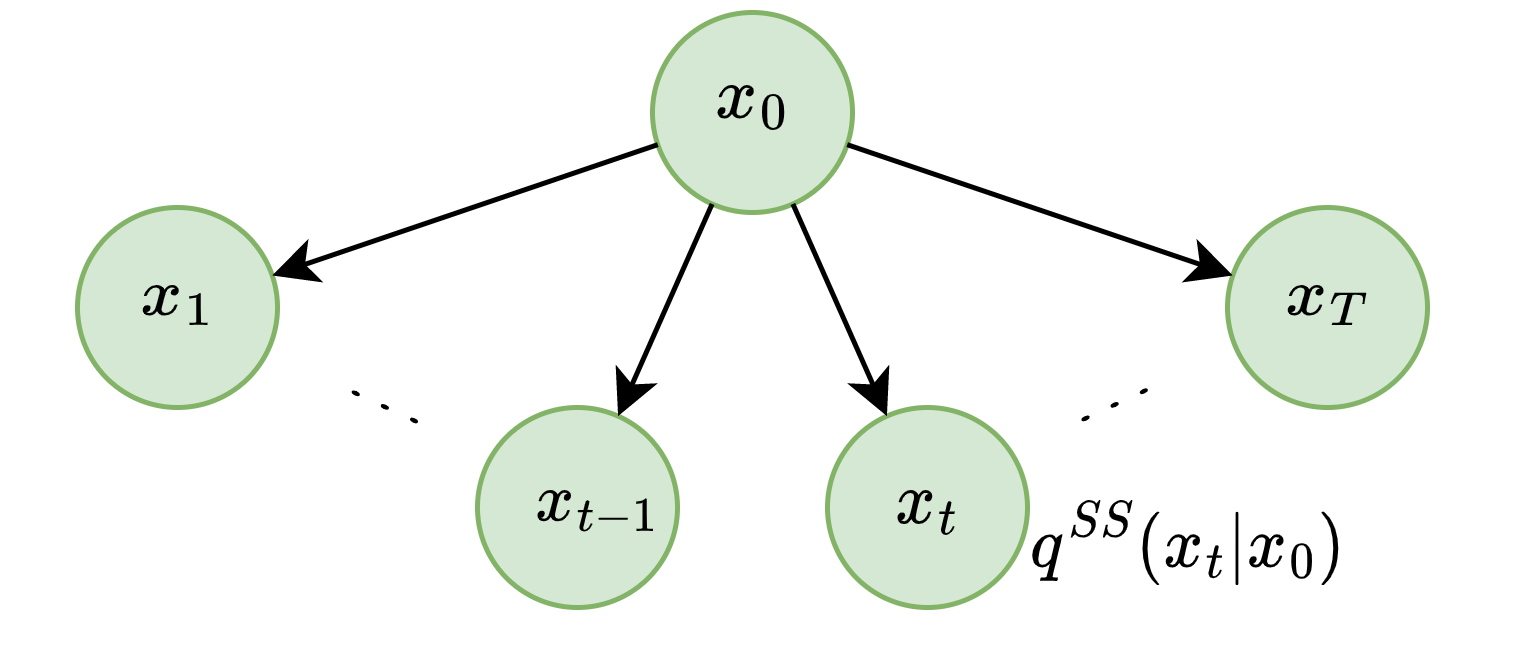}
    \end{minipage}
    \caption{Markovian forward processes of DDPM (left) and the star-shaped forward process of SS-DDPM (right).}
    \label{fig:teaser}
\end{figure}
  
\section{Theory}
\subsection{DDPMs}
We start with a brief introduction of DDPMs.
The Gaussian DDPM \citep{ho2020denoising} is defined as a forward (diffusion) process $q^\DDPM(x_{0:T})$ and a corresponding reverse (denoising) process $p\tDDPM(x_{0:T})$.
The forward process is defined as a Markov chain with Gaussian conditionals:
\begin{align}
    q^\DDPM(x_{0:T})&= q(x_0){\prod_{t=1}^T} q^\DDPM(x_t|x_{t-1}),\label{eq:ddpm-1}\\
    q^\DDPM(x_t|x_{t-1})&= \Normal{x_t; \sqrt{1-\beta_t}x_{t-1}, \beta_t\mathbf{I}},\label{eq:ddpm-2}
\end{align}
where $q(x_0)$ is the data distribution.
Parameters $\beta_t$ are typically chosen in advance and fixed, defining the noise schedule of the diffusion process. The noise schedule is chosen in such a way that the final $x_T$ no longer depends on $x_0$ and follows a standard Gaussian distribution $q^\DDPM(x_T)=\Normal{x_T;0,\mathbf{I}}$.

The reverse process $p\tDDPM(x_{0:T})$ follows a similar structure and constitutes the generative part of the model:
\begin{align}
p\tDDPM(x_{0:T})&= q^\DDPM(x_T){\prod_{t=1}^T}p\tDDPM(x_{t-1}|x_t),\\
p\tDDPM(x_{t-1}|x_t)&=  \Normal{x_{t-1}; \mu_\theta(x_t, t), \Sigma_\theta(x_t, t)}.
\end{align}
The forward process $q^\DDPM(x_{0:T})$ of DDPM is typically fixed, and all the parameters of the model are contained in the generative part of the model $p\tDDPM(x_{0:T})$.
These parameters are tuned to maximize the variational lower bound (VLB) on the likelihood of the training data:
\begin{gather}
\mathcal{L}^\DDPM(\theta)=\mean_{q^\DDPM}\left[\log p\tDDPM(x_0|x_1)-{\sum_{t=2}^{T}}\KL{q^\DDPM(x_{t-1}|x_t, x_0)}{p\tDDPM(x_{t-1}|x_t)}\right]
\label{eq::KL}\\
\mathcal{L}^\DDPM(\theta)\to\max_{\theta}
\end{gather}

One of the main challenges in defining DDPMs is the computation of the posterior $q^\DDPM(x_{t-1}|x_t, x_0)$.
Specifically, the transition probabilities $q^\DDPM(x_t|x_{t-1})$ have to be defined in such a way that this posterior is tractable.
Specific DDPM-like models are available for Gaussian \citep{ho2020denoising}, Categorical \citep{hoogeboom2021argmax} and Gamma \citep{kawar2022denoising} distributions.
Defining such models remains challenging in more general cases.

\subsection{Star-Shaped DDPMs}
\begin{figure*}[t]
    \centering
    \begin{subfigure}[t]{0.47\columnwidth}
        \centering
        \includegraphics[width=\columnwidth]{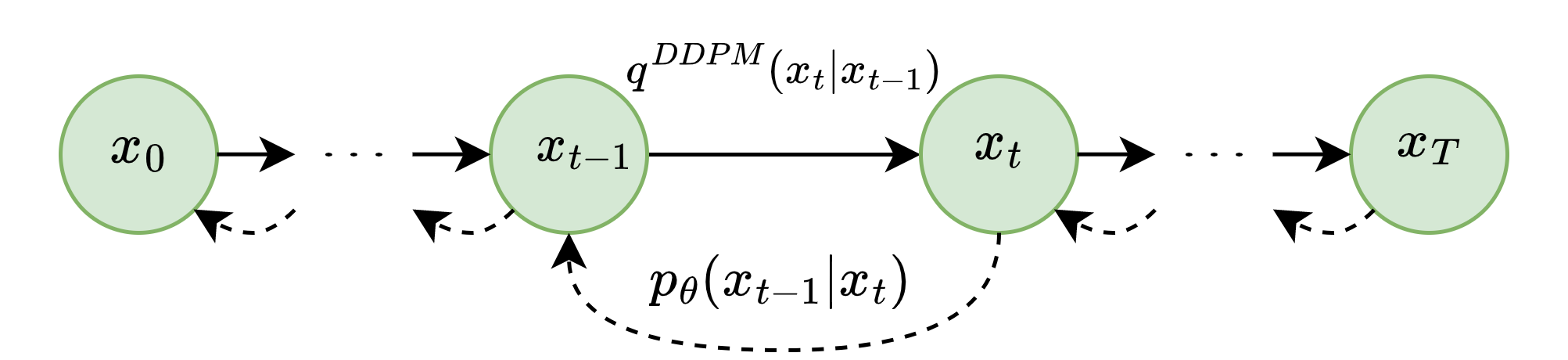}
        \caption{Denoising Diffusion Probabilistic Models}
    \end{subfigure}%
    \hfill
    \begin{subfigure}[t]{0.52\columnwidth}
        \centering
        \includegraphics[width=\columnwidth]{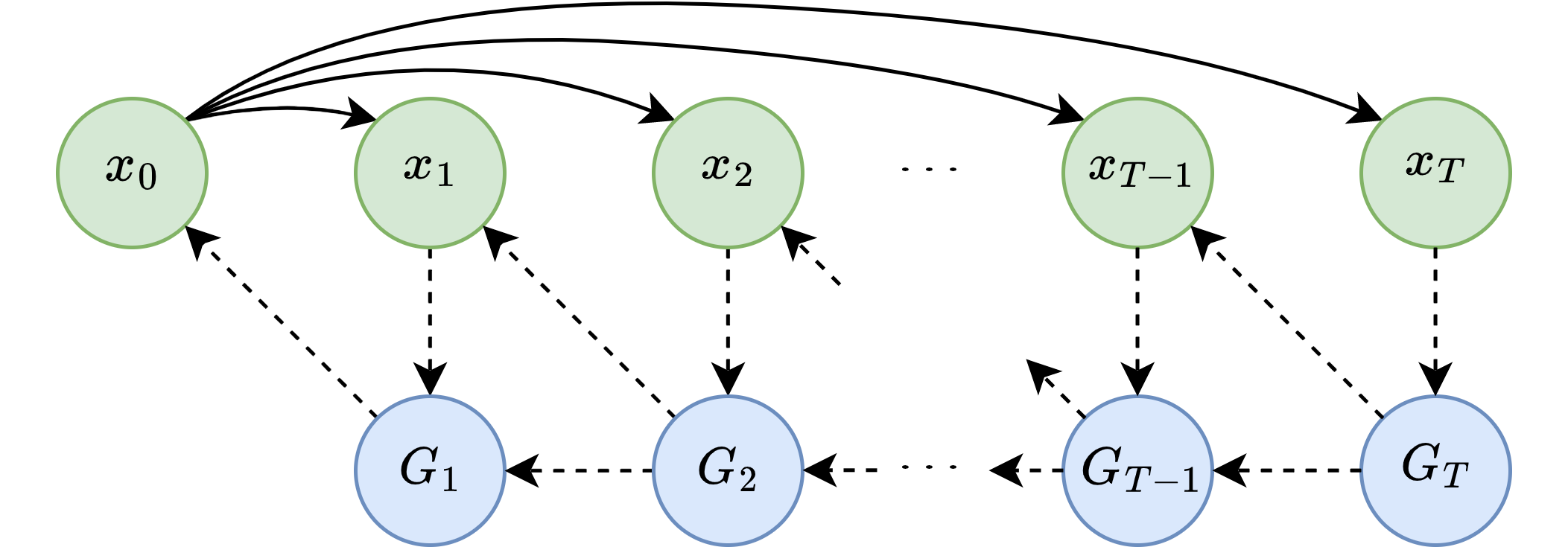}
        \caption{Star-Shaped Denoising Diffusion Probabilistic Models}
    \end{subfigure}
    \caption{Model structure of DDPM and SS-DDPM.}
    \label{fig:model-structure}
\end{figure*}

As previously discussed, extending the DDPMs to other distributions poses significant challenges.
In light of these difficulties, we propose to construct a model that only relies on marginal distributions $q(x_t|x_0)$ in its definition and the derivation of the loss function.

We define star-shaped diffusion as a \emph{non-Markovian} forward process $q^\SS(x_{0:T})$ that has the following structure:
\begin{equation}
\label{eq:ss-forward}
    q^\SS(x_{0:T})= q(x_0){\prod_{t=1}^T} q^\SS(x_t|x_0),
\end{equation}
where $q(x_0)$ is the data distribution.
We note that in contrast to DDPM all noisy variables $x_t$ are conditionally independent given $x_0$ instead of constituting a Markov chain. This structure of the forward process allows us to utilize other noise distributions, which we discuss in more detail later.

\subsection{Defining the reverse model}
In DDPMs the true reverse model $q^\DDPM(x_{0:T})$ has a Markovian structure \citep{ho2020denoising}, allowing for an efficient sequential generation algorithm:
\begin{align}
   q^\DDPM(x_{0:T})&=q^\DDPM(x_T){\prod_{t=1}^{T}} q^\DDPM(x_{t-1}|x_t).
\end{align}
For the star-shaped diffusion, however, the Markovian assumption breaks:
\begin{align}
q^\SS(x_{0:T})&=q^\SS(x_T){\prod_{t=1}^{T}} q^\SS(x_{t-1}|x_{t:T}).
\end{align}
Consequently, we now need to approximate the true reverse process by a parametric model which is conditioned on the whole tail $x_{t:T}$.
\begin{equation}
    p\tSS(x_{0:T})= p\tSS(x_T){\prod_{t=1}^T} p\tSS(x_{t-1}|x_{t:T}).
\end{equation}

\begin{wrapfigure}{r}{0.5\textwidth}
\centering
\vspace{-1.3em}
\includegraphics[width=\linewidth]{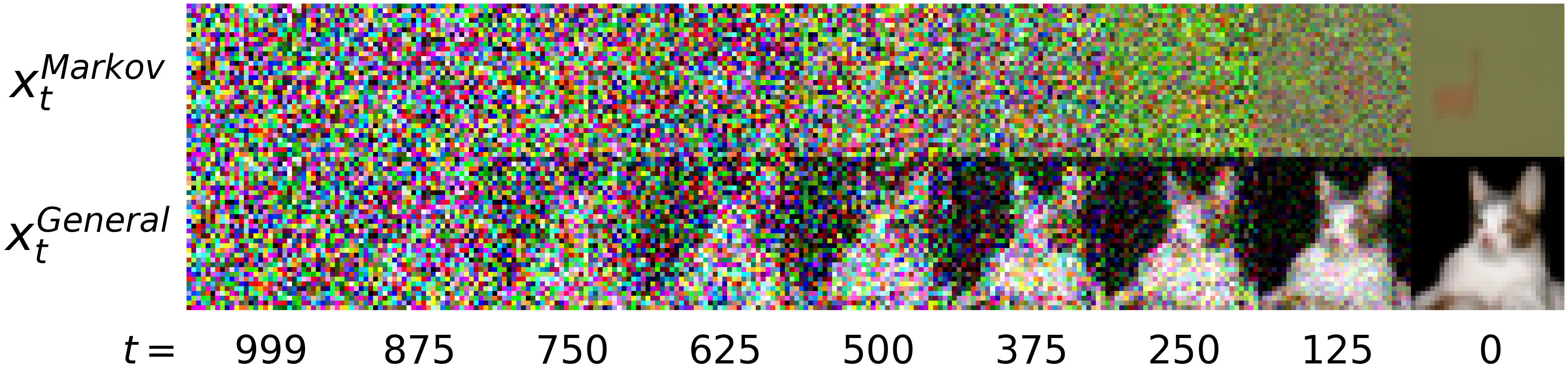}
\caption{
    Markov reverse process fails to recover realistic images from star-shaped diffusion, while a general reverse process produces realistic images. The top row is equivalent to DDIM at $\sigma^2_t=1-\alpha_{t-1}$.
    A similar effect was also observed by \citet{bansal2022cold}.
    }
\label{fig:markov-vs-general}
\vspace{-2em}
\end{wrapfigure}
It is crucial to use the whole tail $x_{t:T}$ rather than just one variable $x_t$ when predicting $x_{t-1}$ in a star-shaped model.
As we show in Appendix~\ref{app:true-reverse}, if we try to approximate the true reverse process with a Markov model, we introduce a substantial irreducible gap into the variational lower bound.
Such a sampling procedure fails to generate realistic samples, as can be seen in Figure~\ref{fig:markov-vs-general}.

Intuitively, in DDPMs the information about $x_0$ that is contained in $x_{t+1}$ is nested into the information about $x_0$ that is contained in $x_t$.
That is why knowing $x_t$ allows us to discard $x_{t+1}$.
In star-shaped diffusion, however, all variables contain independent pieces of information about $x_0$ and should all be taken into account when making predictions.

We can write down the variational lower bound as follows:
\begin{equation}
    \mathcal{L}^\SS(\theta)=\mathbb{E}_{q^\SS}\left[\log p_\theta(x_0|x_{1:T})-{\sum_{t=2}^T}\KL{q^\SS(x_{t-1}|x_0)}{p\tSS(x_{t-1}|x_{t:T}}\right]
    \label{eq:ss-elbo}
\end{equation}

With this VLB, we only need the marginal distributions $q(x_{t-1}|x_0)$ to define and train the model, which allows us to use a wider variety of noising distributions.
Since conditioning the predictive model $p_\theta(x_{t-1}|x_{t:T})$ on the whole tail $x_{t:T}$ is typically impractical, we propose a more efficient way to implement the reverse process next.

\subsection{Efficient tail conditioning}
Instead of using the full tail $x_{t:T}$, we would like to define some statistic $G_t=\mathcal{G}_t(x_{t:T})$ that would extract all information about $x_0$ from the tail $x_{t:T}$.
Formally speaking, we call $G_t$ a \emph{sufficient tail statistic} if the following equality holds:
\begin{equation}
    q^\SS(x_{t-1}|x_{t:T})=q^\SS(x_{t-1}|G_t).
\end{equation}
One way to define $G_t$ is to concatenate all the variables $x_{t:T}$ into a single vector.
This, however, is impractical, as its dimension would grow with the size of the tail $T-t+1$.

The Pitman–Koopman–Darmois \citep{pitman_1936} theorem (PKD) states that exponential families admit a sufficient statistic with constant dimensionality.
It also states that no other distribution admits one: if such a statistic were to exist, the distribution has to be a member of the exponential family.
Inspired by the PKD, we turn to the exponential family of distributions.
In the case of star-shaped diffusion, we cannot apply the PKD directly, as it was formulated for i.i.d. samples and our samples are not identically distributed.
However, we can still define a sufficient tail statistic $G_t$ for a specific subset of the exponential family, which we call an \emph{exponential family with linear parameterization}:

\begin{restatable}{theorem}{memory}
\label{theorem:memory}
Assume the forward process of a star-shaped model takes the following form:
\begin{align}
    \!\!q^\SS(x_t|x_0)&=h_t(x_t)\exp\left\{\eta_t(x_0)^\transpose\mathcal{T}(x_t)-\Omega_t(x_0)\right\},\label{eq:exponential-family}\\
    \eta_t(x_0)&=A_tf(x_0)+b_t.\label{eq:linear-parameterization}
\end{align}
Let $G_t$ be a tail statistic, defined as follows:
\begin{align}
    G_t=\mathcal{G}_t(x_{t:T})&={\sum_{s=t}^T} A_s^\transpose \mathcal{T}(x_s).
    \label{eq:tail-statistic}
\end{align}
Then, $G_t$ is a sufficient tail statistic:
\begin{align}
    q^\SS(x_{t-1}|x_{t:T})&=q^\SS(x_{t-1}|G_t).
\end{align}
\end{restatable}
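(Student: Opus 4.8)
The plan is to show by a direct Bayes computation that $q^\SS(x_{t-1}\mid x_{t:T})$, regarded as a function of the tail, depends on $x_{t:T}$ only through $G_t$, and then to invoke the elementary fact that a conditional distribution that is constant on the fibers of a statistic coincides with the conditional distribution given that statistic. This is essentially a Fisher--Neyman factorization argument adapted to the non-i.i.d.\ setting of star-shaped diffusion.

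First I would use the star-shaped factorization \eqref{eq:ss-forward} to write the joint density of $(x_0,x_{t-1},x_{t:T})$ as $q(x_0)\,q^\SS(x_{t-1}\mid x_0)\prod_{s=t}^{T}q^\SS(x_s\mid x_0)$ (every other $x_i$ integrates out, since each $q^\SS(x_i\mid x_0)$ is a normalized density), so that
\[
    q^\SS(x_{t-1}\mid x_{t:T})
    = \frac{\int q(x_0)\,q^\SS(x_{t-1}\mid x_0)\prod_{s=t}^{T}q^\SS(x_s\mid x_0)\,dx_0}
           {\int q(x_0)\prod_{s=t}^{T}q^\SS(x_s\mid x_0)\,dx_0}.
\]
Next I would substitute the exponential-family form \eqref{eq:exponential-family} for each $q^\SS(x_s\mid x_0)$ with $s=t,\dots,T$. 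The base measures $\prod_{s=t}^{T}h_s(x_s)$ do not depend on $x_0$, so they factor out of both integrals and cancel. In the remaining factor $\exp\{\sum_{s=t}^{T}\eta_s(x_0)^\transpose\mathcal{T}(x_s)-\sum_{s=t}^{T}\Omega_s(x_0)\}$ the linear parameterization \eqref{eq:linear-parameterization} does the essential work:
\[
    \sum_{s=t}^{T}\eta_s(x_0)^\transpose\mathcal{T}(x_s)
    = f(x_0)^\transpose\!\Bigl(\sum_{s=t}^{T}A_s^\transpose\mathcal{T}(x_s)\Bigr)+\sum_{s=t}^{T}b_s^\transpose\mathcal{T}(x_s)
    = f(x_0)^\transpose G_t + c(x_{t:T}),
\]
where $c(x_{t:T})=\sum_{s=t}^{T}b_s^\transpose\mathcal{T}(x_s)$ is independent of $x_0$ and hence cancels between numerator and denominator as well. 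What remains is an explicit expression for $q^\SS(x_{t-1}\mid x_{t:T})$ in which the tail enters only through $G_t$; denote it $\Phi(x_{t-1},G_t)$.

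Finally, since $G_t=\mathcal{G}_t(x_{t:T})$ is a deterministic function of the tail, I would close the argument with the law of total probability along the fiber $\{x_{t:T}:\mathcal{G}_t(x_{t:T})=G_t\}$:
\[
    q^\SS(x_{t-1}\mid G_t)
    = \int q^\SS(x_{t-1}\mid x_{t:T})\,q^\SS(x_{t:T}\mid G_t)\,dx_{t:T}
    = \int \Phi(x_{t-1},G_t)\,q^\SS(x_{t:T}\mid G_t)\,dx_{t:T}
    = \Phi(x_{t-1},G_t),
\]
which is exactly $q^\SS(x_{t-1}\mid x_{t:T})$. The computation itself is routine bookkeeping; the step that requires the most care is this last one — justifying that conditioning on $G_t$ and averaging over its fiber returns the original conditional, which relies on $\Phi(x_{t-1},\cdot)$ being constant along that fiber together with the implicit regularity assumption that all the normalizing integrals are finite so the posteriors are well defined. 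It is also worth checking the boundary case $t=T$, where the ``tail'' is the single variable $x_T$ and $G_T=A_T^\transpose\mathcal{T}(x_T)$; the derivation above applies verbatim.
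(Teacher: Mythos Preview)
Your proposal is correct and follows essentially the same route as the paper: a Bayes/Fisher--Neyman factorization showing that, after cancelling the base measures $\prod_s h_s(x_s)$ and the $x_0$-independent term $\sum_s b_s^\transpose\mathcal{T}(x_s)$, the conditional depends on the tail only through $G_t$. The only cosmetic differences are that the paper first isolates $q^\SS(x_0\mid x_{t:T})$ and shows it equals $q^\SS(x_0\mid G_t)$ (then integrates against $q^\SS(x_{t-1}\mid x_0)$), and it packages your fiber-averaging step as a standalone lemma (``if $p(y\mid x)=f(y,h(x))$ then $p(y\mid x)=p(y\mid z)\big|_{z=h(x)}$''); your law-of-total-probability argument is exactly the content of that lemma.
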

Here definition~\eqref{eq:exponential-family} is the standard definition of the exponential family, where $h_t(x_t)$ is \emph{the base measure}, $\eta_t(x_0)$ is the vector of \emph{natural parameters} with corresponding \emph{sufficient statistics} $\mathcal{T}(x_t)$, and $\Omega_t(x_0)$ is \emph{the log-partition function}.
The key assumption added is the \emph{linear parameterization} of the natural parameters~\eqref{eq:linear-parameterization}.
We provide the proof in Appendix~\ref{app:proof-1}.
When $A_t$ is scalar, we denote it as $a_t$ instead.

For the most part, the premise of Theorem~\ref{theorem:memory} restricts the parameterization of the distributions rather than the family of the distributions involved.
As we discuss in Appendix~\ref{app:different-families}, we found it easy to come up with linear parameterization for a wide range of distributions in the exponential family.
For example, we can obtain a linear parameterization for the Beta distribution $q(x_t|x_0)=\mathrm{Beta}(x_t; \alpha_t, \beta_t)$ using $x_0$ as the mode of the distribution and introducing a new concentration parameter $\nu_t$:
\begin{align}
    \alpha_t&=1+\nu_tx_0,\\
    \beta_t&=1+\nu_t(1-x_0).
\end{align}
In this case, $\eta_t(x_0)=\nu_tx_0$, $\mathcal{T}(x_t)=\log\frac{x_t}{1-x_t}$, and we can use equation~\eqref{eq:tail-statistic} to define the sufficient tail statistic $G_t$.
We provide more examples in Appendix~\ref{app:different-families}.
We also provide an implementation-ready reference sheet for a wide range of distributions in the exponential family in Table~\ref{tab:different-families}.

We suspect that, just like in PKD, this trick is only possible for a subset of the exponential family.
In the general case, the dimensionality of the sufficient tail statistic $G_t$ would have to grow with the size of the tail $x_{t:T}$.
It is still possible to apply SS-DDPM in this case, however, crafting the (now only approximately) sufficient statistic $G_t$ would require more careful consideration and we leave it for future work.

\subsection{Final model definition}
To maximize the VLB~\eqref{eq:ss-elbo}, each step of the reverse process should approximate the true reverse distribution:
\begin{equation}
\begin{split}
    p\tSS(x_{t-1}|x_{t:T})\approx q^\SS(x_{t-1}|x_{t:T})
    =\int q^\SS(x_{t-1}|x_0)q^\SS(x_0|x_{t:T})dx_0.
\end{split}
\end{equation}
Similarly to DDPM \citep{ho2020denoising}, we choose to approximate $q^\SS(x_0|x_{t:T})$ with a delta function centered at the prediction of some model $x_\theta(\mathcal{G}_t(x_{t:T}), t)$.
This results in the following definition of the reverse process of SS-DDPM:
\begin{equation}
    p\tSS(x_{t-1}|x_{t:T})=\left.q^\SS(x_{t-1}|x_0)\right|_{x_0=x_{\theta}(\mathcal{G}_t(x_{t:T}), t)}.
\end{equation}

The distribution $p\tSS(x_0|x_{1:T})$ can be fixed to some small-variance distribution $p^\SS_\theta(x_0|\hat{x}_0)$ centered at the final prediction $\hat{x}_0=x_\theta(\mathcal{G}_1(x_{1:T}), 1)$, similar to the dequantization term, commonly used in DDPM.
If this distribution has no trainable parameters, the corresponding term can be removed from the training objective.
This dequantization distribution would then only be used for log-likelihood estimation and, optionally, for sampling.

Together with the forward process~\eqref{eq:ss-forward} and the VLB objective~\eqref{eq:ss-elbo}, this concludes the general definition of the SS-DDPM model.
The model structure is illustrated in Figure~\ref{fig:model-structure}.
The corresponding training and sampling algorithms are provided in Algorithms~\ref{alg:training}~and~\ref{alg:sampling}.

\begin{figure*}
\begin{minipage}[t]{0.55\textwidth}
\begin{algorithm}[H]
   \caption{SS-DDPM training}
   \label{alg:training}
\begin{algorithmic}
   \REPEAT
      \STATE $x_0\sim q(x_0)$
      \STATE $t\sim \mathrm{Uniform}({1, {\dots}, T})$
      \STATE $x_{t:T}\sim q^\SS(x_{t:T}|x_0)$
      \STATE $G_t =  \sum_{s=t}^TA_s^\transpose \mathcal{T}(x_s)$
      \STATE Move along $\nabla_{\theta}\mathrm{KL}(q^\SS(x_{t-1}| x_0)\|p_\theta^\SS(x_{t-1}| G_t))$
   \UNTIL{ Convergence }
\end{algorithmic}
\end{algorithm}
\end{minipage}
\hfill
\begin{minipage}[t]{0.4\textwidth}
\begin{algorithm}[H]
   \caption{SS-DDPM sampling}
   \label{alg:sampling}
\begin{algorithmic}
   \STATE $x_T\sim q^\SS(x_T)$
   \STATE $G_T= A_T^\transpose \mathcal{T}(x_T)$
   \FOR{$t=T$ to $2$}
      \STATE $\tilde{x}_0 = x_\theta(G_{t}, t)$
      \STATE $x_{t-1}\sim \left.q^\SS(x_{t-1}|x_0)\right|_{x_0=\tilde{x}_0}$
      \STATE $G_{t-1}= G_{t} + A_{t-1}^\transpose \mathcal{T}(x_{t-1})$
   \ENDFOR
   \STATE $x_0\sim p\tSS(x_0|G_1)$
\end{algorithmic}
\end{algorithm}
\end{minipage}
\end{figure*}

The resulting model is similar to DDPM in spirit.
We follow the same principles when designing the forward process: starting from a low-variance distribution, centered at $x_0$ at $t=1$, we gradually increase the entropy of the distribution $q^\SS(x_t|x_0)$ until there is no information shared between $x_0$ and $x_t$ at $t=T$.

We provide concrete definitions for Beta, Gamma, Dirichlet, von Mises, von Mises--Fisher, Wishart, Gaussian and Categorical distributions in Appendix~\ref{app:different-families}.

\subsection{Duality between star-shaped and Markovian diffusion}
While the variables $x_{1:T}$ follow a star-shaped diffusion process, the corresponding tail statistics $G_{1:T}$ form a Markov chain:
\begin{equation}
    G_t={\sum_{s=t}^T} A_s^\transpose \mathcal{T}(x_s)=G_{t+1}+A_t^\transpose \mathcal{T}(x_t),
\end{equation}
since $x_t$ is conditionally independent from $G_{t+2:T}$ given $G_{t+1}$ (see Appendix~\ref{sec:general-case} for details).
Moreover, we can rewrite the probabilistic model in terms of $G_t$ and see that variables $(x_0, G_{1:T})$ form a (not necessarily Gaussian) DDPM.

In the case of Gaussian distributions, this duality makes SS-DDPM and DDPM equivalent.
This equivalence can be shown explicitly:

\begin{restatable}{theorem}{equivalence}
\label{theorem:equivalence}
Let $\al_t^\DDPM$ define the noising schedule for a DDPM model~(\ref{eq:ddpm-1}--\ref{eq:ddpm-2}) via $\beta_t=(\al^\DDPM_{t-1}-\al^\DDPM_t)/\al^\DDPM_{t-1}$.
Let $q^\SS(x_{0:T})$ be a Gaussian SS-DDPM forward process with the following noising schedule and sufficient tail statistic:
\begin{align}
q^\SS(x_t|x_0)&=
\Normal{x_t; \sqrt{\al_t^\SS}x_0, 1-\al_t^\SS},\\
\mathcal{G}_t(x_{t:T})&=
\frac{\omal^\DDPM_{t}}{\sqrt{\al^\DDPM_{t}}}\sum_{s=t}^T\frac{\sqrt{\al^\SS_s}x_s}{\omal^\SS_s},\text{ where}\\
\frac{\al^\SS_t}{1-\al^\SS_t}&=
\frac{\al^\DDPM_{t}}{1-\al^\DDPM_{t}}-\frac{\al^\DDPM_{t+1}}{1-\al^\DDPM_{t+1}}.
\label{eq:schedule-transform}
\end{align}
Then the tail statistic $G_t$ follows a Gaussian DDPM noising process $q^\DDPM(x_{0:T})|_{x_{1:T}=G_{1:T}}$ defined by the schedule $\al_t^\DDPM$.
Moreover, the corresponding reverse processes and VLB objectives are also equivalent.
\end{restatable}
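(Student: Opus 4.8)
\emph{Approach.} The plan is to build a single invertible linear change of variables $\Phi:x_{1:T}\mapsto G_{1:T}$ and show it carries the star-shaped forward process onto the Gaussian DDPM forward process with schedule $\al^\DDPM$ and the star-shaped reverse process onto the matching DDPM reverse process; the equivalence of the two VLBs is then automatic, since an ELBO (built from a KL term and an expected log-likelihood) is invariant under a reparameterization shared by $p$ and $q$. Write $a_t=\sqrt{\al_t^\SS}/(1-\al_t^\SS)$, so that in Theorem~\ref{theorem:memory} we have $\mathcal{T}(x_t)=x_t$, $A_t=a_t$, $b_t=0$ for the Gaussian marginal $q^\SS(x_t\cond x_0)=\Normal{x_t;\sqrt{\al_t^\SS}x_0,\,1-\al_t^\SS}$, and write $c_t=(1-\al_t^\DDPM)/\sqrt{\al_t^\DDPM}$ for the prefactor in $\mathcal{G}_t$, so $G_t=c_t\sum_{s=t}^T a_sx_s$. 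The recursion $G_t=c_ta_tx_t+(c_t/c_{t+1})G_{t+1}$ (with $G_{T+1}:=0$) exhibits $\Phi$ as an invertible block-triangular affine map, with inverse $x_T=(c_Ta_T)^{-1}G_T$ and $x_{t-1}=(c_{t-1}a_{t-1})^{-1}\bigl(G_{t-1}-(c_{t-1}/c_t)G_t\bigr)$; its diagonal blocks are nonsingular for any admissible schedule ($0<\al_T^\DDPM<\dots<\al_1^\DDPM$). Using the convention $\al_{T+1}^\DDPM:=0$, the schedule transform~\eqref{eq:schedule-transform} gives $\al_T^\SS=\al_T^\DDPM$, hence $c_Ta_T=1$ and $G_T=x_T$.

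\emph{Forward process.} Conditioning on $x_0$, the $x_{1:T}$ are independent Gaussians, so $G_{1:T}\cond x_0$ is jointly Gaussian and determined by its first two moments. The one computational ingredient is the telescoping identity $\sum_{s=t}^{T}\al_s^\SS/(1-\al_s^\SS)=\al_t^\DDPM/(1-\al_t^\DDPM)$, immediate from~\eqref{eq:schedule-transform} and $\al_{T+1}^\DDPM:=0$. Applying it to the mean gives $\mean[G_t\cond x_0]=c_t\bigl(\al_t^\DDPM/(1-\al_t^\DDPM)\bigr)x_0=\sqrt{\al_t^\DDPM}\,x_0$; applying it to the (co)variance gives, for $s\le t$, $\mathrm{Cov}[G_s,G_t\cond x_0]=c_sc_t\sum_{r=t}^{T}\al_r^\SS/(1-\al_r^\SS)=(1-\al_s^\DDPM)\sqrt{\al_t^\DDPM/\al_s^\DDPM}$, in particular $\mathrm{Var}[G_t\cond x_0]=1-\al_t^\DDPM$. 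These are exactly the moments of $q^\DDPM(x_{1:T}\cond x_0)$ for the schedule $\al^\DDPM$, and since both processes share the data marginal $q(x_0)$ we obtain $q^\SS(x_0,G_{1:T})=q^\DDPM(x_{0:T})|_{x_{1:T}=G_{1:T}}$, which proves the first claim.

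\emph{Reverse process.} By definition $p\tSS(x_{t-1}\cond x_{t:T})=\Normal{x_{t-1};\sqrt{\al_{t-1}^\SS}\,\hat x_0,\,1-\al_{t-1}^\SS}$ with $\hat x_0=x_\theta(G_t,t)$. Pushing this through the affine map $x_{t-1}\mapsto G_{t-1}=c_{t-1}a_{t-1}x_{t-1}+(c_{t-1}/c_t)G_t$ (for fixed $G_t$) yields a Gaussian $p\tSS(G_{t-1}\cond G_t)$ with mean $c_{t-1}a_{t-1}\sqrt{\al_{t-1}^\SS}\,\hat x_0+(c_{t-1}/c_t)G_t$ and covariance $(c_{t-1}a_{t-1})^2(1-\al_{t-1}^\SS)\mathbf I$. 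I would then simplify these three coefficients via~\eqref{eq:schedule-transform}: writing $u=\al_{t-1}^\DDPM$, $v=\al_t^\DDPM$ and expanding $\al_{t-1}^\SS/(1-\al_{t-1}^\SS)=u/(1-u)-v/(1-v)$ together with $\beta_t=(u-v)/u$ gives $c_{t-1}a_{t-1}\sqrt{\al_{t-1}^\SS}=(u-v)/(\sqrt u(1-v))$, $c_{t-1}/c_t=\sqrt v(1-u)/(\sqrt u(1-v))$, and $(c_{t-1}a_{t-1})^2(1-\al_{t-1}^\SS)=(1-u)(u-v)/(u(1-v))$, i.e.\ exactly the $x_0$-coefficient, the $x_t$-coefficient and the variance $\bw_t$ of the standard DDPM posterior $q^\DDPM(x_{t-1}\cond x_t,x_0)$ \citep{ho2020denoising}. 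Hence $p\tSS(G_{t-1}\cond G_t)=p\tDDPM(x_{t-1}\cond x_t)|_{x_{1:T}=G_{1:T}}$ for the DDPM reverse model in the $x_0$-prediction parameterization with posterior variance $\bw_t$ and the same network $x_\theta$; the same map sends $q^\SS(x_{t-1}\cond x_0)$ to $q^\DDPM(x_{t-1}\cond x_t,x_0)|_{x=G}$. I expect this coefficient matching to be the main obstacle --- not conceptually hard, but it requires carefully composing the change of variables with the star-shaped marginal before invoking~\eqref{eq:schedule-transform}, and an index slip here would propagate. The endpoints match too: $G_T=x_T$ with $q^\SS(x_T\cond x_0)=q^\DDPM(x_T\cond x_0)$, so the DDPM priors coincide, and the small-variance decoders $p\tSS(x_0\cond G_1)$ and $p\tDDPM(x_0\cond x_1)$ coincide under the analogous choice.

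\emph{VLB.} Finally I would compare~\eqref{eq:ss-elbo} with~\eqref{eq::KL} term by term. The $t=1$ decoder terms agree by the endpoint correspondence. For $t=2,\dots,T$, the affine map $\psi:x_{t-1}\mapsto G_{t-1}$ at fixed $G_t=x_t$ is invertible and, by the previous paragraph, carries $q^\SS(x_{t-1}\cond x_0)$ to $q^\DDPM(x_{t-1}\cond x_t,x_0)$ and $p\tSS(x_{t-1}\cond x_{t:T})$ to $p\tDDPM(x_{t-1}\cond x_t)$ (in the relabelled variables $x_{1:T}=G_{1:T}$); since $\KL{P}{Q}=\KL{\psi_\ast P}{\psi_\ast Q}$ for invertible $\psi$, the step-$t$ KL term of $\mathcal{L}^\SS$ equals that of $\mathcal{L}^\DDPM$ pointwise, and the outer expectations coincide because $q^\SS(x_0,G_t)=q^\DDPM(x_0,G_t)$ by the forward part. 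Hence $\mathcal{L}^\SS(\theta)=\mathcal{L}^\DDPM(\theta)$. (More conceptually: $\Phi$ fixes $x_0$, and its Jacobian cancels between $p\tSS(x_{0:T})$ and $q^\SS(x_{1:T}\cond x_0)$, so the ELBO $\mean_{q^\SS}\log\bigl[p\tSS(x_{0:T})/q^\SS(x_{1:T}\cond x_0)\bigr]$ is literally unchanged by the relabelling $x_{1:T}\leftrightarrow G_{1:T}$, which is the DDPM ELBO.) One remaining bookkeeping point is to keep the raw tail statistic $\sum_s A_s^\transpose\mathcal{T}(x_s)$ of Algorithms~\ref{alg:training}--\ref{alg:sampling} distinct from the rescaled $\mathcal{G}_t$ of the theorem: they differ by the $t$-dependent factor $c_t$, and it is the rescaled one that makes the Markov chain on $G_t$ a standard (rather than variance-exploding) DDPM.
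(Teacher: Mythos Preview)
Your proposal is correct and structurally parallels the paper's proof: both verify the forward equivalence by computing Gaussian moments of $G_t$ via the telescoping identity $\sum_{s\ge t}\al_s^\SS/(1-\al_s^\SS)=\al_t^\DDPM/(1-\al_t^\DDPM)$, and both verify the reverse equivalence by matching the three coefficients of the affine update $G_{t-1}=c_{t-1}a_{t-1}x_{t-1}+(c_{t-1}/c_t)G_t$ against the DDPM posterior $q^\DDPM(x_{t-1}\cond x_t,x_0)$.

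Where you differ is in two places, and in both your argument is slightly cleaner. For the forward process, the paper only matches the marginals $q^\SS(G_t\cond x_0)=q^\DDPM(x_t\cond x_0)$ and then appeals informally to ``the same Markovian structure'' to conclude that the joints coincide; your direct computation of the full covariance $\mathrm{Cov}[G_s,G_t\cond x_0]=(1-\al_s^\DDPM)\sqrt{\al_t^\DDPM/\al_s^\DDPM}$ establishes joint equality without that extra step. For the VLB, the paper expands both KL divergences algebraically and shows they simplify to the same expression $\tfrac12\bigl(\al_{t-1}^\DDPM/(1-\al_{t-1}^\DDPM)-\al_t^\DDPM/(1-\al_t^\DDPM)\bigr)(x_0-x_\theta)^2$; your route via invariance of KL under the invertible affine map $\psi$ (and, more globally, the Jacobian cancellation in $\mean_{q^\SS}\log[p\tSS/q^\SS]$ under $\Phi$) bypasses that computation entirely. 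The paper's explicit formula has the side benefit of displaying the effective loss weight, but your argument is shorter and makes transparent why equivalence must hold once the forward and reverse pushforwards have been identified.
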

We show this equivalence in Appendix~\ref{app:gaussian-equivalence}.
We make use of this connection when choosing the noising schedule for other distributions.

This equivalence means that SS-DDPM is a direct generalization of Gaussian DDPM.
While admitting the Gaussian case, SS-DDPM can also be used to implicitly define a non-Gaussian DDPM in the space of sufficient tail statistics for a wide range of distributions.

\section{Practical considerations}
While the model is properly defined, there are several practical considerations that are important for the efficiency of star-shaped diffusion.

\begin{figure}[t]
\begin{minipage}{0.45\linewidth}
\centering
\includegraphics[width=\linewidth]{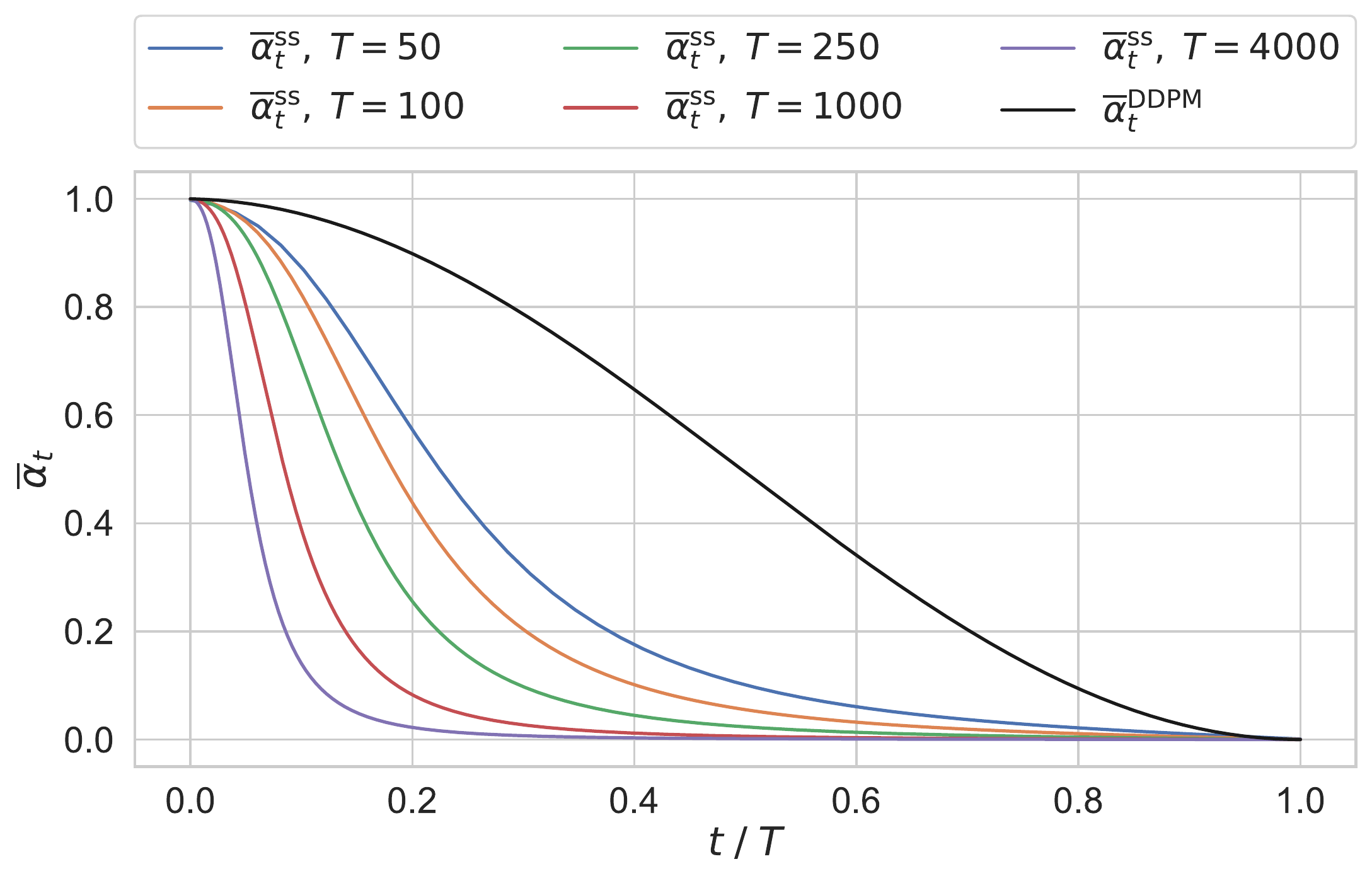}
\caption{
    The noising schedule $\al_t^\SS$ for Gaussian star-shaped diffusion, defined for different numbers of steps $T$ using eq.~\eqref{eq:schedule-transform}.
    All the corresponding equivalent DDPMs have the same cosine schedule $\al_t^\DDPM$.
}
\label{fig:ss-vs-ddpm-schedule}
\end{minipage}
\hfill
\begin{minipage}{0.5\linewidth}
\centering
\includegraphics[width=\linewidth]{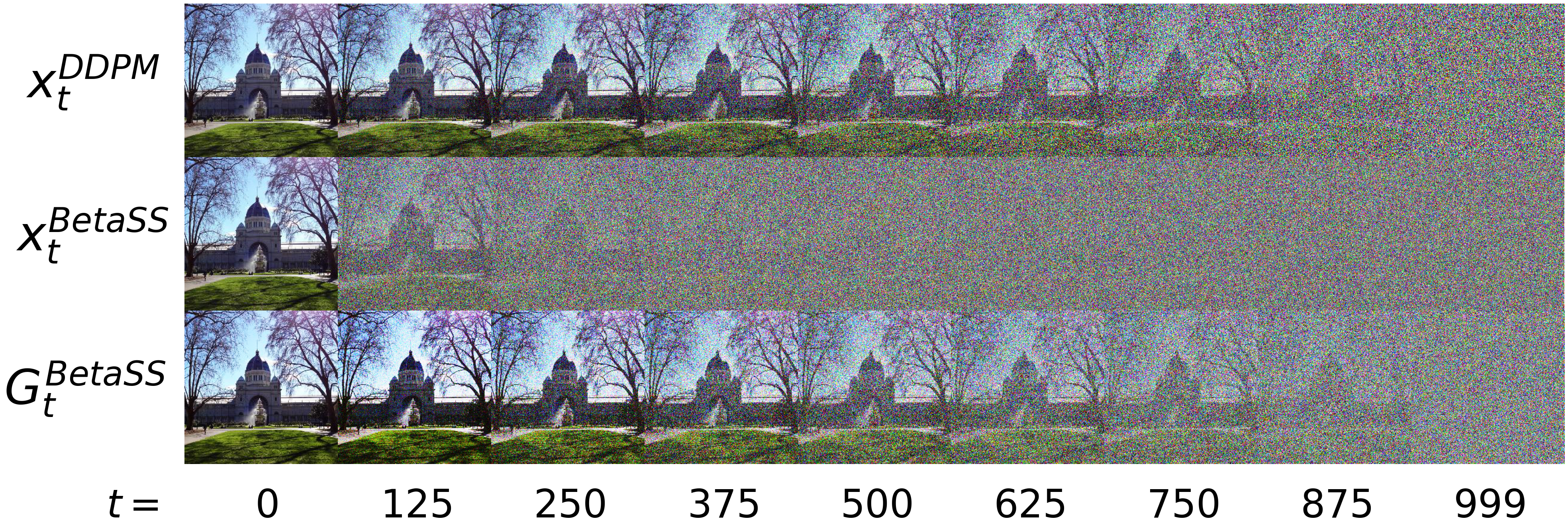}
\caption{
    Top: samples $x_t$ from a Gaussian DDPM forward process with a cosine noise schedule.
    Bottom: samples $G_t$ from a Beta SS-DDPM forward process with a noise schedule obtained by matching the mutual information.
    Middle: corresponding samples $x_t$ from that Beta SS-DDPM forward process.
    The tail statistics have the same level of noise as $x_t^{\mathrm{\scriptscriptstyle DDPM}}$, while the samples $x_t^{\mathrm{\scriptscriptstyle BetaSS}}$ are diffused much faster.
}
\label{fig:ss-vs-ddpm-schedule-viz}
\end{minipage}
\end{figure}

\paragraph{Choosing the right schedule} 
It is important to choose the right noising schedule for a SS-DDPM model.
It significantly depends on the number of diffusion steps $T$ and behaves differently given different noising schedules, typical to DDPMs.
This is illustrated in Figure~\ref{fig:ss-vs-ddpm-schedule}, where we show the noising schedules for Gaussian SS-DDPMs that are equivalent to DDPMs with the same cosine schedule.

Since the variables $G_t$ follow a DDPM-like process, we would like to somehow reuse those DDPM noising schedules that are already known to work well.
For Gaussian distributions, we can transform a DDPM noising schedule into the corresponding SS-DDPM noising schedule analytically by equating $I(x_0; G_t)=I(x_0; x_t^\DDPM)$.
In general case, we look for schedules that have approximately the same level of mutual information $I(x_0; G_t)$ as the corresponding mutual information $I(x_0; x_t^\DDPM)$ for a DDPM model for all timesteps $t$.
We estimate the mutual information using Kraskov \citep{kraskov2004estimating} and DSIVI \citep{molchanov2019doubly} estimators and build a look-up table to match the noising schedules.
This procedure is described in more detail in Appendix~\ref{app:noise-schedule}.
The resulting schedule for the Beta SS-DDPM is illustrated in Figure~\ref{fig:ss-vs-ddpm-schedule-viz}.
Note how with the right schedule appropriately normalized tail statistics $G_t$ look and function similarly to the samples $x_t$ from the corresponding Gaussian DDPM.
We further discuss this in Appendix~\ref{app:tail-norm}.

\paragraph{Implementing the sampler}
During sampling, we can grow the tail statistic $G_t$ without any overhead, as described in Algorithm~\ref{alg:sampling}. 
However, during training, we need to sample the tail statistic for each object to estimate the loss function.
For this we need to sample the full tail $x_{t:T}$ from the forward process $q^\SS(x_{t:T}|x_0)$, and then compute the tail statistic $G_t$.
In practice, this does not add a noticeable overhead and can be computed in parallel to the training process if needed.

\paragraph{Reducing the number of steps}
We can sample from DDPMs more efficiently by skipping some timestamps.
This wouldn't work for SS-DDPM, because changing the number of steps would require changing the noising schedule and, consequently, retraining the model.

However, we can still use a similar trick to reduce the number of function evaluations.
Instead of skipping some timestamps $x_{t_1+1:t_2-1}$, we can draw them from the forward process using the current prediction $x_\theta(G_{t_2}, t_2)$, and then use these samples to obtain the tail statistic $G_{t_1}$.
For Gaussian SS-DDPM this is equivalent to skipping these timestamps in the corresponding DDPM.
In general case, it amounts to approximating the reverse process with a different reverse process:
\begin{equation}
    p\tSS(x_{t_1:t_2}|G_{t_2})={\prod_{t=t_1}^{t_2}} \left.q^\SS(x_t|x_0)\right|_{x_0=x_\theta(G_{t}, t)}\approx{\prod_{t=t_1}^{t_2}} \left.q^\SS(x_t|x_0)\right|_{x_0=x_\theta(G_{t_2}, t_2)}.
    \label{eq:reducing}
\end{equation}
We observe a similar dependence on the number of function evaluations for SS-DDPMs and DDPMs.

\paragraph{Time-dependent tail normalization}
As defined in Theorem~\ref{theorem:memory}, the tail statistics can have vastly different scales for different timestamps.
The values of coefficients $a_t$ can range from thousandths when $t$ approaches $T$ to thousands when $t$ approaches zero.
To make the tail statistics suitable for use in neural networks, proper normalization is crucial.
In most cases, we collect the time-dependent means and variances of the tail statistics across the training dataset and normalize the tail statistics to zero mean and unit variance.
We further discuss this issue in Appendix~\ref{app:tail-norm}.

\paragraph{Architectural choices}
To make training the model easier, we make some minor adjustments to the neural network architecture and the loss function.

Our neural networks $x_\theta(G_t, t)$ take the tail statistic $G_t$ as an input and are expected to produce an estimate of $x_0$ as an output.
In SS-DDPM the data $x_0$ might lie on some manifold, like the unit sphere or the space of positive definite matrices.
Therefore, we need to map the neural network output to that manifold.
We do that on a case-by-case basis, as described in Appendices~\ref{app:synthetic-exp}--\ref{app:image-exp}.

Different terms of the VLB can have drastically different scales.
For this reason, it is common practice to train DDPMs with a modified loss function like $L_{simple}$ rather than the VLB to improve the stability of training \citep{ho2020denoising}.
Similarly, we can optimize a reweighted variational lower bound when training SS-DDPMs.

\section{Related works}
Our work builds upon Denoising Diffusion Probabilistic Models \citep{ho2020denoising}.
Interest in diffusion models has increased recently due to their impressive results in image \citep{ho2020denoising, song2020score, dhariwal2021diffusion} and audio \citep{popov2021grad, liu2022diffgan} generation.

SS-DDPM is most closely related to DDPM.
Like DDPM, we only rely on variational inference when defining and working with our model.
SS-DDPM can be seen as a direct generalization of DDPM and essentially is a recipe for defining DDPMs with non-Gaussian distributions.
The underlying non-Gaussian DDPM is constructed implicitly and can be seen as dual to the star-shaped formulation that we use throughout the paper.

Other ways to construct non-Gaussian DDPMs include Binomial diffusion \citep{sohl2015deep}, Multinomial diffusion \citep{hoogeboom2021argmax} and Gamma diffusion \citep{nachmani2021denoising}.
Each of these works presents a separate derivation of the resulting objective, and extending them to other distributions is not straightforward.
On the other hand, SS-DDPM provides a single recipe for a wide range of distributions.

DDPMs have several important extensions.
\citet{song2020denoising} provide a family of non-Markovian diffusions that all result in the same training objective as DDPM.
One of them, denoted Denoising Diffusion Implicit Model (DDIM), results in an efficient deterministic sampling algorithm that requires a much lower number of function evaluations than conventional stochastic sampling.
Their derivations also admit a star-shaped forward process (at $\sigma^2_t=1-\alpha_{t-1}$), however, the model is not studied in the star-shaped regime.
Their reverse process remains Markovian, which we show to not be sufficient to invert a star-shaped forward process.
Denoising Diffusion Restoration Models \citep{kawar2022denoising} provide a way to solve general linear inverse problems using a trained DDPM.
They can be used for image restoration, inpainting, colorization and other conditional generation tasks.
Both DDIMs and DDRMs rely on the explicit form of the underlying DDPM model and are derived for Gaussian diffusion.
Extending these models to SS-DDPMs is a promising direction for future work.

\citet{song2020score} established the connection between DDPMs and models based on score matching. 
This connection gives rise to continuous-time variants of the models, deterministic solutions and more precise density estimation using ODEs.
We suspect that a similar connection might hold for SS-DDPMs as well, and it can be investigated further in future works.

Other works that present diffusion-like models with other types of noise or applied to manifold data, generally stem from score matching rather than variational inference.
Flow Matching \citep{lipman2022flow} is an alternative probabilistic framework that works with any differentiable degradation process.
\citet{de2022riemannian} and \citet{huang2022riemannian} extended score matching to Riemannian manifolds, and \citet{chen2023riemannian} proposed Riemannian Flow Matching.
\citet{bansal2022cold} proposed Cold Diffusion, a non-probabilistic approach to reversing general degradation processes.

To the best of our knowledge, for the first time, an approach for constructing diffusion without a consecutive process was proposed by \citet{rissanen2022generative} (IHDM) and further expanded on by \citet{daras2022soft} and \citet{hoogeboom2022blurring}.
IHDM uses a similar star-shaped structure that results in a similar variational lower bound.
Adding a deterministic process based on the heat equation allows the authors to keep the reverse process Markovian without having to introduce the tail statistics.
As IHDM heavily relies on blurring rather than adding noise, the resulting diffusion dynamics become very different.
Conceptually our work is much closer to DDPM than IHDM.

\needspace{5\baselineskip}
\section{Experiments}
\label{sec:experiments}
\begin{wraptable}{R}{0.4\textwidth}
\vspace{-1.5em}
\begin{center}
    \caption{KL divergence between the real data distribution and the model distribution $\KL{q(x_0)}{p_\theta(x_0)}$ for Gaussian DDPM and SS-DDPM.}
    \label{tab:toy-exp-kl}
    \begin{tabular}{rcc} 
    \toprule
    &
    Dirichlet
    &
    Wishart
    \\ \midrule
    DDPM & $0.200$ & $0.096$ \\
    SS-DDPM & $\mathbf{0.011}$ & $\mathbf{0.037}$
    \\ \bottomrule
    \end{tabular}
\end{center}
\end{wraptable}
We evaluate SS-DDPM with different families of noising distributions.
The experiment setup, training details and hyperparameters are listed in Appendices~\ref{app:synthetic-exp}--\ref{app:image-exp}.
\paragraph{Synthetic data}
We consider two examples of star-shaped diffusion processes with Dirichlet and Wishart noise to generate data from the probabilistic simplex and from the manifold of p.d. matrices respectively.
We compare them to DDPM, where the predictive network $x_\theta(x_t, t)$ is parameterized to always satisfy the manifold constraints.
As seen in Table~\ref{tab:toy-exp-kl}, using the appropriate distribution results in a better approximation.
The data and modeled distributions are illustrated in Table~\ref{tab:experiment-results}.
This shows the ability of SS-DDPM to work with different distributions and generate data from exotic domains.
\paragraph{Geodesic data}
We apply SS-DDPM to a geodesic dataset of fires on the Earth's surface \citep{fire2020dataset} using a three-dimensional von Mises--Fisher distribution.
The resulting samples and the source data are illustrated in Table~\ref{tab:experiment-results}.
We find that SS-DDPM is not too sensitive to the distribution family and can fit data in different domains.

\begin{wraptable}{R}{0.4\textwidth}
\vspace{-1.8em}
\begin{center}
    \caption{Comparison of Categorical SS-DDPM and Multinomial Text Diffusion on \texttt{text8}. NLL is estimated via ELBO.}
    \label{tab:text}
    \begin{tabular}{rc} 
    \toprule
    Model
    &
    NLL (bits/char)
    \\ \midrule
    MTD & $\leq 1.72$\\
    SS-DDPM & $\leq \mathbf{1.69}$
    \\ \bottomrule
    \end{tabular}
\end{center}
\vspace{1em}
\end{wraptable}
\paragraph{Discrete data}
Categorical SS-DDPM is similar to Multinomial Text Diffusion (MTD, \citep{hoogeboom2021argmax}).
However, unlike in the Gaussian case, these models are not strictly equivalent.
We follow a similar setup to MTD and apply Categorical SS-DDPM to unconditional text generation on the \texttt{text8} dataset \citep{mahoney2011large}.
As shown in Table~\ref{tab:text}, SS-DDPM achieves similar results to MTD, allowing to use different distributions in a unified manner.
While D3PM \citep{austin2021structured} provides some improvements, we follow a simpler setup from MTD.
We expect the improvements from D3PM to directly apply to Categorical SS-DDPM.
\begin{wrapfigure}{R}{0.5\textwidth}
\centering
\includegraphics[width=\linewidth]{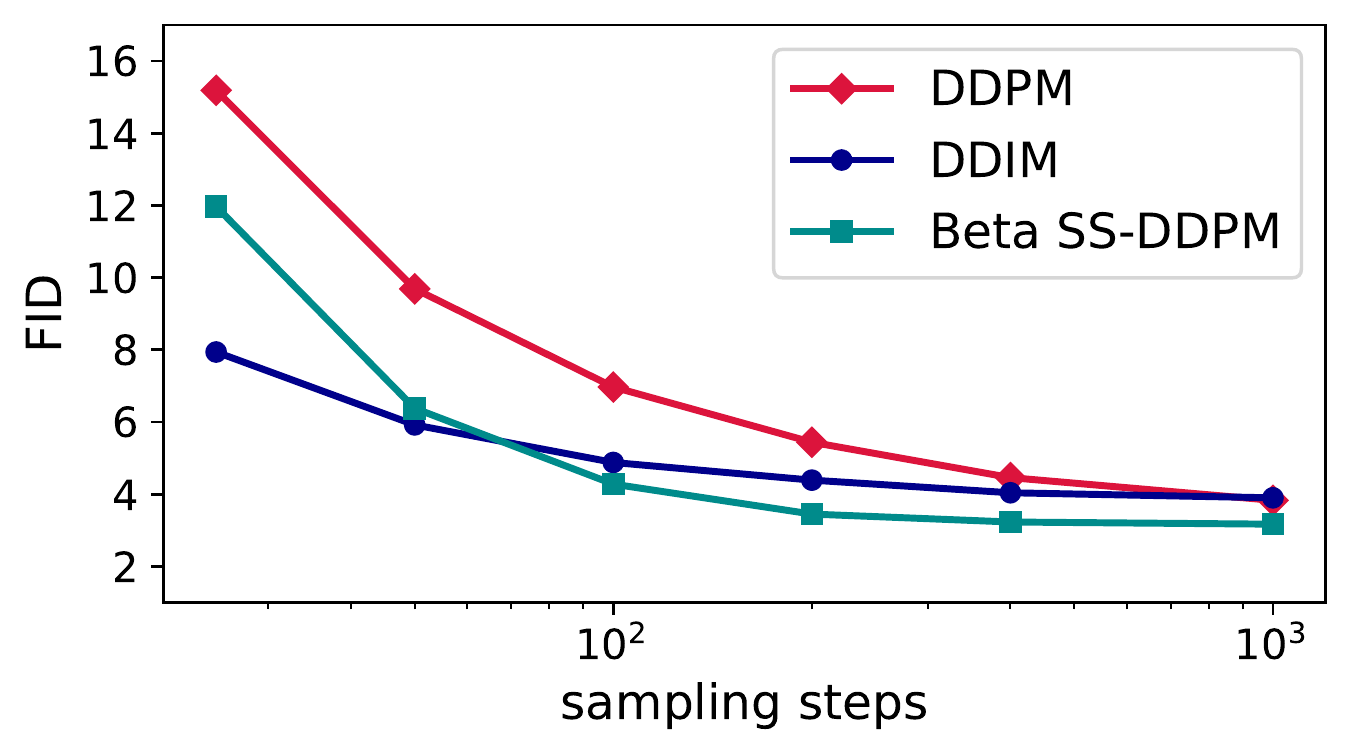}
\caption{Quality of images, generated using Beta SS-DDPM, DDPM and DDIM with different numbers of sampling steps. Models are trained and evaluated on CIFAR-10. DDPM and DDIM results were taken from \citet{nichol2021improved}.}
\label{fig:ddpm-vs-beta}
\vspace{1em}
\end{wrapfigure}
\paragraph{Image data}\label{par:image-data}
Finally, we evaluate SS-DDPM on CIFAR-10.
Since the training data is constrained to a $[0, 1]$ segment, we use Beta distributions.
We evaluate our model with various numbers of generation steps, as described in equation~\eqref{eq:reducing}, and report the resulting Fréchet Inception Distance (FID, \citep{heusel2017gans}) in Figure~\ref{fig:ddpm-vs-beta}.
Beta SS-DDPM achieves comparable quality with the Improved DDPM \citep{nichol2021improved} and is slightly better on lower numbers of steps.
As expected, DDIM performs better when the number of diffusion steps is low, but both SS-DDPM and DDPM outperform DDIM on longer runs.
The best FID score achieved by Beta SS-DDPM is $3.17$.
Although the FID curves for DDPM and DDIM do not achieve this score in Figure~\ref{fig:ddpm-vs-beta}, \citet{ho2020denoising} reported an FID score of $3.17$ for $1000$ DDPM steps, meaning that SS-DDPM performs similarly to DDPM in this setting.

\begin{table*}[t]
\begin{center}
    \caption{Experiments results. The first row is real data and the second is generated samples. For the von Mises--Fisher and Dirichlet models, we show two-dimensional histograms of samples. For the Wishart model, we draw ellipses, corresponding to the p.d. matrices $x_0$ and $x_\theta$. The darker the pixel, the more ellipses pass through that pixel.}
    \label{tab:experiment-results}
    \begin{tabular}{ccccc} 
    \toprule
    &
    \makecell[c]{von Mises--Fisher}
    &
    \makecell[c]{Dirichlet}
    &
    \makecell[c]{Wishart}
    \\ \midrule
    \rotatebox[origin=c]{90}{Real}&
    \makecell[c]{\vspace{-0.5em}\includegraphics[width=53mm, height=30mm]{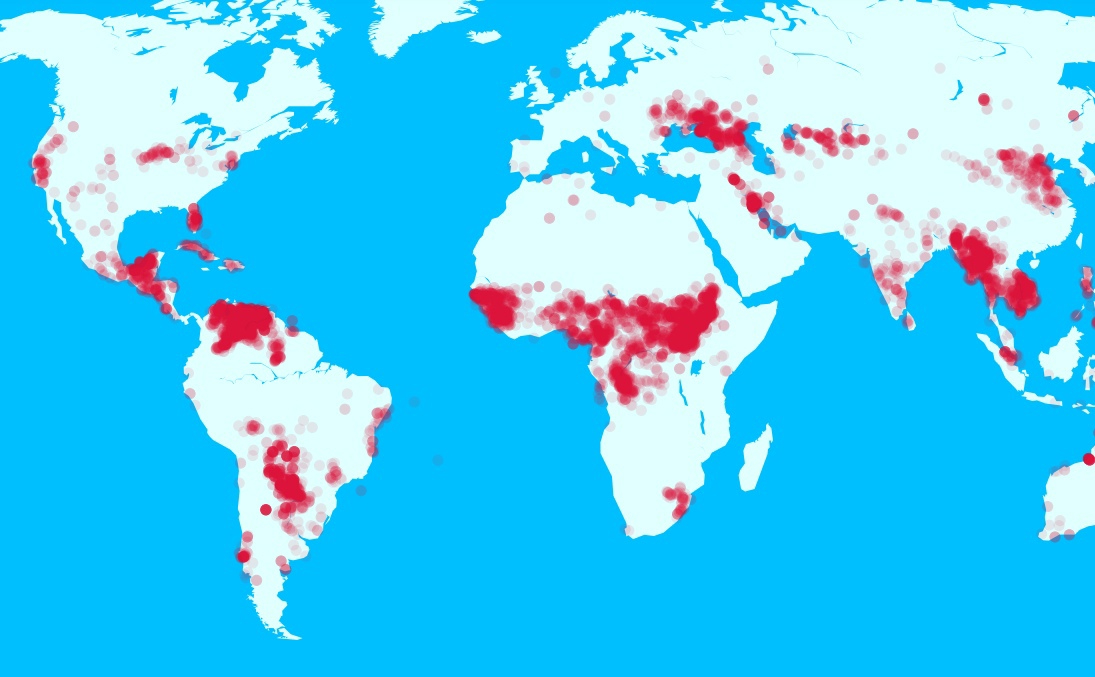}}
    &
    \makecell[c]{\vspace{-0.5em}\includegraphics[width=30mm, height=30mm]{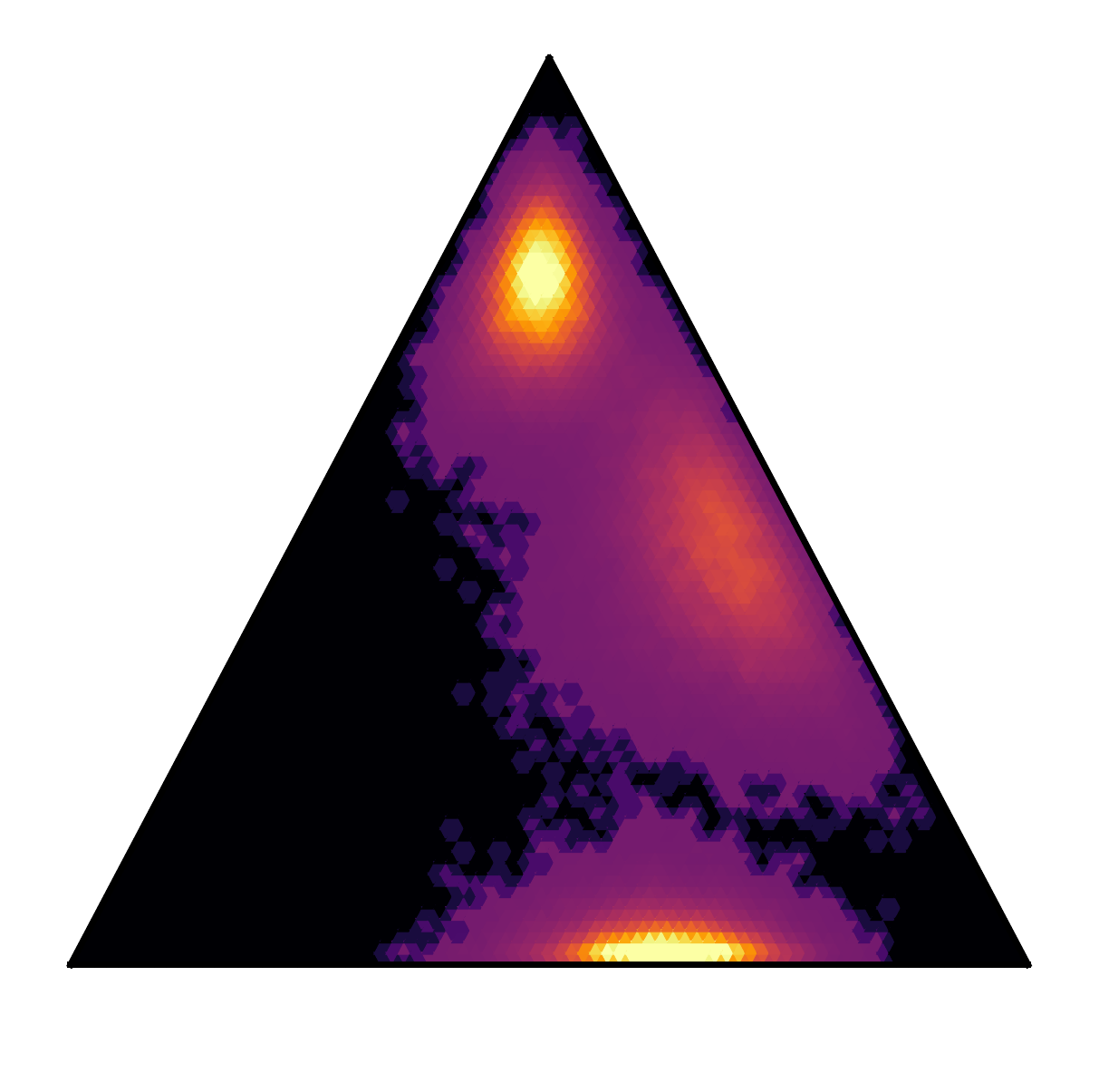}}
    &
    \makecell[c]{\vspace{-0.5em}\includegraphics[width=30mm, height=30mm]{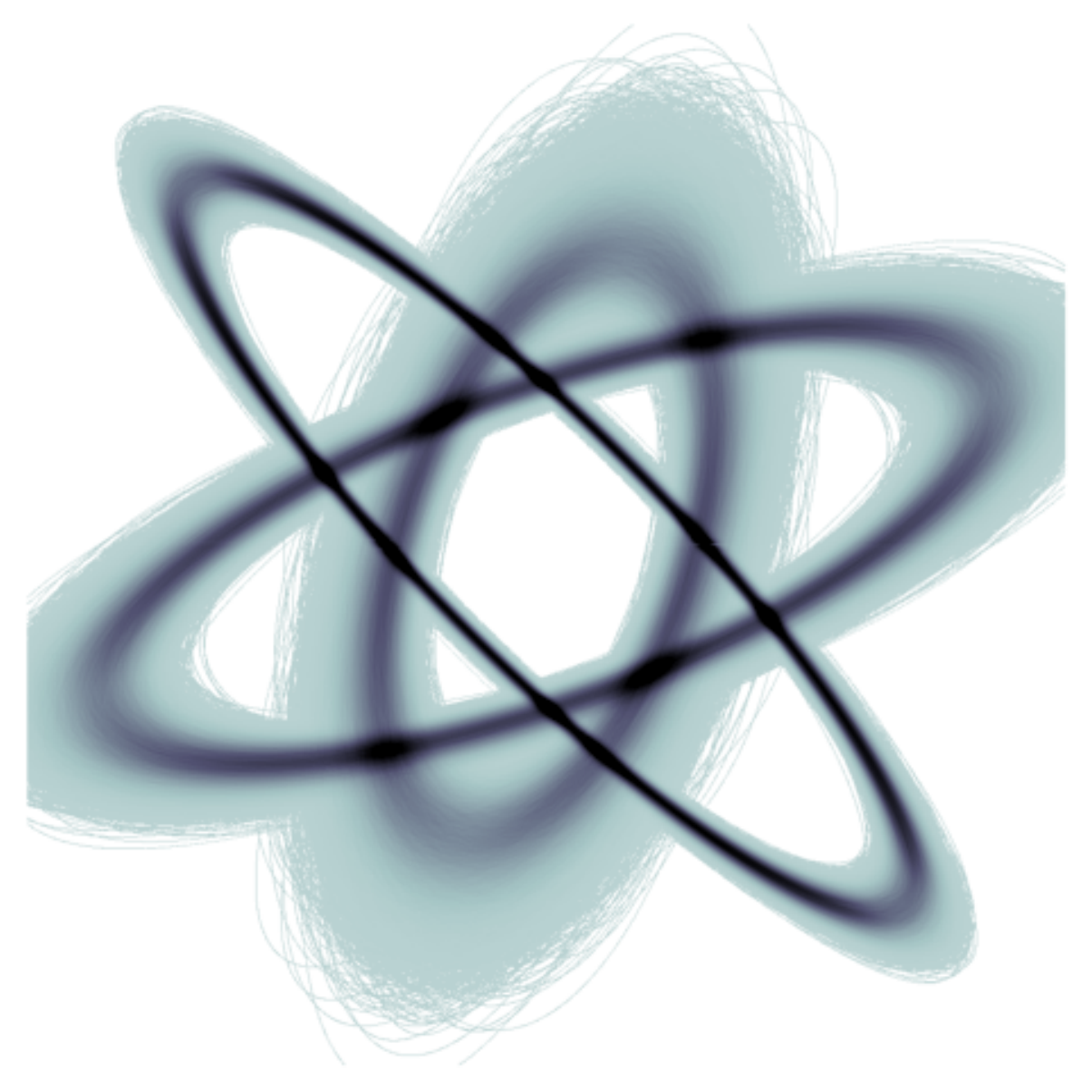}}
    \vspace{-0.05em}\\
    \rotatebox[origin=c]{90}{Generated}&
    \makecell[c]{\includegraphics[width=53mm, height=30mm]{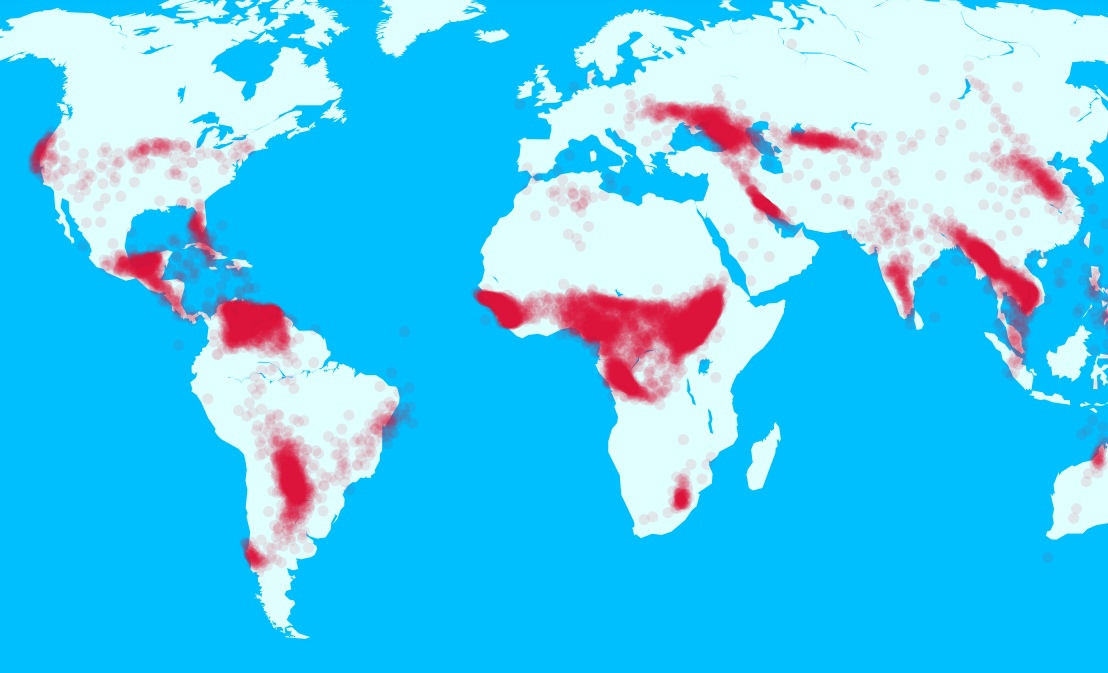}}
    & 
    \makecell[c]{\includegraphics[width=30mm, height=30mm]{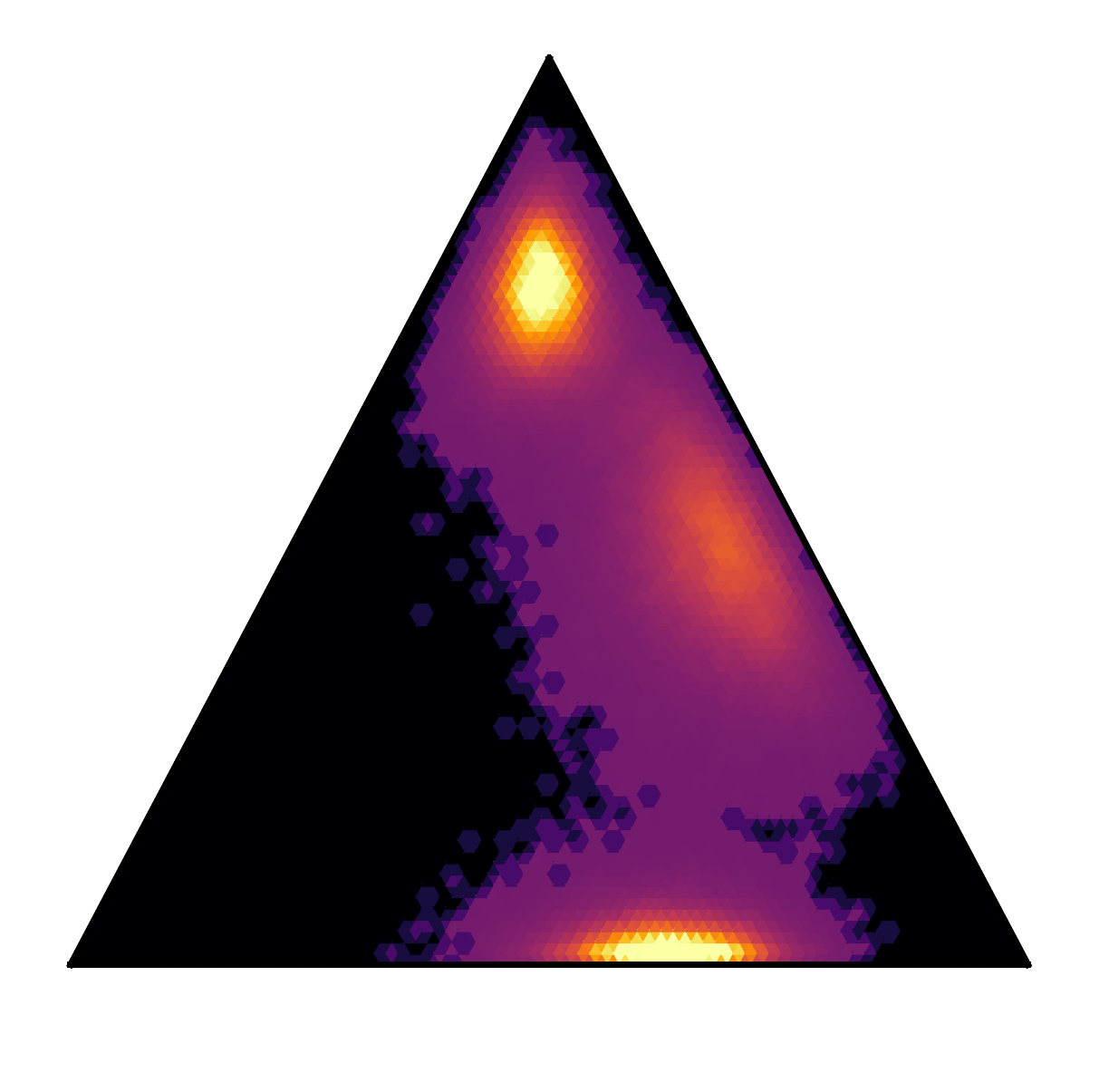}}
    &
    \makecell[c]{\includegraphics[width=30mm, height=30mm]{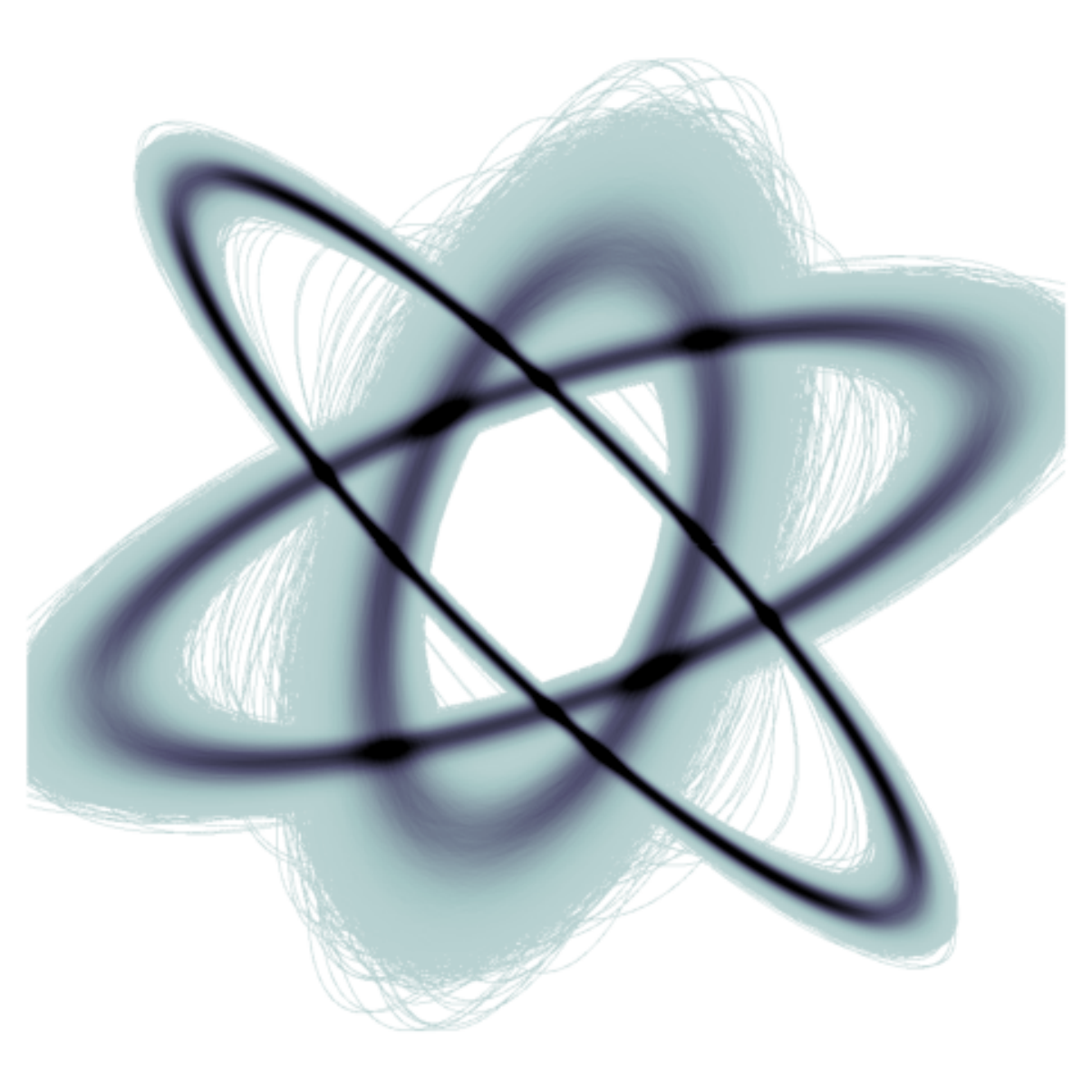}}
    \\ \bottomrule
    \end{tabular}
\end{center}
\end{table*}

\section{Conclusion}
We propose an alternative view on diffusion-like probabilistic models.
We reveal the duality between star-shaped and Markov diffusion processes that allows us to go beyond Gaussian noise by switching to a star-shaped formulation.
It allows us to define diffusion-like models with arbitrary noising distributions and to establish diffusion processes on specific manifolds.
We propose an efficient way to construct a reverse process for such models in the case when the noising process lies in a general subset of the exponential family and show that star-shaped diffusion models can be trained on a variety of domains with different noising distributions.
On image data, star-shaped diffusion with Beta distributed noise attains comparable performance to Gaussian DDPM, challenging the optimality of Gaussian noise in this setting.
The star-shaped formulation opens new applications of diffusion-like probabilistic models, especially for data from exotic domains where domain-specific non-Gaussian diffusion is more appropriate.

\section*{Acknowledgements}
We are grateful to Sergey Kholkin for additional experiments with von Mises--Fisher SS-DDPM on geodesic data and to Tingir Badmaev for helpful recommendations in experiments on image data.
We’d also like to thank Viacheslav Meshchaninov for sharing his knowledge of diffusion models for text data.
This research was supported in part through computational resources of HPC facilities at HSE University.
The results on image data (\textbf{Image data} in Section~\ref{par:image-data}; Section~\ref{app:image-exp}) were obtained by Andrey Okhotin and  Aibek Alanov with the support of the grant for research centers in the field of AI provided by the Analytical Center for the Government of the Russian Federation (ACRF) in accordance with the agreement on the provision of subsidies (identifier of the agreement 000000D730321P5Q0002) and the agreement with HSE University  No. 70-2021-00139.

\bibliography{example_paper}
\bibliographystyle{icml2023}

\newpage
\appendix
\newgeometry{
    textheight=9in,
    textwidth=6.75in,
    top=1in,
    headheight=10pt,
    headsep=10pt,
    footskip=30pt
  }

\section{Variational lower bound for the SS-DDPM model}
\label{app:vlb}
\begin{equation}
\mathcal{L}^\SS(\theta)=
\mean_{q^\SS(x_{0:T})}\log\frac{p\tSS(x_{0:T})}{q^\SS(x_{1:T}|x_0)}=
\mean_{q^\SS(x_{0:T})}\log\frac{p\tSS(x_0|x_{1:T})p\tSS(x_T)\prod_{t=2}^{T}p\tSS(x_{t-1}|x_{t:T})}{\prod_{t=1}^Tq^\SS(x_t|x_0)}=
\end{equation}
\begin{equation}
=\mean_{q^\SS(x_{0:T})}\left[\log p\tSS(x_0|x_{1:T})+\sum_{t=2}^T\log\frac{p\tSS(x_{t-1}|x_{t:T})}{q^\SS(x_{t-1}|x_0)}
+\cancel{\log\frac{p\tSS(x_T)}{q^\SS(x_T|x_0)}}\right]=
\end{equation}
\begin{equation}
=\mean_{q^\SS(x_{0:T})}\left[\log p\tSS(x_0|x_{1:T})-\sum_{t=2}^T\KL{q^\SS(x_{t-1}|x_0)}{p\tSS(x_{t-1}|x_{t:T})}
\right]
\end{equation}

\section{True reverse process for Markovian and Star-Shaped DDPM}
\label{app:true-reverse}
If the forward process is Markovian, the corresponding true reverse process is Markovian too:
\begin{equation}
q^\DDPM(x_{t-1}|x_{t:T})=
\frac{q^\DDPM(x_{t-1:T})}{q^\DDPM(x_{t:T})}=
\frac{q^\DDPM(x_{t-1})q^\DDPM(x_t|x_{t-1})\cancel{\prod_{s=t+1}^Tq^\DDPM(x_s|x_{s-1})}}{q^\DDPM(x_t)\cancel{\prod_{s=t+1}^{T}q^\DDPM(x_s|x_{s-1})}}=
q^\DDPM(x_{t-1}|x_t)
\end{equation}
\begin{equation}
q^\DDPM(x_{0:T})=
q(x_0)\prod_{t=1}^Tq^\DDPM(x_t|x_{t-1})=
q^\DDPM(x_T)\prod_{t=1}^{T}q^\DDPM(x_{t-1}|x_{t:T})=
q^\DDPM(x_T)\prod_{t=1}^{T}q^\DDPM(x_{t-1}|x_t)
\end{equation}
For star-shaped models, however, the reverse process has a general structure that cannot be reduced further:
\begin{equation}
q^\SS(x_{0:T})=
q(x_0)\prod_{t=1}^Tq^\SS(x_t|x_0)=
q^\SS(x_T)\prod_{t=1}^{T}q^\SS(x_{t-1}|x_{t:T})
\end{equation}
In this case, a Markovian reverse process can be a very poor approximation to the true reverse process.
Choosing such an approximation adds an irreducible gap to the variational lower bound:
\begin{equation}
\mathcal{L}^\SS_{Markov}(\theta)=
\mean_{q^\SS(x_{0:T})}\log\frac{p_\theta(x_T)\prod_{t=1}^Tp_\theta(x_{t-1}|x_t)}{q^\SS(x_{1:T}|x_0)}=
\end{equation}
\begin{equation}
=\mean_{q^\SS(x_{0:T})}\log\frac{p_\theta(x_T)\prod_{t=1}^Tp_\theta(x_{t-1}|x_t)q(x_0)\prod_{t=1}^Tq^\SS(x_{t-1}|x_t)}{q^\SS(x_T)\prod_{t=1}^Tq^\SS(x_{t-1}|x_{t:T})\prod_{t=1}^T}=
\end{equation}
\begin{equation}
=\mean_{q^\SS(x_{0:T})}\left[
\log q(x_0)-\underbrace{\KL{q^\SS(x_T)}{p_\theta(x_T)}-\sum_{t=1}^T\KL{q^\SS(x_{t-1}|x_t)}{p_\theta(x_{t-1}|x_t)}}_{\text{Reducible}}-\right.
\label{eq:vlb-ss-markov-1}
\end{equation}
\begin{equation}
\left.-\underbrace{\sum_{t=1}^T\KL{q^\SS(x_{t-1}|x_{t:T})}{q^\SS(x_{t-1}|x_t)}}_{\text{Irreducible}}
\right]
\label{eq:vlb-ss-markov-2}
\end{equation}
Intuitively, there is little information shared between $x_{t-1}$ and $x_t$, as they are conditionally independent given $x_0$.
Therefore, we would expect the distribution $q^\SS(x_{t-1}|x_t)$ to have a much higher entropy than the distribution $q^\SS(x_{t-1}|x_{t:T})$, making the irreducible gap~\eqref{eq:vlb-ss-markov-2} large.
The dramatic effect of this gap is illustrated in Figure~\ref{fig:markov-vs-general}.

This gap can also be computed analytically for Gaussian DDPMs when the data is coming from a standard Gaussian distribution $q(x_0)=\Normal{x_0; 0, 1}$.
According to equations~(\ref{eq:vlb-ss-markov-1}--\ref{eq:vlb-ss-markov-2}), the best Markovian reverse process in this case is $p_\theta(x_{t-1}|x_t)=q^\SS(x_{t-1}|x_t)$.
It results in the following value of the variational lower bound:
\begin{equation}
\mathcal{L}_{Markov}^\SS=
-\mathcal{H}[q(x_0)]+\mathcal{H}[q^\SS(x_{0:T})]-\mathcal{H}[q^\SS(x_T)]-\frac12\sum_{t=1}^T\left[1+\log(2\pi(1-\al^\SS_{t-1}\al^\SS_{t}))\right]
\end{equation}

\begin{wrapfigure}[15]{r}[0mm]{0.4\textwidth}
    \centering
    \includegraphics[width=0.35\textwidth]{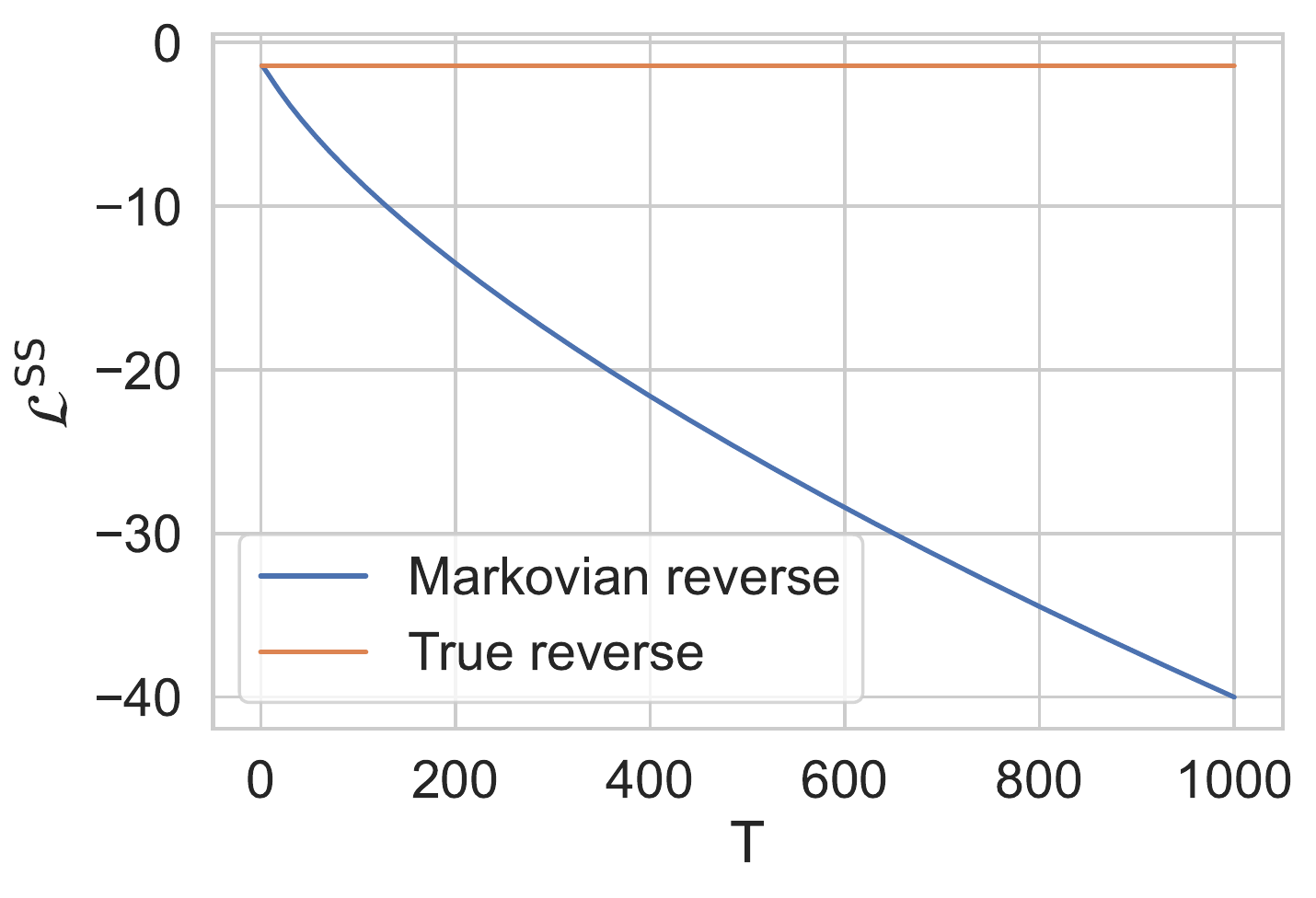}
    \caption{Variational lower bound for a Gaussian Star-Shaped DDPM model computed with the true reverse process and with the best Markovian approximation.}\label{fig:vlb-gap}
\end{wrapfigure}

If the reverse process is matched exactly (and has a general structure), the variational lower bound reduces to the negative entropy of the data distribution:
\begin{equation}
\mathcal{L}^\SS_*=
\mean_{q^\SS(x_{0:T})}\log\frac{p_*^\SS(x_{0:T})}{q^\SS(x_{1:T}|x_0)}=
\end{equation}
\begin{equation}
=\mean_{q^\SS(x_{0:T})}\log\frac{q^\SS(x_{0:T})}{q^\SS(x_{1:T}|x_0)}=
\end{equation}
\begin{equation}
=\mean_{q(x_0)}\log q(x_0)=-\frac12\log(2\pi)-\frac12
\end{equation}

As shown in Figure~\ref{fig:vlb-gap}, the Markovian approximation adds a substantial irreducible gap to the variational lower bound.

\section{Proof of Theorem~\ref{theorem:memory}}
We start by establishing the following lemma.
It would allow us to change the variables in the condition of a conditional distribution.
Essentially, we would like to show that if a variable $y$ is only dependent on $x$ through some function $h(x)$, we can write the distribution $p(y|x)$ as $p(y|z)|_{z=h(x)}$.
\begin{lemma}
\label{lemma1}
Assume random variables $x$ and $y$ follow a joint distribution $p(x, y)=p(x)p(y|x)$, where $p(y|x)$ is given by $f(y, h(x))$.
Then $p(y|x)=p(y|z)|_{z=h(x)}$, where the variable $z$ is defined as $z=h(x)$.
\end{lemma}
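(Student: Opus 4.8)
The plan is to prove Lemma~\ref{lemma1} by a direct computation with conditional densities, unwinding the definitions of the marginal and conditional distributions of $z=h(x)$ and reducing everything to the identity $p(y|x)=f(y,h(x))$. First I would fix notation: since $z$ is a deterministic function of $x$, the joint law of $(x,y,z)$ lives on the set $\{z=h(x)\}$, and the joint density of $(y,z)$ is obtained by marginalizing out $x$ subject to that constraint, i.e. $p(y,z)=\int p(x)\,p(y|x)\,\delta(z-h(x))\,dx$. Substituting $p(y|x)=f(y,h(x))$ and using that $\delta(z-h(x))$ forces $h(x)=z$, the integrand's $f$-factor becomes $f(y,z)$, which is constant in $x$ and pulls out of the integral: $p(y,z)=f(y,z)\int p(x)\,\delta(z-h(x))\,dx=f(y,z)\,p(z)$, where $p(z)=\int p(x)\,\delta(z-h(x))\,dx$ is precisely the marginal density of $z$.

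Dividing through (on the support of $z$, where $p(z)>0$) gives $p(y|z)=p(y,z)/p(z)=f(y,z)$. Evaluating this at $z=h(x)$ yields $p(y|z)\big|_{z=h(x)}=f(y,h(x))=p(y|x)$, which is the claim. The only remaining care is the qualifier ``$p(y|z)|_{z=h(x)}$ holds for $p(x)$-almost every $x$'': since $z=h(x)$, the value $h(x)$ lies in the support of $p(z)$ almost surely, so the division step is justified on a set of full measure, and the identity extends to all relevant $x$.

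I would also remark (or alternatively give as the primary argument) a measure-theoretic version that sidesteps densities: by the definition of conditional expectation, for any bounded measurable $g$ one has $\mean[g(y)\mid x]=\int g(y)f(y,h(x))\,dy=:\varphi(h(x))$, which is $\sigma(h(x))$-measurable; since $\sigma(z)=\sigma(h(x))$, the tower property gives $\mean[g(y)\mid z]=\mean\big[\mean[g(y)\mid x]\mid z\big]=\varphi(z)$, and hence $\mean[g(y)\mid z]\big|_{z=h(x)}=\varphi(h(x))=\mean[g(y)\mid x]$; as $g$ was arbitrary this identifies the two conditional distributions. The main obstacle is purely expository: being careful about the ``almost everywhere'' nature of conditional densities and about the fact that $\delta(z-h(x))$ is shorthand for the pushforward/disintegration of $p(x)$ along $h$, rather than any genuine analytic difficulty — the content of the lemma is essentially that conditioning is stable under reparameterizing the conditioning variable by a function on which $y$'s conditional law already factors.
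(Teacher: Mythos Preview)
Your proposal is correct and follows essentially the same route as the paper: introduce the joint $p(x,y,z)=p(x)f(y,h(x))\delta(z-h(x))$, marginalize out $x$ using the delta to get $p(y,z)=f(y,z)p(z)$, and divide. The paper's proof stops there, while your additional almost-everywhere care and the measure-theoretic alternative via the tower property are nice bonuses but not required.
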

\begin{proof}
We can write down the joint distribution over all variables as follows:
\begin{equation}
    p(x, y, z)=p(x)f(y, h(x))\delta(z-h(x)).
\end{equation}
By integrating $x$ out, we get
\begin{equation}
    p(y,z) = \int p(x,y,z)dx=\int p(x)f(y, h(x))\delta(z-h(x))dx=\int p(x)f(y,z)\delta(z-h(x))dx=
\end{equation}
\begin{equation}
    =f(y,z)\int p(x)\delta(z-h(x))dx=f(y,z)p(z)
\end{equation}
Finally, we obtain
\begin{equation}
    p(y|z)|_{z=h(x)}=\left.\frac{p(y,z)}{p(z)}\right||_{z=h(x)}=f(y, z)|_{z=h(x)}=f(y, h(x))=p(y|x)
\end{equation}
\end{proof}
\label{app:proof-1}
\memory*
\begin{proof}
\begin{equation}
q^\SS(x_{t-1}|x_{t:T})=\int q^\SS(x_{t-1}|x_0)q^\SS(x_0|x_{t:T})dx_0
\end{equation}
\begin{equation}
q^\SS(x_0|x_{t:T})=
\frac{q(x_0)\prod_{s=t}^Tq^\SS(x_s|x_0)}{q^\SS(x_{t:T})}=
\frac{q(x_0)}{q^\SS(x_{t:T})}\left(\prod_{s=t}^Th_s(x_s)\right)\exp\left\{\sum_{s=t}^T\left(\eta_s(x_0)^\transpose\mathcal{T}(x_s)-\Omega_s(x_0)\right)\right\}=
\end{equation}
\begin{equation}
=\frac{q(x_0)}{q^\SS(x_{t:T})}\left(\prod_{s=t}^Th_s(x_s)\right)\exp\left\{\sum_{s=t}^T\left((A_sf(x_0)+b_s)^\transpose\mathcal{T}(x_s)-\Omega_s(x_0)\right)\right\}=
\end{equation}
\begin{equation}
=\frac{q(x_0)}{q^\SS(x_{t:T})}\left(\prod_{s=t}^Th_s(x_s)\right)\exp\left\{f(x_0)^\transpose\sum_{s=t}^TA_s^\transpose\mathcal{T}(x_s)+\sum_{s=t}^T\left(b_s^\transpose\mathcal{T}(x_s)-\Omega_s(x_0)\right)\right\}=
\end{equation}
\begin{equation}
=\frac{q(x_0)}{q^\SS(x_{t:T})}\left(\prod_{s=t}^Th_s(x_s)\right)\exp\left\{f(x_0)^\transpose G_t+\sum_{s=t}^T\left(b_s^\transpose\mathcal{T}(x_s)-\Omega_s(x_0)\right)\right\}=\left[\parbox{8em}{a distribution must be normalized}\right]=
\end{equation}
\begin{equation}
=\frac{q(x_0)\exp\left\{f(x_0)^\transpose G_t-\sum_{s=t}^T\Omega_s(x_0)\right\}}{\int q(x_0)\exp\left\{f(x_0)^\transpose G_t-\sum_{s=t}^T\Omega_s(x_0)\right\} dx_0}=\left[\parbox{9em}{Lemma~\ref{lemma1} for $G_t$ as $z$, $x_0$ as $y$ and $x_{t:T}$ as $x$}\right]=q^\SS(x_0|G_t)
\end{equation}
\begin{equation}
q^\SS(x_{t-1}|x_{t:T})=
\int q^\SS(x_{t-1}|x_0)q^\SS(x_0|x_{t:T})dx_0=
\int q^\SS(x_{t-1}|x_0)q^\SS(x_0|G_t)dx_0=
q^\SS(x_{t-1}|G_t)
\end{equation}
\end{proof}

\section{Gaussian SS-DDPM is equivalent to Gaussian DDPM}
\label{app:gaussian-equivalence}
\equivalence*
\begin{proof}
We start by listing the necessary definitions and expressions used in the Gaussian DDPM model \citep{ho2020denoising}.
\begin{align}
    q^\DDPM(x_{0:T})&=q(x_0)\prod_{i=1}^T q^\DDPM(x_t|x_{t-1})\\
    q^\DDPM(x_t|x_{t-1})&=\Normal{x_t; \sqrt{1-\beta_t}x_{t-1}, \beta_t\mathbf{I}}
\end{align}
\begin{align}
    \alpha_s^\DDPM&=1-\beta_s^\DDPM\\
    \al_t^\DDPM&=\prod_{s=1}^t\alpha_s^\DDPM\\
    q^\DDPM(x_t|x_0)&=\Normal{x_t; \sqrt{\al_t^\DDPM}x_0, (\omal_t^\DDPM)\mathbf{I}}
\end{align}
\begin{align}
    \tilde{\mu}_t(x_t, x_0)&=\frac{\sqrt{\al_{t-1}^\DDPM}\beta_t}{\omal_t^\DDPM}x_0+\frac{\sqrt{\alpha_t^\DDPM}(\omal_{t-1}^\DDPM)}{\omal_t^\DDPM}x_t\\
    \tilde{\beta}_t^\DDPM&=\frac{1-\al_{t-1}^\DDPM}{1-\al_{t}^\DDPM}\beta_t\\
    q^\DDPM(x_{t-1}|x_{t}, x_0)&=\Normal{x_{t-1}; \tilde{\mu}_t(x_t, x_0), \tilde{\beta}_t^\DDPM\mathbf{I}}
\end{align}
\begin{align}
    p\tDDPM(x_{0:T})&=q^\DDPM(x_T)\prod_{i=1}^Tp\tDDPM(x_{t-1}|x_t)\\
    p\tDDPM(x_{t-1}|x_t)&=\Normal{x_{t-1}; \tilde{\mu}_t(x_t, x_\theta^\DDPM(x_t, t)), \tilde{\beta}_t^\DDPM\mathbf{I}}
\end{align}
In this notation, the variational lower bound for DDPM can be written as follows (here we omit the term corresponding to $x_T$ because it's either zero or a small constant):
\begin{align}
&\mathcal{L}^\DDPM(\theta)=
\mean_{q^\DDPM(x_{0:T})}\left[\log p\tDDPM(x_0|x_1)-\sum_{t=2}^T\KL{q^\DDPM(x_{t-1}|x_t, x_0)}{p\tDDPM(x_{t-1}|x_t)}
\right]=\\
&=\mean_{q^\DDPM(x_{0:T})}\left[\log p\tDDPM(x_0|x_1)-\sum_{t=2}^T\KL{q^\DDPM(x_{t-1}|x_t, x_0)}{\left.q^\DDPM(x_{t-1}|x_t, x_0)\right|_{x_0=x_\theta^\DDPM(x_t, t)}}
\right]\\
&\KL{q^\DDPM(x_{t-1}|x_t, x_0)}{\left.q^\DDPM(x_{t-1}|x_t, x_0)\right|_{x_0=x_\theta^\DDPM(x_t, t)}}=
\frac{\left(\frac{\sqrt{\al_{t-1}^\DDPM}\beta_t^\DDPM}{\omal_{t}^\DDPM}(x_0-x_\theta^\DDPM(x_t, t))\right)^2}{2\tilde{\beta}_t^\DDPM}
\end{align}

For convenience, we additionally define $\al_{T+1}^\DDPM=0$.
Since all the involved distributions are typically isotropic, we consider a one-dimensional case without the loss of generality.

First, we show that the forward processes $q^\SS(x_0, G_{1:T})$ and $q^\DDPM(x_{0:T})$ are equivalent.
\begin{equation}
G_t=
\frac{\omal^\DDPM_{t}}{\sqrt{\al^\DDPM_{t}}}\sum_{s=t}^T\frac{\sqrt{\al^\SS_s}x_s}{\omal^\SS_s}
\end{equation}
Since $G_t$ is a linear combination of independent (given $x_0$) Gaussian random variables, its distribution (given $x_0$) is also Gaussian.
All the terms conveniently cancel out, and we recover the same form as the forward process of DDPM.
\begin{equation}
q^\SS(G_t|x_0)=
\Normal{
    G_t; 
    \frac{\omal^\DDPM_{t}}{\sqrt{\al^\DDPM_{t}}}\sum_{s=t}^T\frac{\al^\SS_sx_0}{\omal^\SS_s};
    \left(\frac{\omal^\DDPM_{t}}{\sqrt{\al^\DDPM_{t}}}\right)^2\sum_{s=t}^T\left(\frac{\al^\SS_s}{(\omal^\SS_s)^2}(\omal^\SS_s)\right)
}=
\end{equation}
\begin{equation}
=\Normal{
    G_t; 
    \frac{\omal^\DDPM_{t}}{\sqrt{\al^\DDPM_{t}}}\frac{\al^\DDPM_{t}}{\omal^\DDPM_{t}}x_0;
    \left(\frac{\omal^\DDPM_{t}}{\sqrt{\al^\DDPM_{t}}}\right)^2\frac{\al^\DDPM_{t}}{\omal^\DDPM_{t}}
}=
\end{equation}
\begin{equation}
=\Normal{
    G_t; 
    \sqrt{\al^\DDPM_{t}}x_0;
    (\omal^\DDPM_{t})
}=q^\DDPM(x_{t}|x_0)|_{x_{t}=G_t}
\end{equation}
Since both processes share the same $q(x_0)$, and have the same Markovian structure, matching these marginals sufficiently shows that the processes are equivalent:
\begin{equation}
q^\SS(x_0, G_{1:T})=\left.q^\DDPM(x_{0:T})\right|_{x_{1:T}=G_{1:T}}
\end{equation}
Consequently, the posteriors are matching too:
\begin{equation}
q^\SS(G_{t-1}|G_t, x_0)=\left.q^\DDPM(x_{t-1}|x_t, x_0)\right|_{x_{t-1,t}=G_{t-1,t}}
\end{equation}
Next, we show that the reverse processes $p\tSS(G_{t-1}|G_t)$ and $p\tDDPM(x_{t-1}|x_t)$ are the same.
Since $G_t$ in SS-DDPM is the same as $x_t$ in DDPM, we can assume the predictive models $x_\theta^\SS(G_t, t)$ and $x_\theta^\DDPM(x_t, t)$ to coincide at $x_t=G_t$.
\begin{equation}
G_{t-1}=
\frac{\omal^\DDPM_{t-1}}{\sqrt{\al^\DDPM_{t-1}}}\sum_{s=t-1}^T\frac{\sqrt{\al^\SS_s}x_s}{\omal^\SS_s}=
\frac{\omal^\DDPM_{t-1}}{\sqrt{\al^\DDPM_{t-1}}}\frac{\sqrt{\al^\SS_{t-1}}x_{t-1}}{\omal^\SS_{t-1}}+
\frac{\omal^\DDPM_{t-1}}{\sqrt{\al^\DDPM_{t-1}}}
\frac{\sqrt{\al^\DDPM_{t}}}{\omal^\DDPM_{t}}G_{t}=
\end{equation}
\begin{equation}
=\frac{\omal^\DDPM_{t-1}}{\sqrt{\al^\DDPM_{t-1}}}\sum_{s=t-1}^T\frac{\sqrt{\al^\SS_s}x_s}{\omal^\SS_s}=
\frac{\omal^\DDPM_{t-1}}{\sqrt{\al^\DDPM_{t-1}}}\frac{\sqrt{\al^\SS_{t-1}}x_{t-1}}{\omal^\SS_{t-1}}+
\frac{\sqrt{\alpha_t^\DDPM}(\omal^\DDPM_{t-1})}{\omal^\DDPM_{t}}G_{t},
\end{equation}
where $x_{t-1}\sim p\tSS(x_{t-1}|G_t)=q^\SS(x_{t-1}|x_0)|_{x_0=x_\theta(G_t, t)}$.
Therefore, $p\tSS(G_{t-1}|G_t)$ is also a Gaussian distribution.
Let's take care of the mean first:
\begin{equation}
\mean_{p\tSS(G_{t-1}|G_t)}G_{t-1}=
    \frac{\omal^\DDPM_{t-1}}{\sqrt{\al^\DDPM_{t-1}}}\frac{\al^\SS_{t-1}}{\omal^\SS_{t-1}}x_\theta(G_t, t)+
    \frac{\sqrt{\alpha_t^\DDPM}(\omal^\DDPM_{t-1})}{\omal^\DDPM_{t}}G_{t}=
\end{equation}
\begin{equation}
=
\frac{\omal^\DDPM_{t-1}}{\sqrt{\al^\DDPM_{t-1}}}\left(\frac{\al^\DDPM_{t-1}}{\omal^\DDPM_{t-1}}-\frac{\al^\DDPM_{t}}{\omal^\DDPM_{t}}\right)x_\theta(G_t, t)+
\frac{\sqrt{\alpha_t^\DDPM}(\omal^\DDPM_{t-1})}{\omal^\DDPM_{t}}G_{t}=
\end{equation}
\begin{equation}
=\left(\sqrt{\al^\DDPM_{t-1}}-\frac{(\omal^\DDPM_{t-1})\sqrt{\al^\DDPM_{t-1}}\alpha^\DDPM_{t}}{\omal^\DDPM_{t}}\right)
    x_\theta(G_t, t)+
    \frac{\sqrt{\alpha_t^\DDPM}(\omal^\DDPM_{t-1})}{\omal^\DDPM_{t}}G_{t}=
\end{equation}
\begin{equation}
=\left(\frac{\omal^\DDPM_{t}-(\omal^\DDPM_{t-1})\alpha^\DDPM_{t}}{\omal^\DDPM_{t}}\right)
    \sqrt{\al^\DDPM_{t-1}}x_\theta(G_t, t)+
    \frac{\sqrt{\alpha_t^\DDPM}(\omal^\DDPM_{t-1})}{\omal^\DDPM_{t}}G_{t}=
\end{equation}
\begin{equation}
=\left(\frac{\omal^\DDPM_{t}-\alpha_t^\DDPM+\al^\DDPM_{t}}{\omal^\DDPM_{t}}\right)
    \sqrt{\al^\DDPM_{t-1}}x_\theta(G_t, t)+
    \frac{\sqrt{\alpha_t^\DDPM}(\omal^\DDPM_{t-1})}{\omal^\DDPM_{t}}G_{t}=
\end{equation}
\begin{equation}
=\frac{\sqrt{\al^\DDPM_{t-1}}\beta_t}{\omal^\DDPM_{t}}
    x_\theta(G_t, t)+
    \frac{\sqrt{\alpha_t^\DDPM}(\omal^\DDPM_{t-1})}{\omal^\DDPM_{t}}G_{t}
\end{equation}

Now, let's derive the variance:
\begin{equation}
\mathbb{D}_{p\tSS(G_{t-1}|G_t)}G_{t-1}=
    \left(\frac{\omal^\DDPM_{t-1}}{\sqrt{\al^\DDPM_{t-1}}}\frac{\sqrt{\al^\SS_{t-1}}}{\omal^\SS_{t-1}}\right)^2(\omal^\SS_{t-1})=
\end{equation}
\begin{equation}
=\left(\frac{\omal^\DDPM_{t-1}}{\sqrt{\al^\DDPM_{t-1}}}\sqrt{\frac{\al^\SS_{t-1}}{\omal^\SS_{t-1}}}\right)^2
=\frac{(\omal^\DDPM_{t-1})^2}{\al^\DDPM_{t-1}}\frac{\al^\SS_{t-1}}{\omal^\SS_{t-1}}
=\frac{(\omal^\DDPM_{t-1})^2}{\al^\DDPM_{t-1}}
\left(\frac{\al^\DDPM_{t-1}}{\omal^\DDPM_{t-1}}-\frac{\al^\DDPM_{t}}{\omal^\DDPM_{t}}\right)=
\end{equation}
\begin{equation}
=(\omal^\DDPM_{t-1})
\left(1-\frac{(\omal^\DDPM_{t-1})\alpha^\DDPM_{t}}{\omal^\DDPM_{t}}\right)
=\frac{(\omal^\DDPM_{t-1})(\omal^\DDPM_{t}-(\omal^\DDPM_{t-1})\alpha^\DDPM_{t})}{\omal^\DDPM_{t}}=
\end{equation}
\begin{equation}
=\frac{\omal^\DDPM_{t-1}}{\omal^\DDPM_{t}}(\omal^\DDPM_{t}-\alpha_t^\DDPM+\al^\DDPM_{t})
=\frac{\omal^\DDPM_{t-1}}{\omal^\DDPM_{t}}\beta^\DDPM_{t}=\tilde{\beta}_t^\DDPM
\end{equation}
Therefore,
\begin{equation}
p\tSS(G_{t-1}|G_t)=
\Normal{
    G_{t-1};
    \frac{\sqrt{\al^\DDPM_{t-1}}\beta_t}{\omal^\DDPM_{t}}
    x_\theta(G_t, t)+
    \frac{\sqrt{\alpha_t^\DDPM}(\omal^\DDPM_{t-1})}{\omal^\DDPM_{t}}G_{t},
    \tilde{\beta}_t^\DDPM
}=\left.p\tDDPM(x_{t-1}|x_t)\right|_{x_{t-1,t}=G_{t-1,t}}
\end{equation}

Finally, we show that the variational lower bounds are the same.

\begin{equation}
\mathcal{L}^\SS(\theta)=
\mean_{q^\SS(x_{0:T})}\left[\log p\tSS(x_0|G_1)-\sum_{t=2}^T\KL{q^\SS(x_{t-1}|x_0)}{p\tSS(x_{t-1}|G_t)}
\right]=
\end{equation}
\begin{equation}
=\mean_{q^\SS(x_{0:T})}\left[\log p\tSS(x_0|G_1)-\sum_{t=2}^T\KL{q^\SS(x_{t-1}|x_0)}{\left.q^\SS(x_{t-1}|x_0)\right|_{x_0=x_\theta(G_t, t)}}
\right]
\end{equation}
Since $G_1$ from the star-shaped model is the same as $x_1$ from the DDPM model, the first term $\log p\tSS(x_0|G_1)$ coincides with $\log p\tDDPM(x_0|x_1)|_{x_1=G_1}$.

\begin{equation}
\KL{q^\SS(x_{t-1}|x_0)}{\left.q^\SS(x_{t-1}|x_0)\right|_{x_0=x_\theta(G_t, t)}}=
\frac{\left(\sqrt{\al_{t-1}^\SS}(x_0-x_\theta(G_t, t))\right)^2}{2(1-\al_{t-1}^\SS)}=
\end{equation}
\begin{equation}
=\frac{\al_{t-1}^\SS}{2(1-\al_{t-1}^\SS)}(x_0-x_\theta(G_t, t))^2
=\frac12\left(\frac{\al_{t-1}^\DDPM}{1-\al_{t-1}^\DDPM}-\frac{\al_{t}^\DDPM}{1-\al_{t}^\DDPM}\right)(x_0-x_\theta(G_t, t))^2=
\end{equation}
\begin{equation}
=\frac12\frac{\al_{t-1}^\DDPM-\al_{t}^\DDPM}{(1-\al_{t-1}^\DDPM)(1-\al_{t}^\DDPM)}(x_0-x_\theta(G_t, t))^2
=\frac12\frac{\al_{t-1}^\DDPM\beta_t^\DDPM}{(1-\al_{t-1}^\DDPM)(1-\al_{t}^\DDPM)}(x_0-x_\theta(G_t, t))^2=
\end{equation}
\begin{equation}
=\frac{\al_{t-1}^\DDPM\beta_t^\DDPM(x_0-x_\theta^\DDPM(x_t, t))^2}{2(\omal_{t-1}^\DDPM)(\omal_{t}^\DDPM)}
=\frac{\frac{\al_{t-1}^\DDPM(\beta_t^\DDPM)^2}{(\omal_{t}^\DDPM)^2}(x_0-x_\theta^\DDPM(x_t, t))^2}{2\frac{\omal_{t-1}^\DDPM}{\omal_{t}^\DDPM}\beta_t^\DDPM}=
\end{equation}
\begin{equation}
=\frac{\left(\frac{\sqrt{\al_{t-1}^\DDPM}\beta_t^\DDPM}{\omal_{t}^\DDPM}(x_0-x_\theta^\DDPM(x_t, t))\right)^2}{2\tilde{\beta}_t^\DDPM}
=\KL{q^\DDPM(x_{t-1}|x_t, x_0)}{\left.q^\DDPM(x_{t-1}|x_t, x_0)\right|_{x_0=x_\theta^\DDPM(x_t, t)}}
\end{equation}
Therefore, we finally obtain $\mathcal{L}^\SS(\theta)=\mathcal{L}^\DDPM(\theta)$.
\end{proof}

\section{Duality between star-shaped and Markovian diffusion: general case}
\label{sec:general-case}
We assume that $\mathcal{T}(x_t)$ and all matrices $A_t$ (except possibly $A_T$) are invertible.
Then, there is a bijection between the set of tail statistics $G_{t:T}$ and the tail $x_{t:T}$.

First, we show that the variables $G_{1:T}$ form a Markov chain.
\begin{equation}
    q(G_{t-1}|G_{t:T})=q(G_{t-1}|x_{t:T})=\int q(G_{t-1}|x_0, x_{t:T})q(x_0|x_{t:T})dx_0=
\end{equation}
\begin{equation}
    =\int q(G_{t-1}|G_t, x_0)q(x_0|G_t)dx_0=q(G_{t-1}|G_t)
\end{equation}
\begin{equation}
    q(x_0, G_{t:T})=q(x_0|G_{1:T})\prod_{t=2}^Tq(G_{t-1}|G_{t:T})=q(x_0|G_1)\prod_{t=2}^Tq(G_{t-1}|G_t)=q(x_0)q(G_1|x_0)\prod_{t=2}^Tq(G_t|G_{t-1})
\end{equation}
The last equation holds since the reverse of a Markov chain is also a Markov chain.

This means that a star-shaped diffusion process on $x_{1:T}$ implicitly defines some Markovian diffusion process on the tail statistics $G_{1:T}$.
Due to the definition of $G_{t-1}=G_t+A_{t-1}^\transpose\mathcal{T}(x_{t-1})$, we can write down the following factorization of that process:
\begin{equation}
    q(x_0, G_{t:T})=q(x_0)q(G_T)\prod_{t=2}^{T}q(G_{t-1}|G_t, x_0)
\end{equation}
The posteriors $q(G_{t-1}|G_t, x_0)$ can then be computed by a change of variables:
\begin{equation}
    q(G_{t-1}|G_t, x_0)=\left.q(x_{t-1}|x_0)\right|_{x_{t-1}=\mathcal{T}^{-1}\left(A_{t-1}^{-\transpose}(G_{t-1}-G_t)\right)}\cdot\left|\det\left[\frac{d}{dG_{t-1}}\mathcal{T}^{-1}\left(A_{t-1}^{-\transpose}(G_{t-1}-G_t)\right)\right]\right|
\end{equation}
This also allows us to define the reverse model like it was defined in DDPM:
\begin{equation}
    p_\theta(G_{t-1}|G_t)=\left.q(G_{t-1}|G_t, x_0)\right|_{x_0=x_\theta(G_t, t)}
\end{equation}
This definition is consistent with the reverse process of SS-DDPM $p_\theta(x_{t-1}|x_{t:T})=\left.q(x_{t-1}|x_0)\right|_{x_0=x_{\theta}(G_t, t)}$.
Now that both the forward and reverse processes are defined, we can write down the corresponding variational lower bound.
Because the model is structured exactly like a DDPM, the variational lower bound is going to look the same.
However, we can show that it is equivalent to the variational lower bound of SS-DDPM:
\begin{equation}
\mathcal{L}^\dual(\theta)=\mean_{q(x_0, G_{1:T})}\left[\log p_\theta(x_0|G_1)-\sum_{t=2}^{T}\KL{q(G_{t-1}|G_t, x_0)}{p_{\theta}(G_{t-1}|G_t)}-\KL{q(G_T|x_0)}{p_\theta(G_T)}\right]=
\end{equation}
\begin{equation}
    =\mean_{q(x_0, G_{1:T})}\left[\log p_\theta(x_0|G_1)-\sum_{t=2}^{T}\mean_{q(G_{t-1}|G_t, x_0)}\log\frac{q(G_{t-1}|G_t, x_0)}{p_{\theta}(G_{t-1}|G_t)}-\KL{q(x_T|x_0)}{p_\theta(x_T)}\right]=
\end{equation}
\begin{equation}
    =\mean_{q(x_{0:T})}\left[\log p_\theta(x_0|G_1)-\sum_{t=2}^{T}\mean_{q(x_{t-1}|x_0)}\log\frac{q(x_{t-1}|x_0)}{p_{\theta}(x_{t-1}|G_t)}-\KL{q(x_T|x_0)}{p_\theta(x_T)}\right]=
\end{equation}
\begin{equation}
    =\mean_{q(x_{0:T})}\left[\log p_\theta(x_0|G_1)-\sum_{t=2}^{T}\KL{q(x_{t-1}|x_0)}{p_{\theta}(x_{t-1}|G_t)}-\KL{q(x_T|x_0)}{p_\theta(x_T)}\right]=\mathcal{L}^\SS(\theta)
\end{equation}
As we can see, there are two equivalent ways to write down the model.
One is SS-DDPM, a star-shaped diffusion model, where we only need to define the marginal transition probabilities $q(x_t|x_0)$.
Another way is to rewrite the model in terms of the tail statistics $G_{t:T}$.
This way we obtain a non-Gaussian DDPM that is implicitly defined by the star-shaped model.
Because of this equivalence, we see the star-shaped diffusion of $x_{1:T}$ and the Markovian diffusion of $G_{1:T}$ as a pair of dual processes.
\section{SS-DDPM in different families}
\label{app:different-families}
The main principles for designing a SS-DDPM model are similar to designing a DDPM model.
When $t$ goes to zero, we wish to recover the Dirac delta, centered at $x_0$:
\begin{equation}
    q^\SS(x_t|x_0)\xrightarrow[t\to0]{}\delta(x_t-x_0)
    \label{eq:cond-0}
\end{equation}
When $t$ goes to $T$, we wish to obtain some standard distribution that doesn't depend on $x_0$:
\begin{equation}
    q^\SS(x_t|x_0)\xrightarrow[t\to T]{}q^\SS_T(x_T)
    \label{eq:cond-T}
\end{equation}
In exponential families with linear parameterization of the natural parameter $\eta_t(x_0)=a_tf(x_0)+b_t$, we can define the schedule by choosing the parameters $a_t$ and $b_t$ that satisfy conditions~(\ref{eq:cond-0}--\ref{eq:cond-T}).
After that, we can use Theorem~\ref{theorem:memory} to define the tail statistic $ \mathcal{G}_t(x_{t:T})$ using the sufficient statistic $\mathcal{T}(x_t)$ of the corresponding family.
However, as shown in the following sections, in some cases linear parameterization admits a simpler sufficient tail statistic.

The following sections contain examples of defining the SS-DDPM model for different families.
These results are summarized in Table~\ref{tab:different-families}.

\subsection{Gaussian}
\label{app:gaussian}
\begin{equation}
    q^\SS(x_t|x_0)=\Normal{x_t; \sqrt{\al_t^\SS}x_0, (\omal_t^\SS)I}
\end{equation}
Since Gaussian SS-DDPM is equivalent to a Markovian DDPM, it is natural to directly reuse the schedule from a Markovian DDPM.
As we show in Theorem~\ref{theorem:equivalence}, given a Markovian DDPM defined by $\al_t^\DDPM$, the following parameterization will produce the same process in the space of tail statistics:
\begin{align}
\frac{\al^\SS_t}{1-\al^\SS_t}&=
\frac{\al^\DDPM_{t}}{1-\al^\DDPM_{t}}-\frac{\al^\DDPM_{t+1}}{1-\al^\DDPM_{t+1}}\\
\mathcal{G}_t(x_{t:T})&=
\frac{\omal^\DDPM_{t}}{\sqrt{\al^\DDPM_{t}}}\sum_{s=t}^T\frac{\sqrt{\al^\SS_s}x_s}{\omal^\SS_s}
\end{align}
The KL divergence is computed as follows:
\begin{equation}
\KL{q^\SS(x_t|x_0)}{p\tSS(x_t|G_t)}=
\frac{\overline{\alpha}_t^\SS(x_0-x_\theta)^2}{2(1-\overline{\alpha}_t^\SS)}
\end{equation}

\subsection{Beta}
\label{app:beta}
\begin{equation}
q(x_t|x_0)=\mathrm{Beta}(x_t; \alpha_t, \beta_t)
\end{equation}
There are many ways to define a noising schedule.
We choose to fix the mode of the distribution at $x_0$ and introduce a concentration parameter $\nu_t$, parameterizing $\alpha_t=1+\nu_tx_0$ and $\beta_t=1+\nu_t(1-x_0)$.
By setting $\nu_t$ to zero, we recover a uniform distribution, and by setting it to infinity we obtain the Dirac delta, centered at $x_0$.
Generally, the Beta distribution has a two-dimensional sufficient statistic $\mathcal{T}(x)=\begin{pmatrix}
    \log x\\\log (1-x)
\end{pmatrix}$.
However, under this parameterization, we can derive a one-dimensional tail statistic:
\begin{equation}
q(x_t|x_0)\propto\exp\left\{(\alpha_t-1)\log x_t + (\beta_t-1)\log (1-x_t)\right\}=
\exp\left\{\nu_tx_0\log x_t + \nu_t(1-x_0)\log (1-x_t)\right\}=
\end{equation}
\begin{equation}
=\exp\left\{\textcolor{Red}{\nu_tx_0}\textcolor{Blue}{\log \frac{x_t}{1-x_t}} + \nu_t\log (1-x_t)\right\}
=\exp\left\{\textcolor{Red}{\eta_t(x_0)}\textcolor{Blue}{\mathcal{T}(x_t)} + \log h_t(x_t)\right\}
\end{equation}
Therefore we can use $\eta_t(x_0)=\nu_t x_0$ and $\mathcal{T}(x)=\log\frac{x}{1-x}$ to define the tail statistic:
\begin{equation}
    \mathcal{G}_t(x_{t:T})=\sum_{s=t}^T\nu_s\log\frac{x_s}{1-x_s}
\end{equation}
The KL divergence can then be calculated as follows:
\begin{equation}
\KL{q^\SS(x_t|x_0)}{p\tSS(x_t|G_t)}=\log\frac{\mathrm{Beta}(\alpha_t(x_\theta), \beta_t(x_\theta))}{\mathrm{Beta}(\alpha_t(x_0), \beta_t(x_0))}+\nu_t(x_0-x_\theta)(\psi(\alpha_t(x_0))-\psi(\beta_t(x_0)))
\end{equation}

\subsection{Dirichlet}
\label{app:dirichlet}

\begin{figure}[t]
\centering
\includegraphics[width=\columnwidth]{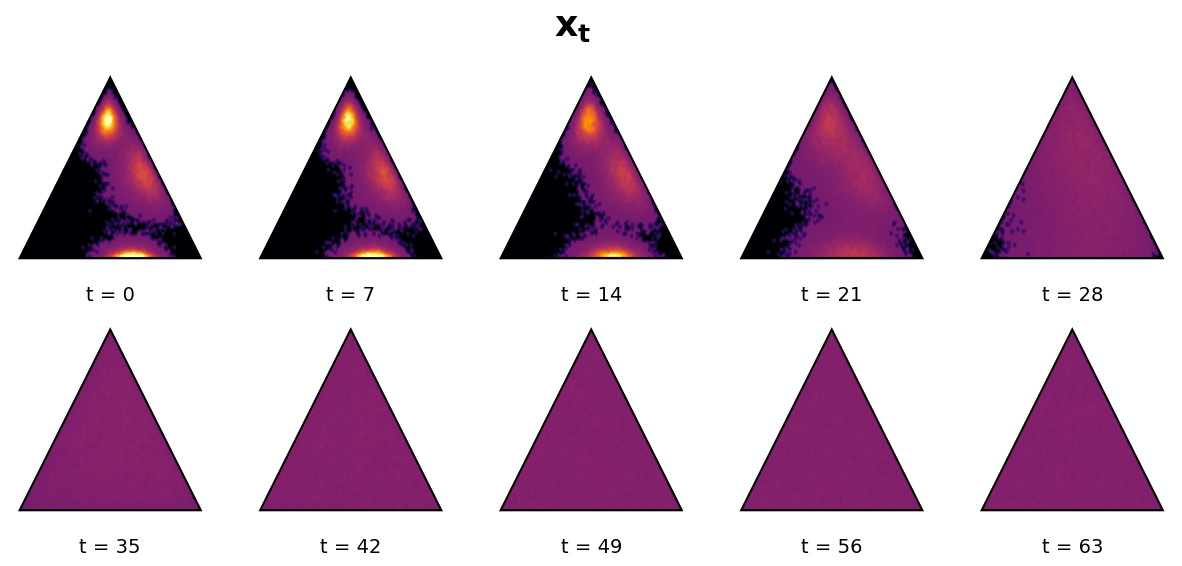}
\caption{Visualization of the forward process in Dirichlet SS-DDPM on a three-dimensional probabilistic simplex.}
\label{fig:dirichlet-forward}
\end{figure}

\begin{equation}
q(x_t|x_0)=\mathrm{Dirichlet}(x_t; \alpha_t^1, \dots, \alpha_t^K)
\end{equation}
Similarly to the Beta distribution, we choose to fix the mode of the distribution at $x_0$ and introduce a concentration parameter $\nu_t$, parameterizing $\alpha_t^k=1+\nu_tx_0^k$.
This corresponds to using the natural parameter $\eta_t(x_0)=\nu_tx_0$.
By setting $\nu_t$ to zero, we recover a uniform distribution, and by setting it to infinity we obtain the Dirac delta, centered at $x_0$.
We can use the sufficient statistic $\mathcal{T}(x)=(\log x^1, \dots, \log x^K)^\transpose$ to define the tail statistic:
\begin{equation}
    \mathcal{G}_t(x_{t:T})=\sum_{s=t}^T\nu_s(\log x_s^1, \dots, \log x_s^K)^\transpose 
\end{equation}
The KL divergence can then be calculated as follows:
\begin{equation}
\KL{q^\SS(x_t|x_0)}{p\tSS(x_t|G_t)}=\sum_{k=1}^K\left[\log\frac{\Gamma(\alpha_t^k(x_\theta))}{\Gamma(\alpha_t^k(x_0))}+\nu_t(x_0^k-x_\theta^k)\psi(\alpha_t^k(x_0))\right]
\end{equation}

\subsection{Categorical}
\label{app:categorical}
\begin{equation}
q(x_t|x_0)=\mathrm{Cat}(x_t; p_t)
\end{equation}
In this case, we mimic the definition of the categorical diffusion model, used in D3PM.
The noising process is parameterized by the probability vector $p_t=x_0\overline{Q}_t$.
By setting $\overline{Q}_0$ to identity, we recover the Dirac delta, centered at $x_0$.

In this parameterization, the natural parameter admits linearization (after some notation abuse):
\begin{equation}
    \eta_t(x_0)=\log(x_0\overline{Q}_t)=x_0\log\overline{Q}_t\text{, where}
\end{equation}
$\log$ is taken element-wise, and we assume $0\log 0=0$.

The sufficient statistic here is a vector $x_t^\transpose$.
Therefore, the tail statistic can be defined as follows:
\begin{equation}
\mathcal{G}_t(x_{t:T})=
\sum_{s=t}^T\left(\log\overline{Q}_s\cdot x_s^\transpose\right)=
\sum_{s=t}^T\log(\overline{Q}_sx_s^\transpose)
\end{equation}
The KL divergence can then be calculated as follows:
\begin{equation}
\KL{q^\SS(x_t|x_0)}{p\tSS(x_t|G_t)}=\sum_{i=1}^D(x_0\overline{Q}_t)_i\log\frac{(x_0\overline{Q}_t)_i}{(x_\theta\overline{Q}_t)_i}
\end{equation}

Note that, unlike in the Gaussian case, the categorical star-shaped diffusion is not equivalent to the categorical DDPM.
The main difference here is that the input $G_t$ to the predictive model $x_\theta(G_t)$ is now a continuous vector instead of a one-hot vector.

As discussed in Section~\ref{app:tail-norm}, proper normalization is important for training SS-DDPMs.
In the case of Categorical distributions, we can use the $\mathrm{SoftMax}(\cdot)$ function to normalize the tail statistics without breaking sufficiency:
\begin{equation}
    \tilde{G}_t=\mathrm{SoftMax}\left(\mathcal{G}_t(x_{t:T})\right)
\end{equation}
To see this, we can retrace the steps of the proof of Theorem~\ref{theorem:memory}:
\begin{equation}
    q^\SS(x_0|x_{t:T})\propto q(x_0)\exp\left\{x_0G_t\right\}=q(x_0)\exp\left\{x_0\log\tilde{G}_t+\log\sum_{i=1}^d\exp{(G_t)_i}\right\}\propto q(x_0)\exp\left\{x_0\log\tilde{G}_t\right\}
\end{equation}
Therefore,
\begin{equation}
    q^\SS(x_0|x_{t:T})=\frac{q(x_0)\exp\left\{x_0\log\tilde{G}_t\right\}}{\sum_{\tilde{x}_0} q(\tilde{x}_0)\exp\left\{\tilde{x}_0\log\tilde{G}_t\right\}}=q^\SS(x_0|\tilde{G}_t)
\end{equation}

Also, Categorical distributions admit a convenient way to compute fractions involving $q(G_t|x_0)$:
\begin{equation}
    q(G_t|x_0)=\sum_{x_{t:T}}\left(\prod_{s\geq t}q(x_s|x_0)\right)\mathbbm{1}\left[G_t=\mathcal{G}_t(x_{t:T})\right]=
\end{equation}
\begin{equation}
    =\sum_{x_{t:T}}\exp\left\{\sum_{s\geq t}x_0\log(\overline{Q}_s)x_s^\transpose\right\}\mathbbm{1}\left[G_t=\mathcal{G}_t(x_{t:T})\right]=
    \sum_{x_{t:T}}\exp\left\{x_0G_t\right\}\mathbbm{1}\left[G_t=\mathcal{G}_t(x_{t:T})\right]=
\end{equation}
\begin{equation}
    =\exp\left\{x_0G_t\right\}\sum_{x_{t:T}}\mathbbm{1}\left[G_t=\mathcal{G}_t(x_{t:T})\right]=
    \exp\left\{x_0G_t\right\}\cdot\#_{G_t}
\end{equation}

This allows us to define the likelihood term $p(x_0|x_{1:T})$ as follows:
\begin{equation}
    p(x_0|G_1)=\frac{q(G_1|x_0)\tilde{p}(x_0)}{\sum_{\tilde{x}_0}q(G_1|\tilde{x}_0)\tilde{p}(\tilde{x}_0)}=\frac{\exp\left\{x_0G_1\right\}\tilde{p}(x_0)}{\sum_{\tilde{x}_0}\exp\left\{\tilde{x}_0G_1\right\}\tilde{p}(\tilde{x}_0)},
\end{equation}
where $\tilde{p}(x_0)$ can be defined as the frequency of the token $x_0$ in the dataset.

It also allows us to estimate the mutual information between $x_0$ and $G_t$:
\begin{equation}
    I(x_0; G_t)=\mean_{q(x_0)q(G_t|x_0)}\log\frac{q(G_t|x_0)}{q(G_t)}=\mean_{q(x_0)q(G_t|x_0)}\log\frac{q(G_t|x_0)}{\sum_{\tilde{x}_0}q(G_t|\tilde{x}_0)q(\tilde{x}_0)}=
\end{equation}
\begin{equation}
    =\mean_{q(x_0)q(G_t|x_0)}\log\frac{\exp\left\{x_0G_t\right\}\cdot\#_{G_t}}{\sum_{\tilde{x}_0}\exp\left\{\tilde{x}_0G_t\right\}\cdot\#_{G_t}\cdot q(\tilde{x}_0)}=\mean_{q(x_0)q(G_t|x_0)}\log\frac{\exp\left\{x_0G_t\right\}}{\sum_{\tilde{x}_0}\exp\left\{\tilde{x}_0G_t\right\}q(\tilde{x}_0)}
\end{equation}
It can be estimated using Monte Carlo.
We use it when defining the noising schedule for Categorical SS-DDPM.

\subsection{von Mises}
\label{app:von-mises}
\begin{equation}
    q(x_t|x_0)=\mathrm{vonMises}(x_t; x_0, \kappa_t)
\end{equation}
The von Mises distribution has two parameters, the mode $x$ and the concentration $\kappa$.
It is natural to set the mode of the noising distribution to $x_0$ and vary the concentration parameter $\kappa_t$.
When $\kappa_t$ goes to infinity, the von Mises distribution approaches the Dirac delta, centered at $x_0$.
When $\kappa_t$ goes to 0, it approaches a uniform distribution on a unit circle.
The sufficient statistic is $\mathcal{T}(x_t)=\begin{pmatrix}
    \cos{x_t}\\\sin{x_t}
\end{pmatrix}$, and the corresponding natural parameter is $\eta_t(x_0)=\kappa_t\begin{pmatrix}
    \cos{x_0}\\\sin{x_0}
\end{pmatrix}$.
The tail statistic $\mathcal{G}_t(x_{t:T})$ is therefore defined as follows:
\begin{equation}
    \mathcal{G}_t(x_{t:T})=\sum_{s=t}^T\kappa_s \begin{pmatrix}
    \cos{x_s}\\\sin{x_s}
\end{pmatrix}
\end{equation}
The KL divergence term can be calculated as follows:
\begin{equation}
\KL{q^\SS(x_t|x_0)}{p\tSS(x_t|G_t)}=\kappa_t \frac{I_1(\kappa_t)}{I_0(\kappa_t)}(1-\cos(x_0-x_\theta))
\end{equation}

\subsection{von Mises--Fisher}
\label{app:von-mises-fisher}

\begin{figure}[t]
\centering
\includegraphics[width=\columnwidth]{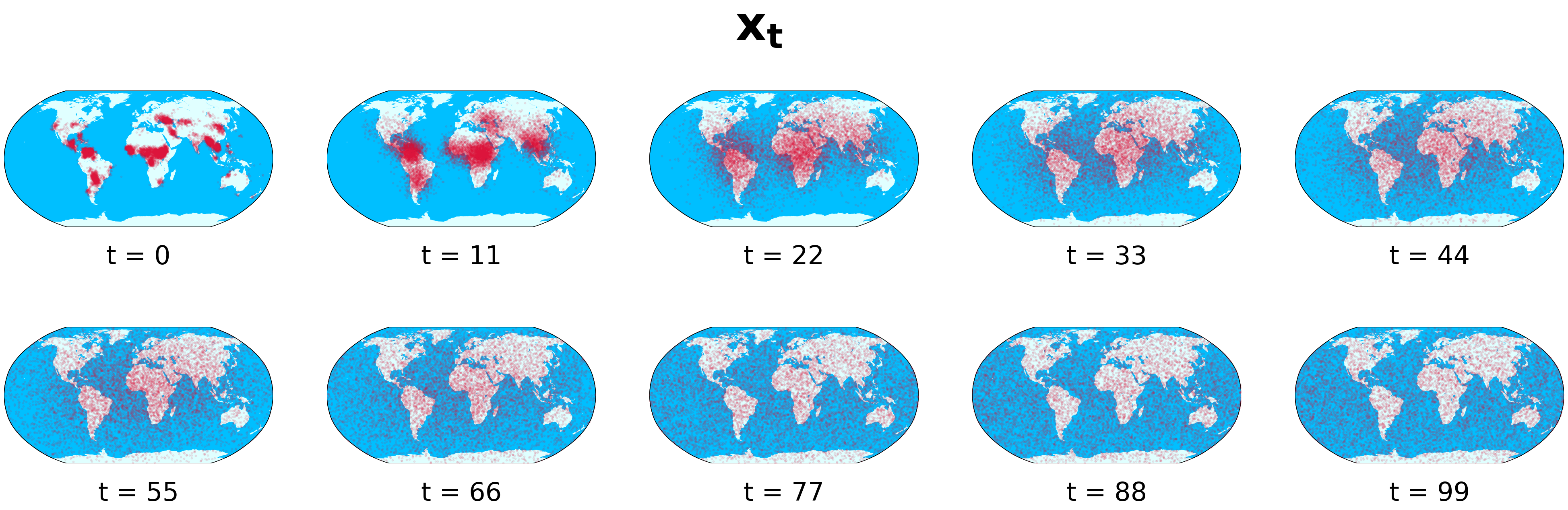}
\caption{Visualization of the forward process of the von Mises--Fisher SS-DDPM on the three-dimensional unit sphere.}
\label{fig:vmf-forward}
\end{figure}

\begin{equation}
    q(x_t|x_0)=\mathrm{vMF}(x_t; x_0, \kappa_t)
\end{equation}
Similar to the one-dimensional case, we set the mode of the distribution to $x_0$, and define the schedule using the concentration parameter $\kappa_t$.
When $\kappa_t$ goes to infinity, the von Mises--Fisher distribution approaches the Dirac delta, centered at $x_0$.
When $\kappa_t$ goes to 0, it approaches a uniform distribution on a unit sphere.
The sufficient statistic is $\mathcal{T}(x)=x$, and the corresponding natural parameter is $\eta_t(x_0)=\kappa_tx_0$.
The tail statistic $\mathcal{G}_t(x_{t:T})$ is therefore defined as follows:
\begin{equation}
    \mathcal{G}_t(x_{t:T})=\sum_{s=t}^T\kappa_s x_s
\end{equation}
The KL divergence term can be calculated as follows:
\begin{equation}
\KL{q^\SS(x_t|x_0)}{p\tSS(x_t|G_t)}=\kappa_t \frac{I_{K/2}(\kappa_t)}{I_{K/2-1}(\kappa_t)}x_0^\transpose (x_0-x_\theta)
\end{equation}

\subsection{Gamma}
\label{app:gamma}
\begin{equation}
    q(x_t|x_0)=\Gamma(x_t; \alpha_t, \beta_t)
\end{equation}

There are many ways to define a schedule.
We choose to interpolate the mean of the distribution from $x_0$ at $t=0$ to $1$ at $t=T$.
This can be achieved with the following parameterization:
\begin{align}
    \beta_t(x_0)&=\alpha_t(\xi_t+(1-\xi_t)x_0^{-1})
\end{align}
The mean of the distribution is $\frac{\alpha_t}{\beta_t}$, and the variance is $\frac{\alpha_t}{\beta_t^2}$.
Therefore, we recover the Dirac delta, centered at $x_0$, when we set $\xi_t$ to $0$ and $\alpha_t$ to infinity.
To achieve some standard distribution that doesn't depend on $x_0$, we can set $\xi_t$ to $1$ and $\alpha_t$ to some fixed value $\alpha_T$.

In this parameterization, the natural parameters are $\alpha_t-1$ and $-\beta_t(x_0)$, and the corresponding sufficient statistics are $\log x_t$ and $x_t$.
Since the parameter $\alpha_t$ doesn't depend on $x_0$, we only need the sufficient statistic $\mathcal{T}(x_t)=x_t$ to define the tail statistic:
\begin{equation}
    \mathcal{G}(x_{t:T})=\sum_{s=t}^T\alpha_s(1-\xi_s)x_s
\end{equation}
The KL divergence can be computed as follows:
\begin{equation}
\KL{q^\SS(x_t|x_0)}{p\tSS(x_t|G_t)}=
\alpha_t\left[\log\frac{\beta_t(x_0)}{\beta_t(x_\theta)}+\frac{\beta_t(x_\theta)}{\beta_t(x_0)}-1\right]
\end{equation}

\subsection{Wishart}
\label{app:wishart}

\begin{figure}[t]
\centering
\includegraphics[width=\columnwidth]{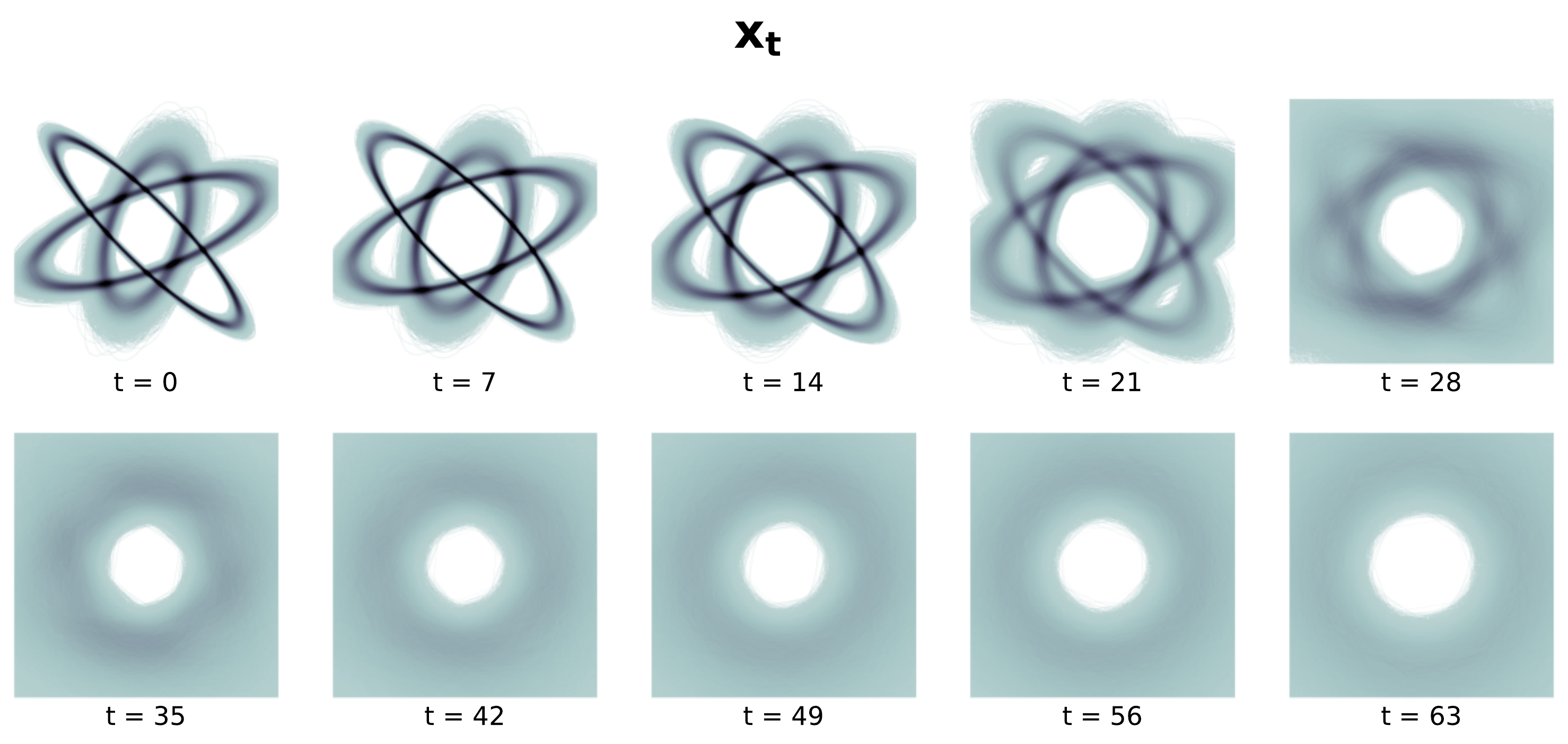}
\caption{Visualization of the forward process in Wishart SS-DDPM on positive definite matrices of size $2 \times 2$.}
\label{fig:wishart-forward}
\end{figure}

\begin{equation}
    q(X_t|X_0)=\mathcal{W}_p(X_t; V_t, n_t)
\end{equation}
The natural parameters for the Wishart distribution are $-\frac12V_t^{-1}$ and $\frac{n_t-p-1}{2}$.
To achieve linear parameterization, we need to linearly parameterize the inverse of $V_t$ rather than $V_t$ directly.
Similar to the Gamma distribution, we interpolate the mean of the distribution from $X_0$ at $t=0$ to $I$ at $t=T$.
This can be achieved with the following parameterization:
\begin{align}
    \mu_t(X_0)&=\xi_tI+(1-\xi_t)X_0^{-1}\\
    V_t(X_0)&=n_t^{-1}\mu_t^{-1}(X_0)
\end{align}
We recover the Dirac delta, centered at $X_0$, when we set $\xi_t$ to $0$ and $n_t$ to infinity.
To achieve some standard distribution that doesn't depend on $X_0$, we set $\xi_t$ to $1$ and $n_t$ to some fixed value $n_T$.

The sufficient statistic for this distribution is $\mathcal{T}(X_t)=\begin{pmatrix}X_t\\\log|X_t|\end{pmatrix}$.
Since the parameter $n_t$ doesn't depend on $X_0$, we don't need the corresponding sufficient statistic $\log|X_t|$ and can use $\mathcal{T}(X_t)=X_t$ to define the tail statistic:
\begin{equation}
    \mathcal{G}(X_{t:T}, t)=\sum_{s=t}^{T}n_s(1-\xi_s)X_s
\end{equation}
The KL divergence can then be calculated as follows:
\begin{equation}
\KL{q^\SS(x_t|x_0)}{p\tSS(x_t|G_t)}=
-\frac{n_t}2\left[\log\left|V_t^{-1}(X_\theta)V_t(X_0)\right|-\mathrm{tr}\left(V_t^{-1}(X_\theta)V_t(X_0)\right)+p\right]
\end{equation}

\section{Choosing the noise schedule}
\label{app:noise-schedule}
In DDPMs, we train a neural network to predict $x_0$ given the current noisy sample $x_t$.
In SS-DDPMs, we use the tail statistic $G_t$ as the neural network input instead.
Therefore, it is natural to search for a schedule, where the ``level of noise'' in $G_t$ is similar to the ``level of noise'' in $x_t$ from some DDPM model.
Similarly to D3PM, we formalize the ``level of noise'' as the mutual information between the clean and noisy samples.
For SS-DDPM it would be $I^\SS(x_0; G_t)$, and for DDPM it would be $I^\DDPM(x_0; x_t)$.
We would like to start from a well-performing DDPM model and define a similar schedule by matching $I^\SS(x_0; G_t)\approx I^\DDPM(x_0; x_t)$.
Since Gaussian SS-DDPM is equivalent to DDPM, the desired schedule can be found in Theorem~\ref{theorem:equivalence}.
In other cases, however, it is difficult to find a matching schedule.

In our experiments, we found the following heuristic to work well enough.
First, we find a Gaussian SS-DDPM schedule that is equivalent to the DDPM with the desired schedule.
We denote the corresponding mutual information as $I^\SS_\mathcal{N}(x_0; G_t)=I^\DDPM(x_0; x_t)$.
Then, we can match it in the original space of $x_t$ and hope that the resulting mutual information in the space of tail statistics is close too:
\begin{equation}
I^\SS(x_0; x_t)\approx I^\SS_\mathcal{N}(x_0; x_t)
\xRightarrow{?}
I^\SS(x_0; G_t)\approx I^\SS_\mathcal{N}(x_0; G_t)
\end{equation}
Assuming the schedule is parameterized by a single parameter $\nu$, we can build a look-up table $I^\SS(x_0; x_\nu)$ for a range of parameters $\nu$.
Then we can use binary search to build a schedule to match the mutual information $I^\SS(x_0; x_t)$ to the mutual information schedule $I^\SS_\mathcal{N}(x_0; x_t)$.
While this procedure doesn't allow to match the target schedule \emph{exactly}, it provides a good enough approximation and allows to obtain an adequate schedule.
We used this procedure to find the schedule for the Beta diffusion in our experiments with image generation.

To build the look-up table, we need a robust way to estimate the mutual information.
The target mutual information $I^\SS_\mathcal{N}(x_0; x_t)$ can be computed analytically when the data $q(x_0)$ follows a Gaussian distribution.
When the data follows an arbitrary distribution, it can be approximated with a Gaussian mixture and the mutual information can be calculated using numerical integration.
Estimating the mutual information for arbitrary noising distributions is more difficult.
We find that the Kraskov estimator \citep{kraskov2004estimating} works well when the mutual information is high ($I > 2$).
When the mutual information is lower, we build a different estimator using DSIVI bounds \citep{molchanov2019doubly}.
\begin{equation}
    I^\SS(x_0; x_\nu)=\KL{q^\SS(x_0, x_t)}{q^\SS(x_0)q^\SS(x_t)}=\mathcal{H}[x_t]-\mathcal{H}[x_t|x_0]
\end{equation}
This conditional entropy is available in closed form for many distributions in the exponential family.
Since the marginal distribution $q^\SS(x_0, x_t)=\int q^\SS(x_t|x_0)q(x_0)dx_0$ is a semi-implicit distribution \citep{yin2018semi}, we can use the DSIVI sandwich \citep{molchanov2019doubly} to obtain an upper and lower bound on the marginal entropy $\mathcal{H}[x_t]$:
\begin{equation}
\mathcal{H}[x_t]=-\mean_{q^\SS}\log q^\SS(x_t)=-\mean_{q^\SS}\log \int q^\SS(x_t|x_0)q(x_0)dx_0
\end{equation}
\begin{align}
\mathcal{H}[x_t]&\geq -\mean_{x_0^{0:K}\sim q(x_0)}\mean_{x_t\sim q(x_t|x_0)|_{x_0=x_0^0}}\log\frac1{K+1}\sum_{k=0}^Kq^\SS(x_t|x_0^k)\\
\mathcal{H}[x_t]&\leq -\mean_{x_0^{0:K}\sim q(x_0)}\mean_{x_t\sim q(x_t|x_0)|_{x_0=x_0^0}}\log\frac1K\sum_{k=1}^Kq^\SS(x_t|x_0^k)
\end{align}
These bounds are asymptotically exact and can be estimated using Monte Carlo.
We use $K=1000$ when the mutual information is high ($\frac12\leq I<2$), $K=100$ when the mutual information is lower ($0.002\leq I<\frac12$), and estimate the expectations using $M=10^8K^{-1}$ samples for each timestamp.
For values $I>2$ we use the Kraskov estimator with $M=10^5$ samples and $k=10$ neighbors.
For values $I<0.002$ we fit an exponential curve $i(t)=e^{at+b}$ to interpolate between the noisy SIVI estimates, obtained with $K=50$ and $M=10^5$.

For evaluating the mutual information $I(x_0; G_t)$ between the clean data and the tail statistics, we use the Kraskov estimator with $k=10$ and $M=10^5$.

The mutual information look-up table for the Beta star-shaped diffusion, as well as the used estimations of the mutual information, are presented in Figure~\ref{fig:mi-lut-beta}.
The resulting schedule for the beta diffusion is presented in Figure~\ref{fig:schedule-beta}, and the comparison of the mutual information schedules for the tail statistics between Beta SS-DDPM and the referenced Gaussian SS-DDPM is presented in Figure~\ref{fig:mi-tail-beta}.

For Categorical SS-DDPM we estimate the mutual information using Monte Carlo.
We then choose the noising schedule to match the cosine schedule used by \citet{hoogeboom2021argmax} using a similar technique.

\begin{figure*}[t!]
\centering
\includegraphics[width=0.6\textwidth]{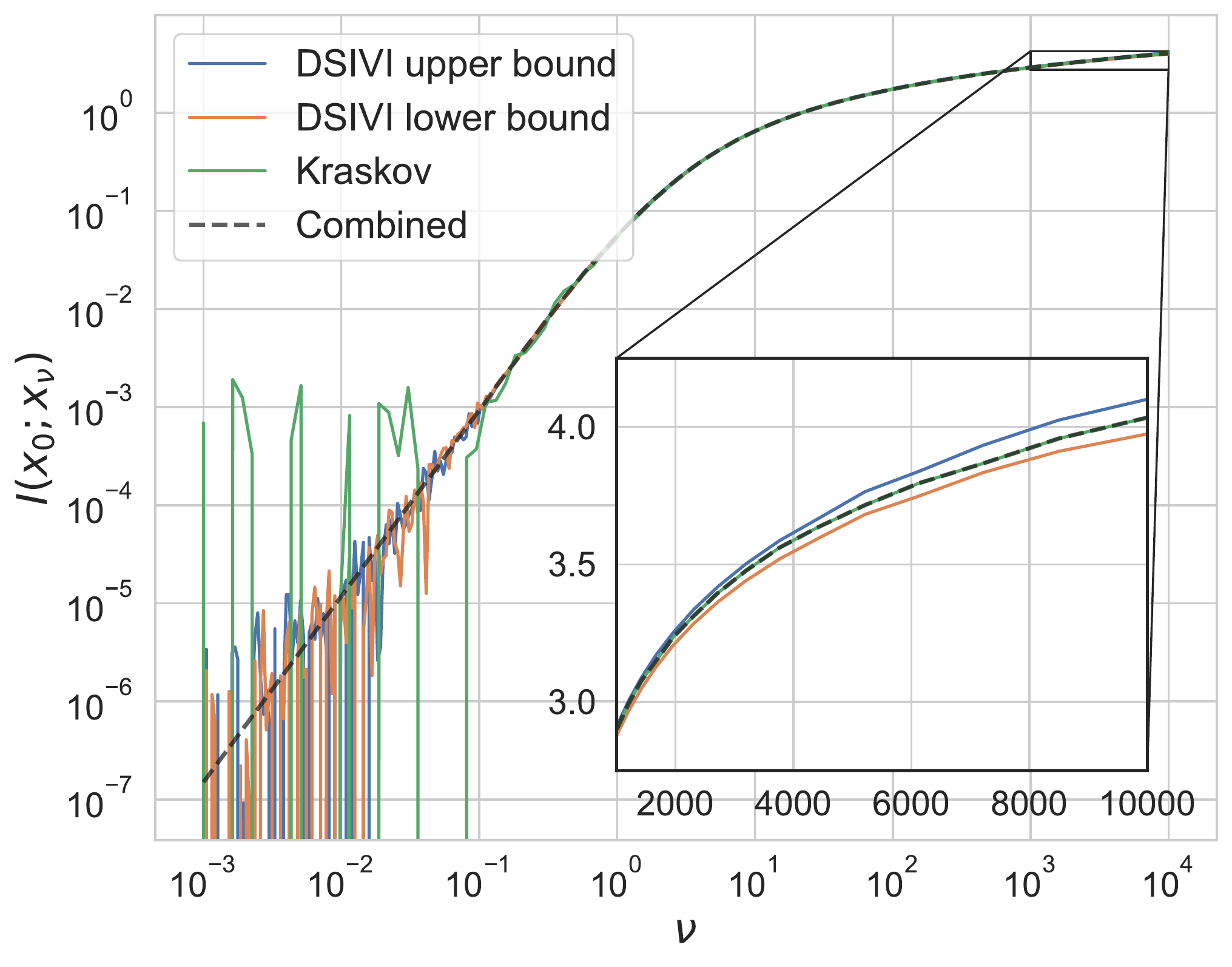}
\caption{
The mutual information look-up table for Beta star-shaped diffusion.
}
\label{fig:mi-lut-beta}
\end{figure*}

\begin{figure*}[t!]
\centering
\begin{minipage}[t]{0.45\textwidth}
\includegraphics[width=\textwidth]{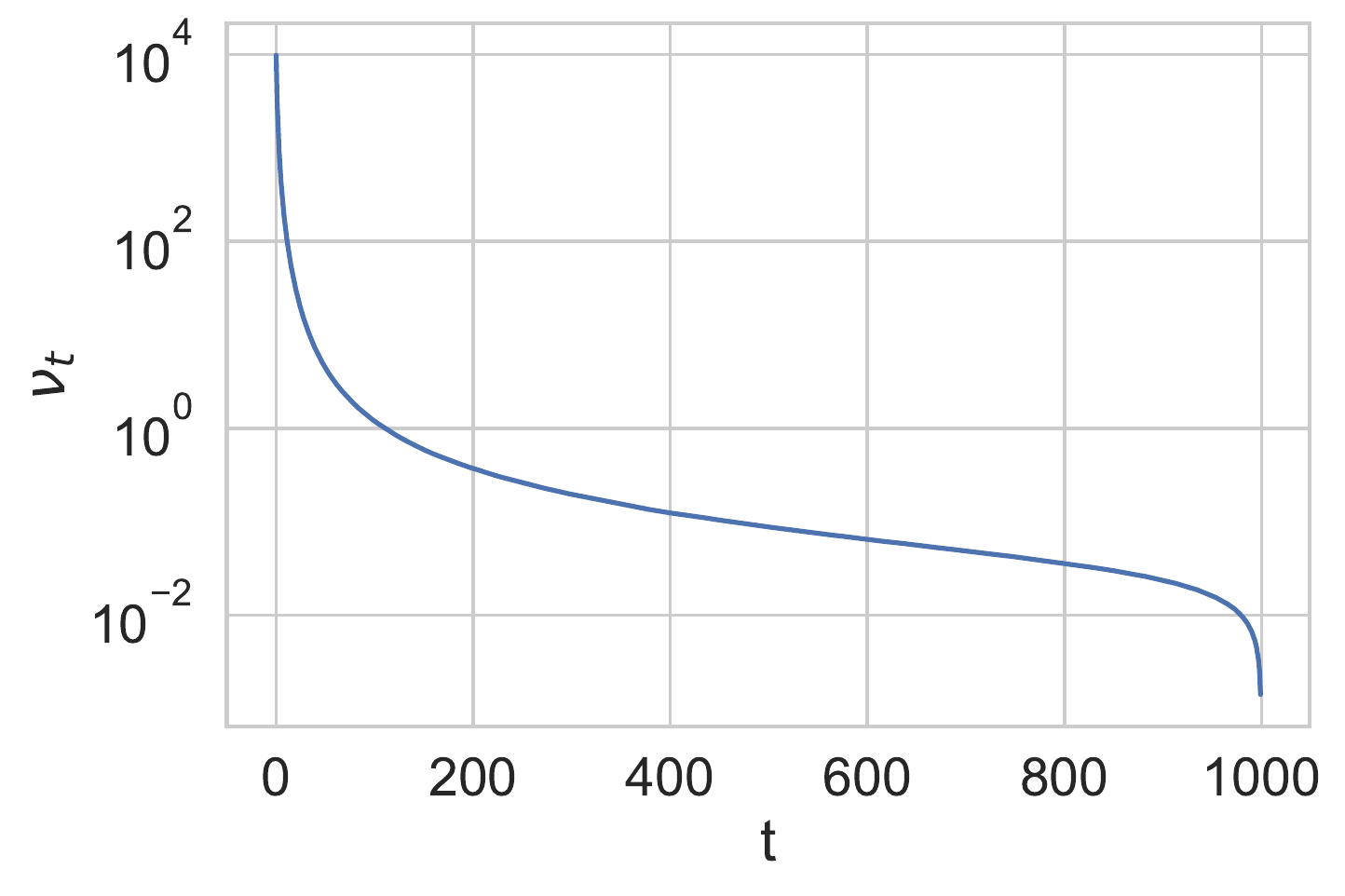}
\captionof{figure}{
The schedule for Beta star-shaped diffusion.
}\label{fig:schedule-beta}
\end{minipage}
\hspace{0.05\textwidth}
\begin{minipage}[t]{.45\textwidth}
\includegraphics[width=\textwidth]{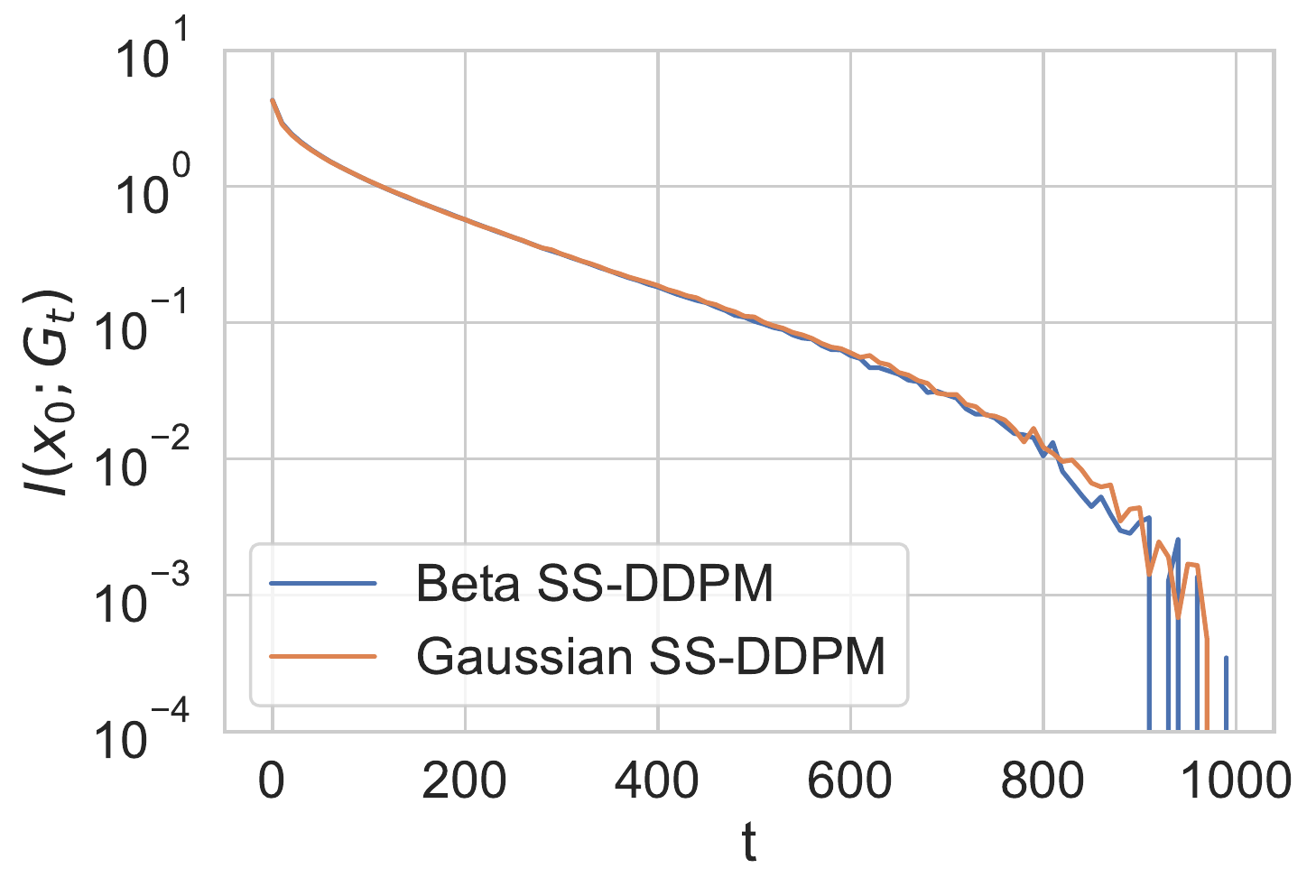}
\captionof{figure}{
Mutual information between clean data and the tail statistics for beta star-shaped diffusion.
}\label{fig:mi-tail-beta}
\end{minipage}
\end{figure*}

\section{Normalizing the tail statistics}
\label{app:tail-norm}
We illustrate different strategies for normalizing the tail statistics in Figure~\ref{fig:tail-normalization}.
Normalizing by the sum of coefficients is not enough, therefore we resort to matching the mean and the variance empirically.
We refer to this trick as time-dependent tail normalization.

Also, proper normalization allows us to visualize the tail statistics by projecting them back into the original domain using $\mathcal{T}^{-1}(\tilde{G}_t)$.
The effect of normalization is illustrated in Figure~\ref{fig:tail-visualization}.

\begin{figure*}[t!]
    \centering
    \captionsetup[subfigure]{justification=centering}
    \begin{minipage}[t]{0.3\columnwidth}
        \centering
        \includegraphics[width=\textwidth]{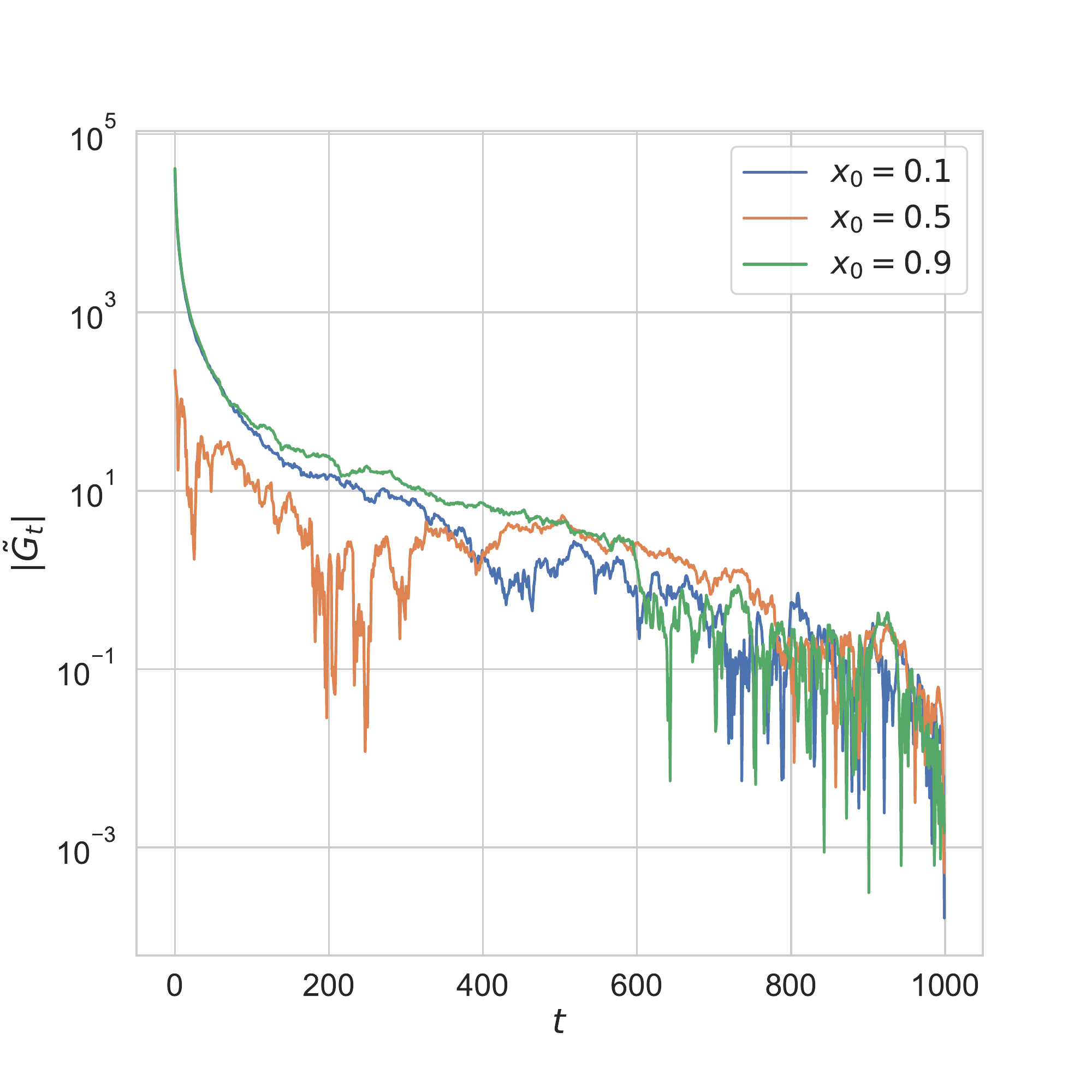}
        \subcaption{No normalization\\$\tilde{G}_t=G_t$}
    \end{minipage}
    \begin{minipage}[t]{0.3\columnwidth}
        \centering
        \includegraphics[width=\textwidth]{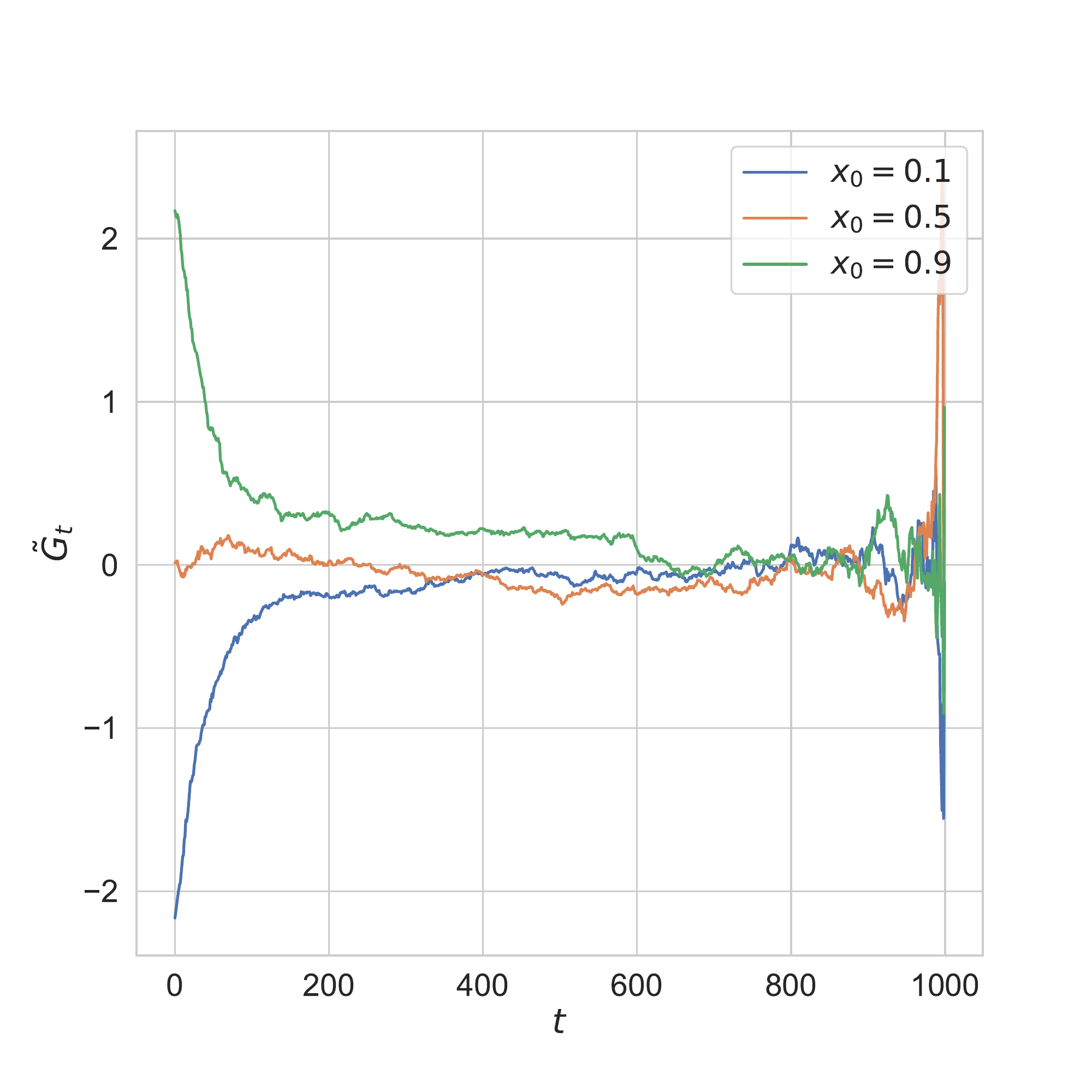}
        \subcaption{Normalized by the sum of coefficients\\$\tilde{G}_t=\frac{G_t}{\sum_{s=t}^Ta_s}$}
    \end{minipage}
    \begin{minipage}[t]{0.3\columnwidth}
        \centering
        \includegraphics[width=\textwidth]{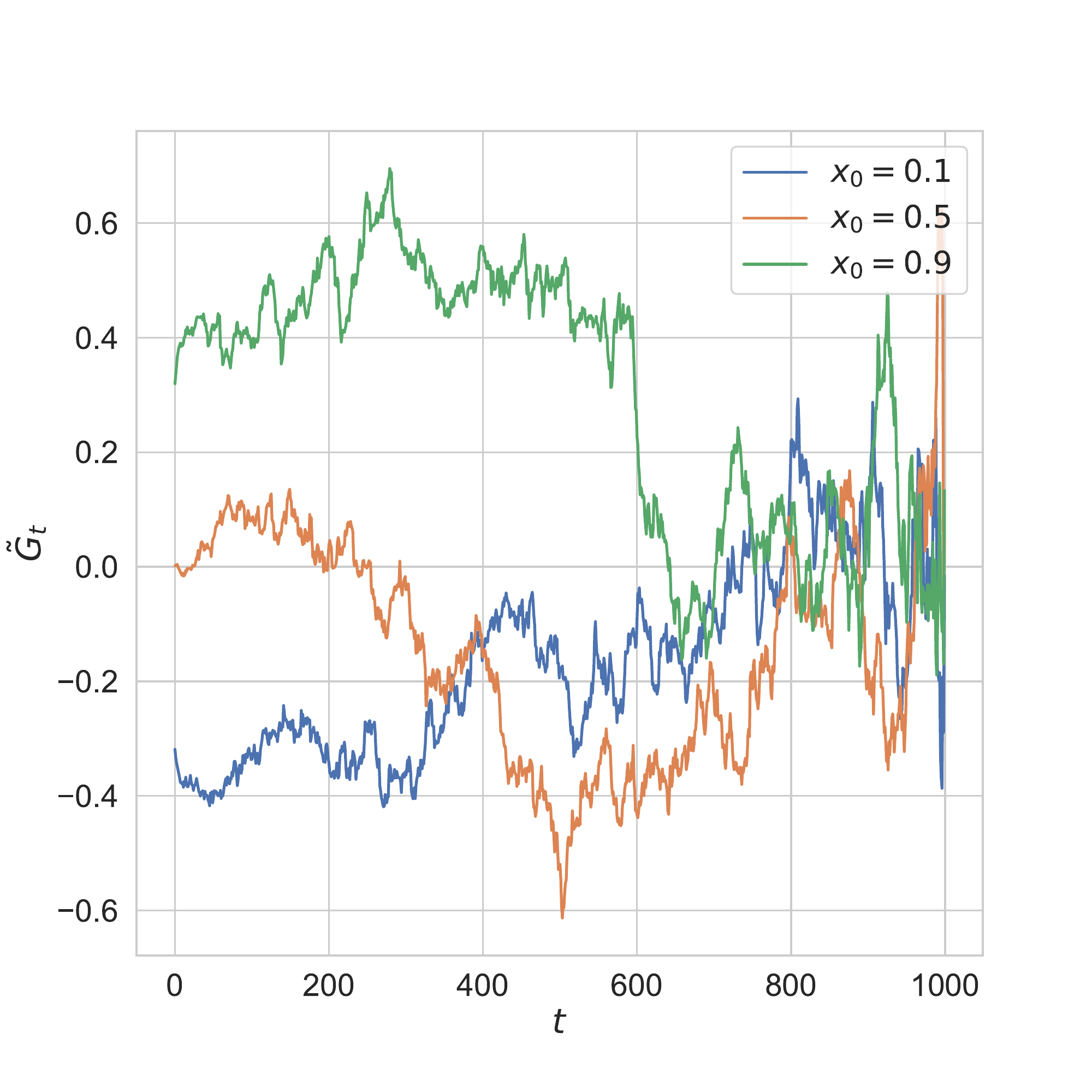}
        \subcaption{Matched the mean and variance\\$\tilde{G}_t=(G_t-\mean {G_t})\frac{\mathrm{std}(\mathcal{T}(x_0))}{\mathrm{std}(G_t)} + \mean {\mathcal{T}(x_0)}$}
    \end{minipage}
    \caption{Example trajectories of the normalized tail statistics with different normalization strategies. The trajectories are coming from a single dimension of Beta SS-DDPM.}
    \label{fig:tail-normalization}
\end{figure*}

\begin{figure}
    \centering
    \includegraphics[width=0.7\textwidth]{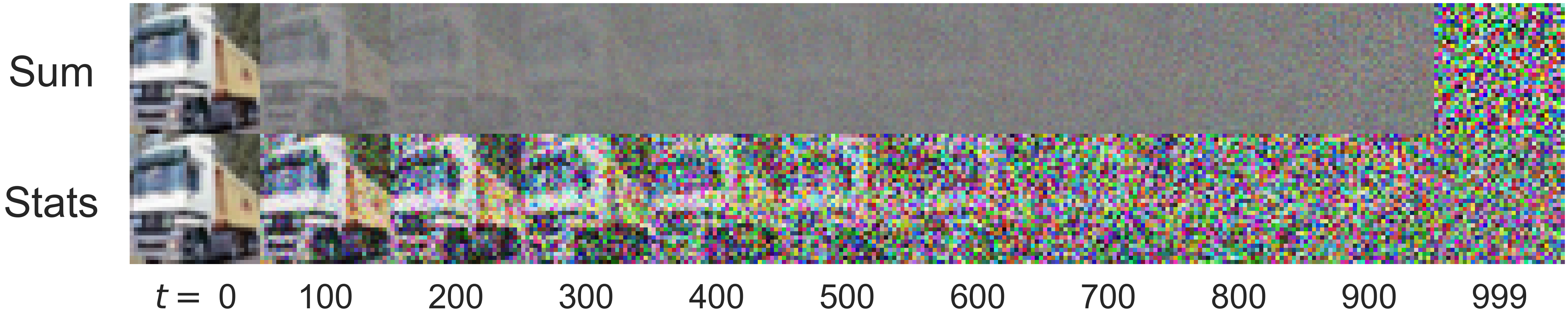}
    \caption{Visualizing the tail statistics for Beta SS-DDPM with different normalization by mapping the normalized tail statistics $\tilde{G}_t$ into the data domain using $\mathcal{T}^{-1}(\tilde{G}_t)$. Top row: normalized by the sum of coefficients, $\tilde{G}_t=\frac{G_t}{\sum_{s=t}^Ta_s}$. Bottom row: matched the mean and variance, $\tilde{G}_t=(G_t-\mean {G_t})\frac{\mathrm{std}(\mathcal{T}(x_0))}{\mathrm{std}(G_t)} + \mean {\mathcal{T}(x_0)}$}
    \label{fig:tail-visualization}
\end{figure}

\section{Synthetic data}
\label{app:synthetic-exp}
We compare the performance of DDPM and SS-DDPM on synthetic tasks with exotic data domains. 
In these tasks, we train and sample from DDPM and SS-DDPM using $T = 64$ steps.
We use an MLP with 3 hidden layers of size 512, swish activations and residual connections through hidden layers \citep{he2015deep}.
We use sinusoidal positional time embeddings \citep{vaswani2017attention} of size $32$ and concatenate them with the normalized tail statistics $\tilde{G}_t$.
We add a mapping to the corresponding domain on top of the network.
During training, we use gradient clipping and EMA weights to improve stability. 
All models on synthetic data were trained for 350k iterations with batch size 128.
In all our experiments with SS-DDPM on synthetic data we use time-dependent tail normalization.
We use DDPM with linear schedule and $L_{simple}$ or $L_{vlb}$ as a loss function.
We choose between $L_{simple}$ and $L_{vlb}$ objective based on the KL divergence between the data distribution and the model distribution $\KL{q(x_0)}{p_\theta(x_0)}$.
To make an honest comparison, we precompute normalization statistics for DDPM in the same way that we do in time-dependent tail normalization.
In DDPM, a neural network makes predictions using $x_t$ as an input, so we precompute normalization statistics for $x_t$ and fix them during the training and sampling stages.

\paragraph{Probabilistic simplex}
We evaluate Dirichlet SS-DDPM on a synthetic problem of generating objects on a three-dimensional probabilistic simplex.
We use a mixture of three Dirichlet distributions with different parameters as training data.
The Dirichlet SS-DDPM forward process is illustrated in Figure~\ref{fig:dirichlet-forward}. 
To map the predictions to the domain, we put the \texttt{Softmax} function on the top of the MLP.
We optimize Dirichlet SS-DDPM on the VLB objective without any modifications using Adam with a learning rate of $0.0004$.
The DDPM was trained on $L_{vlb}$ using Adam with a learning rate of $0.0002$. 
An illustration of the samples is presented in Table~\ref{tab:simplex-comparison}.

\begin{table*}[h!]
\begin{center}
    \caption{Generating objects from three-dimensional probabilistic simplex.}
    \label{tab:simplex-comparison}
    \begin{tabular}{ccccc} 
    \toprule
    &
    \makecell[c]{Original}
    &
    \makecell[c]{DDPM}
    &
    \makecell[c]{Dirichlet SS-DDPM}
    \\ \midrule
    &
    \makecell[c]{\vspace{-0.5em}\includegraphics[width=30mm, height=30mm]{images/DirichletSS_Data.pdf}}
    &
    \makecell[c]{\vspace{-0.5em}\includegraphics[width=30mm, height=30mm]{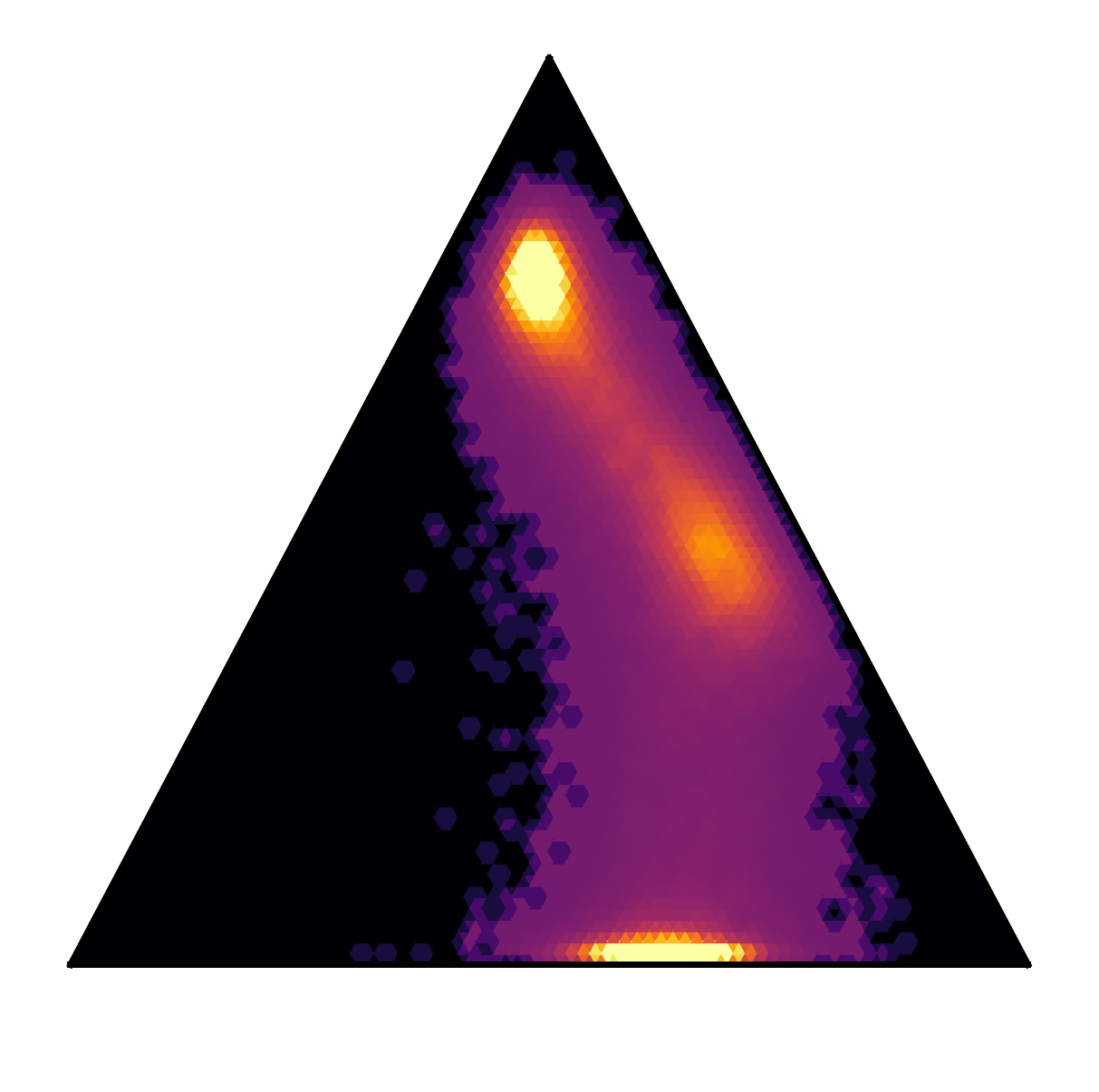}}
    &
    \makecell[c]{\vspace{-0.5em}\includegraphics[width=30mm, height=30mm]{images/DirichletSS_Model.pdf}}
    \\ \bottomrule
    \end{tabular}
\end{center}
\end{table*}

\paragraph{Symmetric positive definite matrices}
We evaluate Wishart SS-DDPM on a synthetic problem of generating symmetric positive definite matrices of size $2 \times 2$.
We use a mixture of three Wishart distributions with different parameters as training data.
The Wishart SS-DDPM forward processes are illustrated in Figure~\ref{fig:wishart-forward}.
For the case of symmetric positive definite matrices $V$, MLP predicts a lower triangular factor $L_{\theta}$ from the Cholesky decomposition $V_{\theta} = L_{\theta} L_{\theta}^\transpose$.
For stability of sampling from the Wishart distribution and estimation of the loss function, we add a scalar matrix $10^{-4}\mathbf{I}$ to the predicted symmetric positive definite matrix $V_{\theta}$.
In Wishart SS-DDPM we use an analogue of $L_{simple}$ as a loss function. 
To improve the training stability, we divide each KL term corresponding to a timestamp $t$ by $n_t$, the de-facto concentration parameter from the used noising schedule (see Table~\ref{tab:different-families}).
Wishart SS-DDPM was trained using Adam with a learning rate of $0.0004$.
The DDPM was trained on $L_{simple}$ using Adam with a learning rate of $0.0004$.
Since positive definite $2\times 2$ matrices can be interpreted as ellipses, we visualize the samples by drawing the corresponding ellipses. An illustration of the samples is presented in Table~\ref{tab:pdm2x2-comparison}.

\begin{table*}[h!]
\begin{center}
    \caption{Generating symmetric positive definite $2 \times 2$ matrices.}
    \label{tab:pdm2x2-comparison}
    \begin{tabular}{ccccc} 
    \toprule
    &
    \makecell[c]{Original}
    &
    \makecell[c]{DDPM}
    &
    \makecell[c]{Wishart SS-DDPM}
    \\ \midrule
    &
    \makecell[c]{\vspace{-0.5em}\includegraphics[width=30mm, height=30mm]{images/WishartSS_Data.pdf}}
    &
    \makecell[c]{\vspace{-0.5em}\includegraphics[width=30mm, height=30mm]{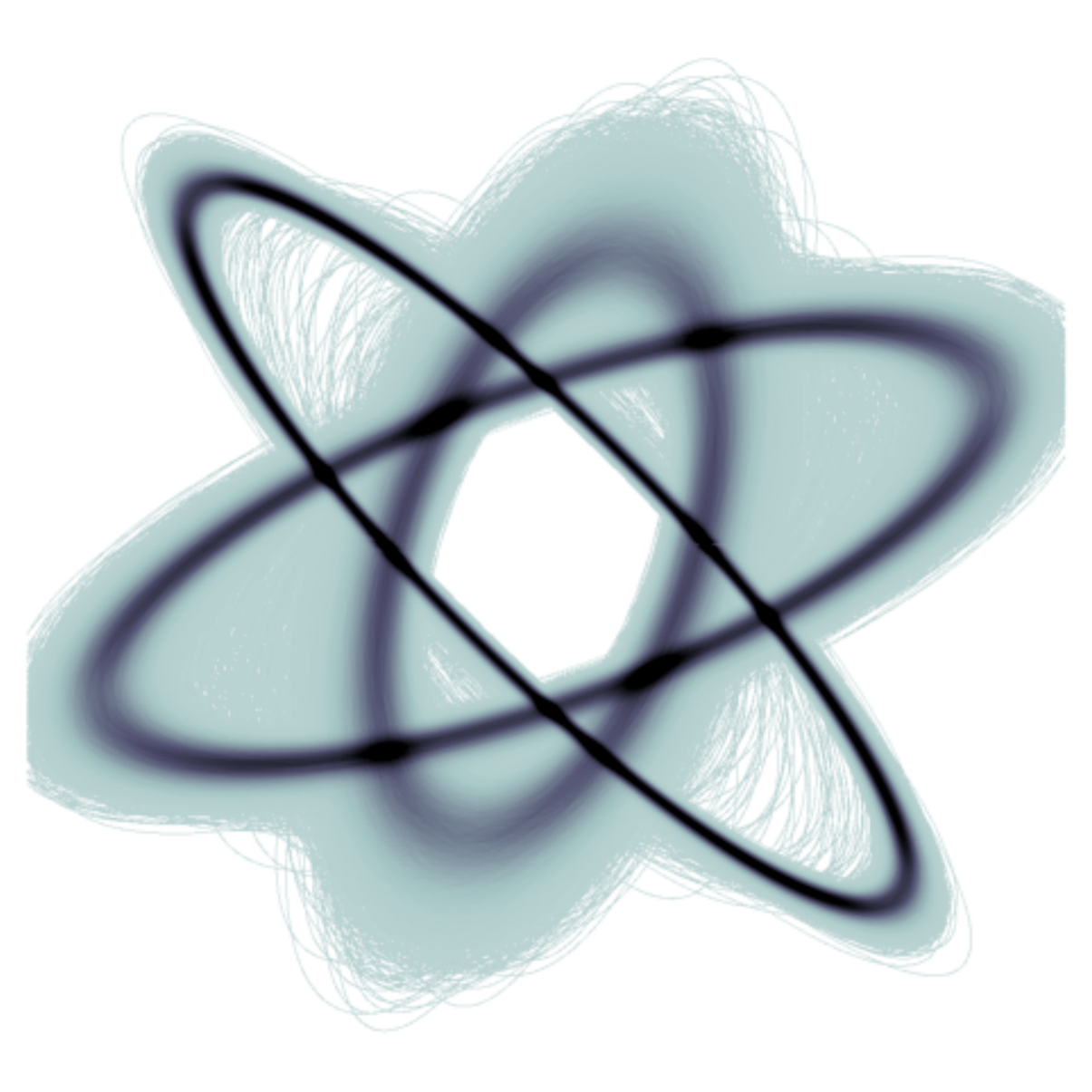}}
    &
    \makecell[c]{\vspace{-0.5em}\includegraphics[width=30mm, height=30mm]{images/WishartSS_Model.pdf}}
    \\ \bottomrule
    \end{tabular}
\end{center}
\end{table*}

\section{Geodesic data}
\label{app:geodesic-exp}

We evaluate von Mises--Fisher SS-DDPM on the problem of restoring the distribution on the sphere from empirical data of fires on the Earth's surface \citep{fire2020dataset}.
We illustrate points on the sphere using a 2D projection of the Earth's map.
We train and sample from von Mises--Fisher SS-DDPM using $T=100$ steps.
The forward process is illustrated in Figure~\ref{fig:vmf-forward}. 
We use the same MLP architecture as described in Appendix~\ref{app:synthetic-exp} and also use time-dependent tail normalization. 
To map the prediction onto the unit sphere, we normalize the three-dimensional output of the MLP. 
We optimize $L_{vlb}$ using the AdamW optimizer with a learning rate of 0.0002 and exponential decay with $\gamma = 0.999997$. 
The model is trained for $2,000,000$ iterations with batch size $100$. 
For inference, we also use EMA weights with a decay of $0.9999$.

\section{Discrete data}
\label{app:text-exp}
We evaluate Categorical SS-DDPM on the task of unconditional character-level text generation on the \texttt{text8} dataset. We evaluate our model on sequences of length 256.
Using the property of Categorical SS-DDPM, that we can directly compute mutual information between $x_0$ and $G_t$ (see Appendix~\ref{app:categorical}), we match the noising schedule of $G_t$ to the noising schedule of $x_t$ by \citet{austin2021structured}.
We optimize Categorical SS-DDPM on the VLB objective.
Following \citet{austin2021structured}, we use the default T5 encoder architecture with $12$ layers, $12$ heads, mlp dim $3072$ and qkv dim $768$.
We add positional embeddings to the sequence of tail statistics $G_t$ and also add the sinusoidal positional time embedding to the beginning.
We use Adam \citep{adam} with learning rate $5 \times 10^{-4}$ with a $10,000$-step learning rate warmup, but instead of inverse sqrt decay, we use exponential decay with $\gamma = 0.999995$.
We use a standard $90,000,000/5,000,000/500,000$ train-test-validation split and train neural network for $512$ epochs (time costs when using $3$ NVIDIA A100 GPUs: training took approx. $112$ hours and estimating NLL on the test set took approx. $2.5$ hours).
Since the \texttt{Softmax} function does not break the sufficiency of tail statistic in Categorical SS-DDPMs (see Appendix~\ref{app:categorical}), we can use the \texttt{Softmax} function as an efficient input normalization for the neural network.
Categorical SS-DDPM also provides us with a convenient way to define $\log p(x_0|G_1)$, and we make use of it to estimate the NLL score (see Appendix~\ref{app:categorical}).

\section{Image data}
\label{app:image-exp}
In our experiments with Beta SS-DDPM, we use the NCSN++ neural network architecture and train strategy by \citet{song2020score} (time costs when using $4$ NVIDIA 1080 GPUs: training took approx. $96$ hours, sampling of $50,000$ images took approx. $10$ hours).
We put a sigmoid function on the top of NCSN++ to map the predictions to the data domain.
We use time-dependent tail normalization and train Beta SS-DDPM with $T = 1000$.
We matched the noise schedule in Beta SS-DDPM to the cosine noise schedule in Gaussian DDPM \citep{nichol2021improved} in terms of mutual information, as described in Appendix~\ref{app:noise-schedule}.

\section{Changing discretization}
\label{app:change-discretization}
When sampling from DDPMs, we can skip some timestamps to trade off computations for the quality of the generated samples.
For example, we can generate $x_{t_1}$ from $x_{t_2}$ in one step, without generating the intermediate variables $x_{t_1+1:t_2-1}$:
\begin{equation}
    \tilde{p}_\theta^\DDPM(x_{t_1}|x_{t_2})=q^\DDPM(x_{t_1}|x_{t_2}, x_0)|_{x_0=x_\theta^\DDPM(x_{t_2}, t_2)}=
\end{equation}
\begin{equation}
\!\!\!\!=\Normal{
    x_{t_1};
    \frac
        {\sqrt{\al_{t_1}^\DDPM}\left(1-\frac{\al_{t_2}^\DDPM}{\al_{t_1}^\DDPM}\right)}
        {\omal_{t_2}^\DDPM}
    x_\theta^\DDPM(x_{t_2}, t_2)
    +
    \frac
        {\sqrt{\frac{\al_{t_2}^\DDPM}{\al_{t_1}^\DDPM}}\left(\omal_{t_1}^\DDPM\right)}
        {\omal_{t_2}^\DDPM}
    x_{t_2},
    \frac{\omal_{t_1}^\DDPM}{\omal_{t_2}^\DDPM}\left(1-\frac{\al_{t_2}^\DDPM}{\al_{t_1}^\DDPM}\right)
}=
\end{equation}
\begin{equation}
=\Normal{
    x_{t_1};
    \textcolor{Green}{\frac
        {\al_{t_1}^\DDPM-\al_{t_2}^\DDPM}
        {\sqrt{\al_{t_1}^\DDPM}(\omal_{t_2}^\DDPM)}
    x_\theta^\DDPM(x_{t_2}, t_2)}
    +
    \textcolor{Blue}{\frac
        {\sqrt{\al_{t_2}^\DDPM}(\omal_{t_1}^\DDPM)}
        {\sqrt{\al_{t_1}^\DDPM}(\omal_{t_2}^\DDPM)}
    x_{t_2}},
    \textcolor{Red}{\frac
        {(\omal_{t_1}^\DDPM)(\al_{t_1}^\DDPM-\al_{t_2}^\DDPM)}
        {(\omal_{t_2}^\DDPM)\al_{t_1}^\DDPM}}
}
\end{equation}
In the general case of SS-DDPM, we can't just skip the variables.
If we skip the variables, the corresponding tail statistics will become atypical and the generative process will fail.
To keep the tail statistics adequate, we can sample all the intermediate variables $x_t$ but do it in a way that doesn't use additional function evaluations:
\begin{align}
G_{t_1}&=\sum_{s=t_1}^{t_2-1}A_s^\transpose \mathcal{T}(x_s)+G_{t_2},\text{ where}\\
x_s&\sim q(x_s|x_0)|_{x_0=x_\theta^\SS(G_{t_2}, t_2)}
\end{align}
In case of Gaussian SS-DDPM this trick is equivalent to skipping the variables in DDPM:
\begin{equation}
G_{t_1}=\frac{1-\al_{t_1}^\DDPM}{\sqrt{\al_{t_1}^\DDPM}}\sum_{s=t_1}^{t_2-1}\frac{\sqrt{\al_{s}^\SS}x_s}{\omal_{s}^\SS}+\frac{1-\al_{t_1}^\DDPM}{\sqrt{\al_{t_1}^\DDPM}}\frac{\sqrt{\al_{t_2}^\DDPM}}{1-\alpha_{t_2}^\DDPM}G_{t_2}=
\end{equation}
\begin{equation}
=
\frac{1-\al_{t_1}^\DDPM}{\sqrt{\al_{t_1}^\DDPM}}\sum_{s=t_1}^{t_2-1}\frac{\al_{s}^\SS}{\omal_{s}^\SS} x\tSS(G_{t_2}, t_2)
+
\frac{1-\al_{t_1}^\DDPM}{\sqrt{\al_{t_1}^\DDPM}}\sum_{s=t_1}^{t_2-1}\sqrt{\frac{\al_{s}^\SS}{\omal_{s}^\SS}}\epsilon_s
+
    \frac
        {\sqrt{\al_{t_2}^\DDPM}(\omal_{t_1}^\DDPM)}
        {\sqrt{\al_{t_1}^\DDPM}(\omal_{t_2}^\DDPM)}
    G_{t_2}
\end{equation}
\begin{equation}
    \mean G_{t_1}=\frac{\omal_{t_1}^\DDPM}{\sqrt{\al_{t_1}^\DDPM}}\left(\frac{\al_{t_1}^\DDPM}{\omal_{t_1}^\DDPM}-\frac{\al_{t_2}^\DDPM}{\omal_{t_2}^\DDPM}\right)x\tSS(G_{t_2}, t_2)+
    \frac
        {\sqrt{\al_{t_2}^\DDPM}(\omal_{t_1}^\DDPM)}
        {\sqrt{\al_{t_1}^\DDPM}(\omal_{t_2}^\DDPM)}
    G_{t_2}=
\end{equation}
\begin{equation}
=\frac{(\omal_{t_1}^\DDPM)(\al_{t_1}^\DDPM-\al_{t_2})}{\sqrt{\al_{t_1}^\DDPM}(\omal_{t_1}^\DDPM)(\omal_{t_2}^\DDPM)}x\tSS(G_{t_2}, t_2)+
    \frac
        {\sqrt{\al_{t_2}^\DDPM}(\omal_{t_1}^\DDPM)}
        {\sqrt{\al_{t_1}^\DDPM}(\omal_{t_2}^\DDPM)}
    G_{t_2}=
\end{equation}
\begin{equation}
=\textcolor{Green}{\frac{(\al_{t_1}^\DDPM-\al_{t_2})}{\sqrt{\al_{t_1}^\DDPM}(\omal_{t_2}^\DDPM)}x\tSS(G_{t_2}, t_2)}+
    \textcolor{Blue}{\frac
        {\sqrt{\al_{t_2}^\DDPM}(\omal_{t_1}^\DDPM)}
        {\sqrt{\al_{t_1}^\DDPM}(\omal_{t_2}^\DDPM)}
    G_{t_2}}
\end{equation}
\begin{equation}
    \mathbb{D}G_{t_1}=
\frac{(\omal_{t_1}^\DDPM)^2}{\al_{t_1}^\DDPM}\sum_{s=t_1}^{t_2-1}\frac{\al_{s}^\SS}{\omal_{s}^\SS}=
\frac{(\omal_{t_1}^\DDPM)^2}{\al_{t_1}^\DDPM}\frac{\al_{t_1}^\DDPM-\al_{t_2}^\DDPM}{(\omal_{t_1}^\DDPM)(\omal_{t_2}^\DDPM)}=
\end{equation}
\begin{equation}
=\textcolor{Red}{\frac{(\omal_{t_1}^\DDPM)(\al_{t_1}^\DDPM-\al_{t_2}^\DDPM)}{(\omal_{t_2}^\DDPM)\al_{t_1}^\DDPM}}
\end{equation}
In general case, this trick can be formalized as the following approximation to the reverse process:
\begin{equation}
    p\tSS(x_{t_1:t_2}|G_{t_2})=
    \prod_{t=t_1}^{t_2} \left.q^\SS(x_t|x_0)\right|_{x_0=x_\theta(\mathcal{G}_t(x_{t:T}), t)}\approx
    \prod_{t=t_1}^{t_2} \left.q^\SS(x_t|x_0)\right|_{x_0=x_\theta(\mathcal{G}_{t_2}(x_{t_2:T}), t_2)}
\end{equation}

\newpage
\begin{table}[h!]
\caption{Examples of SS-DDPM in different families.
There are many different ways to parameterize the distributions and the corresponding schedules.
Further details are discussed in Appendix~\ref{app:gaussian}--\ref{app:wishart}.
}
\label{tab:different-families}
\vskip 0.15in
\begin{center}
\begin{small}
\begin{sc}
\begin{adjustbox}{max width=\textwidth}
\begin{tabular}{llll}
\toprule
\makecell[l]{
  Distribution\\
  $q^\SS(x_t|x_0)$
}
&
\makecell[l]{
  Noising schedule\\
  $ $
  }
&
\makecell[l]{
  Tail statistic\\
  $\mathcal{G}_t(x_{t:T})$
}
&
\makecell[l]{
  KL divergence\\
  $\KL{q^\SS(x_t|x_0)}{p\tSS(x_t|x_{t+1:T})}$
}
\\


\midrule
\makecell[l]{
	Gaussian\\
	$\mathcal{N}\left(x_t; \sqrt{\overline{\alpha}_t}x_0, 1-\overline{\alpha}_t\right)$\\
    $x_t \in \mathbb{R}$
}
& 
\makecell[l]{
    $1\xleftarrow[t\to 0]{}\al_t\xrightarrow[t\to T]{}0$
}
& 
\makecell[l]{
  $\frac{1-\overline{\alpha}'_s}{\sqrt{\overline{\alpha}'_s}}\sum_{s=t}^T\frac{\sqrt{\overline{\alpha}_s}x_s}{1-\overline{\alpha}_s},$\\
  $\text{\normalfont where }\overline{\alpha}'_t=\frac{\sum_{s=t}^T\frac{\overline{\alpha}_s}{1-\overline{\alpha}_s}}{1+\sum_{s=t}^T\frac{\overline{\alpha}_s}{1-\overline{\alpha}_s}}$
}
& 
$\frac{\overline{\alpha}_t(x_\theta-x_0)^2}{2(1-\overline{\alpha}_t)}$
\\
\\

\makecell[l]{
	Beta
	\\
	$\mathrm{Beta}\left(x_t; \alpha_t(x_0), \beta_t(x_0)\right)$\\
    $x_t \in [0, 1]$
}
& 
\makecell[l]{
	$\alpha_t(x_0)=1+\nu_tx_0$\\
	$\beta_t(x_0)=1+\nu_t(1-x_0)$\\
    \\$+\infty\xleftarrow[t\to 0]{}\nu_t\xrightarrow[t\to T]{}0$
}
& 
$\sum_{s=t}^T\nu_s\log\frac{x_s}{1-x_s}$
& 
$
\begin{aligned}
	&\log\frac{\mathrm{Beta}(\alpha_t(x_\theta), \beta_t(x_\theta))}{\mathrm{Beta}(\alpha_t(x_0), \beta_t(x_0))}+\\
	&+\nu_t(x_0-x_\theta)(\psi(\alpha_t(x_0))-\psi(\beta_t(x_0)))
\end{aligned}
$
\\
\\

\makecell[l]{
	Dirichlet \\
	$\mathrm{Dir}\left(x_t; \alpha_t^1(x_0), \dots, \alpha_t^K(x_0)\right)$\\
    $x_t \in [0, 1]^K$\\
    $\sum_{i=1}^Kx_t^k = 1$
}
& 
\makecell[l]{
    $\alpha_t^k(x_0)=1+\nu_tx_0^k$\\
    \\$+\infty\xleftarrow[t\to 0]{}\nu_t\xrightarrow[t\to T]{}0$
}
& 
$\sum_{s=t}^T\nu_s\log x_s$
& 
$
\begin{aligned}
 	&\sum_{k=1}^K\left[\log\frac{\Gamma(\alpha_t^k(x_\theta))}{\Gamma(\alpha_t^k(x_0))}+\right.
    \\
    &+\left.\phantom{\frac\Gamma\Gamma}\nu_t(x_0^k-x_\theta^k)\psi(\alpha_t^k(x_0))\right]
\end{aligned}
$
\\
\\

\makecell[l]{
	Categorical \\
	$\mathrm{Cat}\left(x_t; p_t(x_0)\right)$\\
    $x_t \in \{0, 1\}^D$\\
    $\sum_{i=1}^Dx_t^i = 1$
}
& 
\makecell[l]{
    $p_t(x_0)=x_0\overline{Q}_t$\\
    \\
    $I\xleftarrow[t\to 0]{}\overline{Q}_t\xrightarrow[t\to T]{}\overline{Q}_T$
}
& 
$\sum_{s=t}^T\log\left(\overline{Q}_sx_s^\transpose\right)$
& 
$
	\begin{aligned}
    \sum_{i=1}^{D}(p_t(x_0))_i\log\frac{(p_t(x_0))_i}{(p_t(x_\theta))_i}
	\end{aligned}
$
\\
\\

\makecell[l]{
	von Mises\\
	$\mathrm{vM}\left(x_t; x_0, \kappa_t\right)$\\
    $x_t \in [-\pi, \pi]$\\
}
& 
\makecell[l]{
    $+\infty\xleftarrow[t\to 0]{}\kappa_t\xrightarrow[t\to T]{}0$
}
& 
$\sum_{s=t}^T\kappa_s\begin{pmatrix} \cos{x_s} \\ \sin{x_s} \end{pmatrix}$
& 
$\kappa_t \frac{I_1(\kappa_t)}{I_0(\kappa_t)}(1-\cos(x_0-x_\theta))$
\\
\\

\makecell[l]{
	von Mises--Fisher\\
	$\mathrm{vMF}\left(x_t; x_0, \kappa_t\right)$\\
    $x_t \in [-1, 1]^K$\\
    $\|x_t\| = 1$\\
}
& 
\makecell[l]{
    $+\infty\xleftarrow[t\to 0]{}\kappa_t\xrightarrow[t\to T]{}0$
}
& 
$\sum_{s=t}^T\kappa_sx_s$
& 
$\kappa_t \frac{I_{K/2}(\kappa_t)}{I_{K/2-1}(\kappa_t)}x_0^\transpose (x_0-x_\theta)$
\\
\\

\makecell[l]{
	Gamma\\
	$\Gamma\left(x_t; \alpha_t, \beta_t(x_0)\right)$\\
    $x_t \in (0, +\infty)$\\
}
& 
\makecell[l]{
$
	\begin{aligned}
	\beta_t(x_0)&=\alpha_t(\xi_t+(1-\xi_t)x_0^{-1})
	\end{aligned}
$\\
        \\
$
        \begin{aligned}
        +\infty&\xleftarrow[t\to 0]{}\alpha_t\xrightarrow[t\to T]{}\alpha_T\\
        0&\xleftarrow[t\to 0]{}\xi_t\xrightarrow[t\to T]{}1
    	\end{aligned}
$
}
& 
$\sum_{s=t}^T\alpha_s(1-\xi_s)x_s$
& 
$
\alpha_t\left[\log\frac{\beta_t(x_0)}{\beta_t(x_\theta)}+\frac{\beta_t(x_\theta)}{\beta_t(x_0)}-1\right]
$ 
\\
\\

\makecell[l]{
	Wishart\\
	$\mathcal{W}\left(X_t; n_t, V_t(X_0)\right)$\\
    $X_t\in\mathbb{R}^{p\times p}$\\
    $X_t \succ 0$
}
& 
\makecell[l]{
    $
    	\begin{aligned}
        \mu_t(X_0)&=\xi_tI+(1-\xi_t)X_0^{-1}\\
    	V_t(X_0)&=n_t^{-1}\mu_t^{-1}(X_0)
        \end{aligned}
    $\\\\
    $\begin{aligned}
        +\infty&\xleftarrow[t\to 0]{}n_t\xrightarrow[t\to T]{}n_T\\
        0&\xleftarrow[t\to 0]{}\xi_t\xrightarrow[t\to T]{}1
    	\end{aligned}
    $
}
& 
$\sum_{s=t}^Tn_s(1-\xi_s)X_s$
& 
$
	\begin{aligned}
	-\frac{n_t}2\left[\log\left|V_t^{-1}(X_\theta)V_t(X_0)\right|\right.-\\
 \left.-\mathrm{tr}\left(V_t^{-1}(X_\theta)V_t(X_0)\right)+p\right]
	\end{aligned}
$ 
\\

\bottomrule
\end{tabular}
\end{adjustbox}
\end{sc}
\end{small}
\end{center}
\vskip -0.1in
\end{table}

\end{document}